\documentclass[11pt]{article}

\usepackage[margin=1in]{geometry}
\usepackage{amsmath,amssymb,amsthm,mathtools,bm}
\usepackage{microtype}
\usepackage{hyperref}
\usepackage[notrig]{physics}
\usepackage[nameinlink,capitalize,noabbrev]{cleveref}
\usepackage{booktabs}
\usepackage{array}
\usepackage{tabularx}
\usepackage{float}
\usepackage{enumitem}
\usepackage[round,authoryear]{natbib}
\usepackage{authblk}

\allowdisplaybreaks
\newtheorem{theorem}{Theorem}[section]
\newtheorem{proposition}[theorem]{Proposition}
\newtheorem{lemma}[theorem]{Lemma}

\theoremstyle{definition}
\newtheorem{assumption}[theorem]{Assumption}
\newtheorem{example}[theorem]{Example}
\newtheorem{onlinealgorithm}{Algorithm}
\theoremstyle{remark}
\newtheorem{remark}[theorem]{Remark}

\crefname{assumption}{Assumption}{Assumptions}
\crefname{onlinealgorithm}{Algorithm}{Algorithms}

\DeclareMathOperator*{\argmin}{\arg\!\min}
\newcommand{\ind}{\mathbf 1}
\newcommand{\cE}{\mathcal E}

\newcommand{\cG}{\mathcal G}
\newcommand{\cH}{\mathcal H}
\newcommand{\cK}{\mathcal K}
\newcommand{\cM}{\mathcal M}
\newcommand{\cX}{\mathcal X}
\newcommand{\regret}{\mathsf{Regret}}
\newcommand{\fR}{\mathfrak R}
\newcommand{\sd}{\mathsf d}
\newcommand{\sm}{\mathsf m}
\renewcommand{\sp}{\mathsf p}
\newcommand{\sK}{\mathsf K}
\newcommand{\sP}{\mathsf P}
\newcommand{\E}{\mathbb E}
\newcommand{\bH}{\mathbb H}
\newcommand{\bN}{\mathbb N}
\newcommand{\bP}{\mathbb P}
\newcommand{\hbP}{\widehat{\mathbb{P}}^{\rm pred}}
\newcommand{\hsp}{\widehat{\mathsf{p}}^{\rm pred}}
\newcommand{\N}{\texttt{\textup{N}}}
\newcommand{\KL}{\mathrm{KL}}
\newcommand{\W}{W_2}
\newcommand{\dist}{\mathsf d}
\renewcommand{\d}{\mathrm d}
\newcommand{\e}{\mathrm e}
\newcommand{\R}{\mathbb R}
\newcommand{\fa}{\mathfrak a}
\newcommand{\fs}{\mathfrak s}
\newcommand{\keywords}[1]{\par\medskip\noindent\textbf{Keywords: }#1}
\newcommand{\BT}{\mathfrak B_T}
\newcommand{\Wp}{W_{2,p}}
\newcommand{\ThetaSob}{\Theta_{\fs}(B, R)}
\newcommand{\TV}{\operatorname{TV}}
\newcolumntype{Y}{>{\raggedright\arraybackslash}X}

\newcommand{\genericdel}[4]{%
  \ifcase#3\relax
  \ifx#1.\else#1\fi#4\ifx#2.\else#2\fi\or
  \bigl#1#4\bigr#2\or
  \Bigl#1#4\Bigr#2\or
  \biggl#1#4\biggr#2\or
  \Biggl#1#4\Biggr#2\else
  \left#1#4\right#2\fi
}
\newcommand{\del}[2][-1]{\genericdel(){#1}{#2}}
\newcommand{\cbr}[2][-1]{\genericdel\{\}{#1}{#2}}
\newcommand{\sbr}[2][-1]{\genericdel[]{#1}{#2}}

 \let\abs\Abs
 \let\norm\Norm
 
\newcommand{\ceil}[2][-1]{\genericdel\lceil\rceil{#1}{#2}}

\newcommand{\opnorm}[2][0]{\norm[{#1}]{{#2}}_{\textup{op}}}
\newcommand{\inn}[2]{\left\langle {#1}, {#2} \right\rangle}

\title{Fast rates in Bayesian online learning with approximate posteriors}
\author{Ilsang Ohn}
\affil{Department of Statistics, Inha University}
\date{}

\begin{document}
\maketitle

\begin{abstract}
Exact Bayes prediction enjoys fast predictive regret guarantees, but exact posterior updating or representation may be too costly for online use. We study when these statistical guarantees are preserved by computational approximations. We show that the cumulative price of posterior approximation can be governed by the interaction between the contraction radius of the exact Gibbs posterior and the Wasserstein distance between the approximate and exact posteriors. Our general theorem shows that whenever exact Bayes prediction achieves a fast regret bound, any approximate posterior method that tracks the exact posterior with sufficient accuracy inherits the same fast regret, up to an additive term determined by the approximation error. Three online learning examples are developed. For linear models with strongly convex regularized losses, a projected Langevin algorithm yields an approximate posterior that achieves logarithmic regret. For an infinite-dimensional canonical exponential family sequence model over a Sobolev ellipsoid, a prior-preserving truncation method attains the minimax predictive regret rate with sublinear memory and constant update cost per observation. For random-design Gaussian process (GP) regression, a sparse variational posterior with inducing variables achieves the same predictive regret rate as the exact GP, but at substantially lower computational cost.
\end{abstract}

\keywords{Bayesian online learning; Gibbs posterior; Langevin algorithm; sparse variational Gaussian processes; exponential families; Sobolev ellipsoids}

\section{Introduction}\label{sec:introduction}

Bayesian online prediction maintains a distribution over parameters and uses it as the mixing law for the next prediction: after $t$ observations, the posterior $\Pi_t$ induces the predictive mixture for round $t+1$. Under logarithmic loss such as negative log-likelihood, this construction has an exceptional cumulative structure: the one-step predictive normalizers telescope into a marginal likelihood, so that cumulative predictive regret can be analyzed through a single integrated likelihood ratio rather than by controlling the prediction error separately at every round.  In regular parametric models, this connection leads to the classical logarithmic redundancy and regret behavior of Bayes mixtures. The information-theoretic analysis of  \citet{clarke1990bayes} establishes the asymptotic redundancy of Bayesian mixtures in regular parametric families, while \citep{xie2000minimax} develop closely related asymptotic minimax regret results for universal coding, gambling, and sequential prediction. 

The same phenomenon has a natural interpretation from online learning. Under log loss, Bayesian updating is an instance of exponential weighting, and logarithmic loss is mixable, so a mixture predictor can compete sharply with a fixed expert or parameter chosen in hindsight \citep{vovk1990aggregating,cesabianchi2006prediction}. \citet{kakade2004online} derive online bounds for Bayesian algorithms and show how Bayesian predictors can be analyzed as online learning procedures without stochastic assumptions on the outcome sequence. More generally, exponential-weights methods admit regret guarantees under a variety of mixable and exp-concave losses, with fast rates under suitable curvature or stochastic conditions \citep{hazan2007logarithmic,vanerven2015fast,vanderhoeven2018manyfaces,jun2026adaptive}. The resulting viewpoint is broader than ordinary Bayes: posterior-like exponential weights can be formed from generic losses, and the same variational structure underlying Bayesian marginal likelihoods can be used to compare a mixture predictor with a fixed parameter or comparator distribution.
 
The above literature, however, primarily analyzes the statistical regret of the exact updating or weighting rule. The distribution used for prediction is assumed to be available at every round, and the computational error incurred in representing, sampling from, or optimizing that distribution is not priced separately. This distinction is immaterial when the posterior or exponential weights can be computed exactly, but becomes important in modern Bayesian models where each online update is itself approximate. Posterior approximation, which is statistically adequate at a fixed terminal sample size, need not be adequate for online prediction, because the approximation is queried repeatedly along the entire posterior path. Consequently, terminal contraction of an approximate posterior toward the truth does not by itself control the cumulative predictive loss. What is needed is a way to quantify how perturbations of the posterior at each round are translated into perturbations of the predictive mixture, and how this sensitivity changes as the exact posterior concentrates. Because of this challenge, to our knowledge, there is no general result that converts posterior approximation error into a cumulative predictive regret penalty while retaining the fast rates of exact Bayesian prediction. Aside from the specific setting of variational Bayes with a Gaussian variational family \citep{cherief2019onlinevi}, the known results only provide slow regret bounds of the order $O(\sqrt{T})$ for horizon $T$ \citep[e.g.,][]{cherief2019onlinevi,alquier2021non}.

This raises the question addressed in this paper: do the fast rates of Bayesian online prediction survive computation? Concretely, is there an implementable online learner, with explicit per-round operations, whose expected cumulative predictive regret is logarithmic in regular parametric models and minimax optimal in nonparametric ones? 
 
We answer this question affirmatively. We establish a theoretical framework to derive fast predictive regret guarantees for general computational Bayesian online learning methods. Let $\Pi_t$ be the exact Gibbs or standard posterior after $t$ observations and let $Q_t$ be an approximate posterior distribution attained by a computationally feasible algorithm. We derive an upper bound on the expected difference between the predictive regrets of $\Pi_t$ and $Q_t$, in terms of $\epsilon_t$, the contraction rate of $\Pi_t$ toward a common stochastic target, and $\alpha_t$, the Wasserstein distance between $Q_t$ and $\Pi_t$.  The novelty of our result lies in showing that posterior approximation need not be controlled at a fixed first-order accuracy throughout the online path. We devise a second-order predictive stability condition, under which the effect of approximation error is automatically attenuated by posterior contraction, so the cumulative approximation cost scales with the interaction $\epsilon_t\alpha_t$, rather than with $\alpha_t$ alone.  

We instantiate this principle in three regimes. The first is a finite-dimensional generalized Bayesian model with predictable bounded covariates and a globally strongly convex regularized loss. Strong convexity gives $O(t^{-1/2})$ contraction of the exact Gibbs posterior, while local prior mass gives $O(\log T)$ expected regret for exact prediction. We approximate each constrained posterior with a projected Moreau--Yosida Langevin construction. We derive a squared Wasserstein tracking error of order $t^{-1}$ on the compact parameter space. This results in an $O(\log T)$ cumulative approximation penalty, preserving the logarithmic rate of exact Gibbs prediction. 

The second regime is an infinite-dimensional canonical exponential family sequence model. Here the computational constraint is representation rather than finite-dimensional sampling. The exact posterior is a countable product of one-dimensional conjugate posteriors. A streaming implementation may retain the sufficient statistics of every coordinate observed so far, but its memory can grow linearly with the horizon. We instead update only the first $m$ coordinates and preserve the original prior on the unresolved tail. This construction approximates one fixed infinite-dimensional Bayesian model rather than replacing it with a sequence of truncated-prior models. Over a Sobolev ellipsoid, the contraction radius, exact Gibbs regret, and truncation error are all explicit. Choosing $m$ at the effective-dimension scale yields the minimax cumulative predictive regret rate with sublinear memory and constant state-update work per observation.

The third regime is random-design Gaussian process (GP) regression. Here the statistical question is whether a sparse variational representation preserves the predictive performance of the full GP posterior. We fix the first $J$ population covariance operator eigenfunctions as interdomain inducing variables. Under Gaussian noise, the resulting sparse variational posterior is represented by finite-dimensional sufficient statistics and can be updated recursively. When $J$ is greater than a certain sublinear threshold in $T$, its Wasserstein distance from the exact posterior is of the same order as the exact posterior contraction radius, and its cumulative predictive regret has the same nonparametric order as exact Bayes. This example also shows that vanishing variational KL divergence is not necessary for preserving statistical performance: the relevant discrepancy is its displacement relative to the shrinking posterior geometry.

The three applications make different computational choices. In the finite-dimensional example, approximation arises from incomplete sampling; in the sequence model, it arises from truncating an infinite representation; and in the GP example, it arises from variational sparse compression. Despite these different mechanisms, all three are analyzed through the same pair of quantities: the statistical radius of exact Bayes and the distance of the computational approximation from exact Bayes. The resulting framework makes the required computational accuracy explicit and exposes where model-specific work is unavoidable.

\subsection{Related work}
 
\paragraph{Streaming and approximate Bayesian inference}
A large algorithmic literature develops tractable approximations to sequential Bayesian updating. Examples include streaming variational Bayes \citep{broderick2013streaming}, sequential Monte Carlo \citep{delmoral2006smc}, particle mirror descent \citep{dai2016particle}, online variational inference with generalization guarantees \citep{cherief2019onlinevi}, recursive optimization-based Bayesian rules \citep{khan2023bayesian, jones2024bong}, online sampling from log-concave sequences \citep{lee2019online}, and particle-based approximations of moving posteriors \citep{yang2023particle}. \citet{cherief2019onlinevi,alquier2021non} establish generalization guarantees for several online variational procedures, providing one of the closest theoretical connections between approximate Bayesian computation and online learning. By contrast, \citep{yang2023particle} study Wasserstein tracking of a dynamically changing posterior through a distributional dynamic-regret analysis.

Recent work has also begun to characterize the frequentist accuracy of sequentially approximated posteriors themselves. \citet{lee2026online} stablish an online Bernstein--von Mises theorem for sequential variational updating with mini-batches, showing that accumulated approximation error can become asymptotically negligible and that the resulting terminal posterior can be asymptotically equivalent to the full posterior, while \citet{lee2026bayesian} treat the more stringent one-pass regime and construct a warm-started Bayesian online procedure that attains an optimal convergence rate and a Bernstein--von Mises limit without diverging mini-batch sizes. These results concern the inferential fidelity of the sequentially constructed posterior, especially at the end of the data stream. In contrast, our criterion is prequential: approximation error is charged whenever the approximate posterior is used for prediction, and hence must be controlled along the entire posterior path rather than only at the terminal sample size.

\paragraph{Statistical theory for approximate posteriors}

A complementary literature studies whether computational approximations retain the frequentist concentration properties of exact Bayes. Variational and tempered posteriors have been shown to attain optimal or near-optimal contraction rates in a range of parametric, high-dimensional, and nonparametric models \citep{alquier2016properties,alquier2020concentration,zhang2020convergence,ohn2024adaptive}. Such results typically compare an approximate posterior directly with the true parameter or data-generating distribution. This comparison is not sufficient for cumulative prediction: two posterior sequences can contract toward the same target at the same statistical rate while remaining separated enough to generate different predictive mixtures at every round. Our analysis therefore keeps two distinct quantities of the statistical radius of the exact posterior and the computational distance between the approximate and exact posteriors. This separation is what allows a posterior approximation guarantee to be converted into a fast predictive regret guarantee.
 
\paragraph{Constrained log-concave sampling}
Nonasymptotic sampling theory provides explicit distributional approximation guarantees for log-concave targets. For smooth log-concave distributions on Euclidean space, unadjusted Langevin algorithms have quantitative convergence guarantees \cite{dalalyan2017theoretical, durmus2017nonasymptotic}. For compactly supported targets, projected Langevin Monte Carlo \citep{bubeck2018projected}  and Moreau--Yosida regularized Langevin methods \citep{brosse2017compact}  provide tractable schemes with total-variation or Wasserstein guarantees with explicit schedules, while proximal samplers achieve sharper complexity under alternative oracle assumptions \citep{lee2021structured}.  Online sampling from  sequence of log-concave distributions has also been studied directly \citep{lee2019online}.
 
\paragraph{Computational-statistical tradeoffs and low-rank posteriors.}
The principle that computation may be deliberately stopped or compressed once its error falls below statistical resolution appears in several batch problems. Examples include convex-relaxation hierarchies \citep{chandrasekaran2013computational}, Nystr\"om subsampling for kernel regression \citep{rudi2015less}, and early stopping for nonparametric regression \citep{raskutti2014early},
Low-rank approximation of Gaussian inverse problems similarly retains directions that are most informative relative to the prior  \citep{spantini2015optimal}, while finite-dimensional computation in infinite-dimensional Bayesian models is often imposed through sieve or truncated priors \citep{zhao2000bayesian}. Our results give a sequential Bayesian counterpart to this computational--statistical principle. In particular, in the sequence model approximation in \cref{sec:sequence}, the Bayesian model remains infinite-dimensional, whereas the online algorithm updates only a finite number of coordinates, leaving the unresolved tail at its original prior.
 
\paragraph{Sparse variational Gaussian processes.}
Sparse variational GP regression represents a process posterior through finitely many inducing variables \citep{titsias2009variational,matthews2016sparse}. Their approximation accuracy and the required growth of the inducing budget have been studied extensively. \citet{burt2019sparsegp,burt2020convergence} derive quantitative KL approximation guarantees and sufficient inducing dimensions, while \citet{nieman2022contraction,nieman2023uncertainty} establish posterior contraction and uncertainty quantification results for several inducing schemes, including population-spectral constructions. More recent work develops adaptive sparse variational procedures that retain minimax contraction behavior under hyperparameter selection \citep{nieman2025adaptive}. On the algorithmic side, variational Fourier features and streaming sparse GP methods provide scalable spectral or sequential representations \citep{hensman2018variational,bui2017streaming}. Our concern is different from contraction of the sparse posterior around the truth. We compare the sparse and exact predictive mixtures over the entire online sequence and ask how large the inducing representation must be to preserve the cumulative predictive regret order of full GP Bayes.
 
\subsection{Contributions and organization}
 
The paper makes four contributions. First, \cref{sec:regret} establishes a general predictive comparison theorem. It decomposes the expected regret of an approximate posterior into the regret of the exact Gibbs posterior and a posterior approximation penalty. Second, \cref{sec:langevin} proves a logarithmic expected regret for an implementable projected Moreau--Yosida Langevin online learner over globally log-concave linear-predictor Gibbs models. Third, \cref{sec:sequence} proves that a prior-preserving truncated Bayesian algorithm attains the minimax predictive regret rate for an exponential family sequence model with sublinear memory and constant-time updates. Fourth, \cref{sec:gp} develops an online population-spectral sparse variational GP and proves that an effective-dimension representation preserves the nonparametric predictive regret order of exact Bayes. The analysis identifies posterior uncertainty, rather than a vanishing variational objective gap, as the statistical scale that determines the required inducing dimension. \cref{sec:discussion} concludes the paper and discusses limitations. Detailed proofs are collected in the appendices.

\subsection{Notation}

For a set $A$, let $\ind_A(\cdot)$ denote the indicator function of $A$ such that $\ind_A(x)=1$ if $x\in A$ and $\ind_A(x)=0$ otherwise. For a vector $x$, let $\|x\|$ denote its $L^2$ norm. We write $\operatorname{diam}(A):=\sup_{x,y\in A}\|x-y\|$ for a subset $A$ of $\R^d$.
We write $a\lesssim b$ or $b\gtrsim a$ if there is a constant $C>0$, not relevant to the main argument, such that $a\le Cb$. We write $a\asymp b$ if both $a\gtrsim b$ and $a\lesssim b$ hold.
For two probability measures $P$ and $Q$, $\KL(P\|Q)$ denotes the Kullback--Leibler (KL) divergence defined as $\KL(P\|Q):=\int\log(\d P/\d Q)\d P$ if $P\ll Q$, and $\KL(P\|Q):=\infty$ otherwise.

\section{Main results}\label{sec:regret}

\subsection{Statistical setup}

We work on a filtered probability space. At round $t\in\bN$, we observe a pair of the input or context variable $X_t$ and the response $Y_t$.  The sigma-field $\cG_{t-1}$ contains history before the round-$t$. In particular it may contain predictable side information such as the current covariate or context, as well as all computational randomness generated before round $t$. We allow the round $t$ loss map
    \begin{align*}
    (\theta,(x,y))\mapsto \ell_t(\theta,(x,y))
    \end{align*}
to be $\cG_{t-1}$-measurable as a random function of $(\theta,(x,y))$. Thus fixed-design, predictable-design, independent non-identically distributed, and i.i.d. models are all special cases. We do not consider adversarial outcomes; the results below require a common stochastic target through conditional moment and curvature conditions that are verified in the applications.

Let $(\Theta,\dist)$ be a Polish metric space. We fix a prior $\Pi_0$ and inverse temperature $\eta>0$. The exact Gibbs posterior is
     \begin{equation}\label{eq:gibbs}
     \Pi_t(\d\theta)
     =\frac{\e^{-\eta L_t(\theta)}\Pi_0(\d\theta)}{\int \e^{-\eta L_t(u)}\Pi_0(\d u)},
     \end{equation}
where we write
    \begin{align*}
    L_t(\theta):=\sum_{s=1}^t\ell_s(\theta; (X_s, Y_s)).
    \end{align*}
For a probability measure $Q$ on $\Theta$, define the round-$t$ \textit{predictive loss} by
     \begin{equation}\label{eq:mixloss}
      \sm_{t}(Q; (x,y))
      :=-\frac1\eta\log\int \e^{-\eta\ell_t(\theta; (x,y))}Q(\d\theta).
     \end{equation}
When $\eta=1$ and the loss function is the negative log density of $Y_t$ conditional on $X_t$ and $\cG_{t-1}$, this is ordinary Bayesian predictive log loss. Let $\theta^\star\in\Theta$ be a fixed stochastic target. A computational procedure maintains a random probability measure $Q_t$ that is measurable with respect to the information available after round $t$ and before $t+1$. We assume throughout that every Gibbs normalizing constant and predictive loss integral displayed above is almost surely finite and strictly positive.

We set $Q_0=\Pi_0$ and define the \textit{predictive regret} of a sequence of online distributions $(Q_t)_{t=0}^{T-1}:=(Q_0,Q_1,\dots, Q_{T-1})$ as
     \begin{equation}\label{eq:regret}
     \regret((Q_t)_{t=0}^{T-1})
     :=\sum_{t=1}^T\sbr{\sm_{t}(Q_{t-1}; (X_t, Y_t))-\ell_t(\theta^\star;(X_t, Y_t))}
     \end{equation}
and denote its expectation as
    \begin{align*}
      \fR\del[1]{(Q_t)_{t=0}^{T-1}}
      :=\E\sbr{\regret((Q_t)_{t=0}^{T-1})}
    \end{align*}
Let $\W$ denote the quadratic Wasserstein distance induced by $\dist$, and define
    \begin{equation}\label{eq:eps-a}
    \varepsilon_t^2:=\E [\W^2(\Pi_t,\delta_{\theta^\star})], \qquad
    \alpha_t^2:=\E [\W^2(Q_t,\Pi_t)].
    \end{equation}
The two radii in the above display deliberately measure different objects. The statistical radius $\varepsilon_t$ is a property of the exact posterior and the data-generating process. The computational radius $\alpha_t$ is a property of the approximation algorithm relative to the exact update. Keeping them separate will allow us to reuse the same statistical analysis for computational procedures with different error mechanisms.

\subsection{Regret of exact Gibbs posteriors}

As we mentioned in the introduction, the exact Gibbs posterior admits a telescoping identity, which can lead to fast regret rates. To be concrete, let 
     \begin{align*}
     Z_t^\Pi:=\int \e^{-\eta \sum_{s=1}^t\ell_s(\theta; (X_s, Y_s))}\Pi_0(\d\theta).
     \end{align*}
Then the Gibbs recursion gives
    \begin{align*}
    \int \e^{-\eta\ell_t(\theta; (X_t, Y_t))}\Pi_{t-1}(\d\theta)
    =\frac{Z_t^\Pi}{Z_{t-1}^\Pi}.
    \end{align*}
Thus the predictive losses telescope as
    \begin{align*}
    \sum_{t=1}^T\sm_{t}(\Pi_{t-1}; (X_t, Y_t))
    = -\frac1\eta\sum_{t=1}^T\{\log Z_{t}^\Pi-\log Z_{t-1}^\Pi\}
    =-\frac1\eta\log Z_{T}^\Pi.
    \end{align*}
Subtracting $\sum_t\ell_t(\theta^\star; (X_t, Y_t)$ yields 
     \begin{equation}\label{eq:telescoping}
    \regret\del{(\Pi_t)_{t=0}^{T-1}}
    =-\frac1\eta
    \log\int\exp\sbr[3]{-\eta\sum_{t=1}^T\{\ell_t(\theta; (X_t, Y_t)-\ell_t(\theta^\star; (X_t, Y_t)\}}\Pi_0(\d\theta).
    \end{equation}
To bound the expectation of the right-hand side, the following proposition can be used.

\begin{proposition}[Gibbs regret bound]
\label[proposition]{prop:gibbs-benchmark}
%For every deterministic probability measure $\rho\ll\Pi_0$,
%\begin{equation}\label{eq:variational-benchmark}
%\fR\del[1]{(\Pi_t)_{t=0}^{T-1}}\le
%\sum_{t=1}^T\int\E\{\ell_t(\theta,Z_t)-\ell_t(\theta^\star,Z_t)\}\rho(\d\theta)
%+\frac1\eta\KL(\rho\|\Pi_0).
%\end{equation}
Define
\begin{equation}\label{eq:benchmark-def}
\BT
:= \inf_{\rho\ll\Pi_0}
\left[
\sum_{t=1}^T
\int\E[\ell_t(\theta; (X_t, Y_t))-\ell_t(\theta^\star; (X_t, Y_t))]\rho(\d\theta)
+\frac1\eta\KL(\rho\|\Pi_0)
\right].
\end{equation}
Then $\fR\del[1]{(\Pi_t)_{t=0}^{T-1}}\le\BT$.
\end{proposition}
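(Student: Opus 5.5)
The plan is to start from the telescoping identity \eqref{eq:telescoping} and apply the Donsker--Varadhan (Gibbs) variational formula to the log-integral on its right-hand side. Write $D_T(\theta):=\sum_{t=1}^T\{\ell_t(\theta;(X_t,Y_t))-\ell_t(\theta^\star;(X_t,Y_t))\}$. For any probability measure $\rho\ll\Pi_0$ the formula gives
\begin{align*}
-\frac1\eta\log\int \e^{-\eta D_T(\theta)}\Pi_0(\d\theta)
=\inf_{\nu\ll\Pi_0}\Bigl\{\int D_T\,\d\nu+\frac1\eta\KL(\nu\|\Pi_0)\Bigr\}
\le \int D_T(\theta)\,\rho(\d\theta)+\frac1\eta\KL(\rho\|\Pi_0).
\end{align*}
Only the inequality is needed. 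It follows from Jensen's inequality applied to $\log\int \e^{-\eta D_T}\,\d\Pi_0\ge \log\int_{\{\d\rho/\d\Pi_0>0\}}\e^{-\eta D_T}(\d\Pi_0/\d\rho)\,\d\rho\ge -\eta\int D_T\,\d\rho-\KL(\rho\|\Pi_0)$. When $\KL(\rho\|\Pi_0)=\infty$, or when $\int D_T\,\d\rho$ is $+\infty$, the bound holds trivially. Thus, pathwise and almost surely, $\regret((\Pi_t)_{t=0}^{T-1})\le\int D_T\,\d\rho+\eta^{-1}\KL(\rho\|\Pi_0)$ for every deterministic $\rho\ll\Pi_0$.

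Next I take expectations. The key point is that $\rho$ is deterministic, that is, chosen before seeing any data, so $\KL(\rho\|\Pi_0)$ is a constant. I then need $\E\int D_T\,\d\rho=\int \E D_T\,\d\rho=\sum_{t=1}^T\int\E[\ell_t(\theta;(X_t,Y_t))-\ell_t(\theta^\star;(X_t,Y_t))]\rho(\d\theta)$, which is an application of Fubini/Tonelli. This is the one step requiring care. The losses are random functions, since they are $\cG_{t-1}$-measurable, so I need joint measurability of $(\theta,\omega)\mapsto\ell_t$. I also need integrability: it suffices that the $\rho$-integral of the positive or negative part of $\E|D_T|$ is finite, and otherwise the bracket in \eqref{eq:benchmark-def} is $+\infty$ and there is nothing to prove. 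I would handle this by restricting attention to those $\rho$ for which $\int\E|\ell_t(\theta)-\ell_t(\theta^\star)|\,\rho(\d\theta)<\infty$ for all $t$, and treating the remaining $\rho$ as trivial. I expect this measurability and integrability bookkeeping to be the only real obstacle; the analytic content is entirely the Gibbs variational inequality.

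Finally, since the bound $\fR((\Pi_t)_{t=0}^{T-1})\le\sum_t\int\E[\cdots]\,\d\rho+\eta^{-1}\KL(\rho\|\Pi_0)$ holds for every admissible deterministic $\rho$, taking the infimum over $\rho$ yields $\fR((\Pi_t)_{t=0}^{T-1})\le\BT$. Note that the expectation of the regret is well defined, because the standing assumption that the normalizers are finite and positive makes \eqref{eq:telescoping} a finite random variable. Its expectation, possibly $+\infty$ from above, is controlled by the integrable upper bound whenever $\BT<\infty$.
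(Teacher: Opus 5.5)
Your proposal is correct and follows the paper's proof essentially line by line: the telescoping identity \eqref{eq:telescoping}, the Donsker--Varadhan inequality applied with $F=\eta D_T$ (which you also justify directly via Jensen), expectations taken through Fubini for a deterministic $\rho$, and finally the infimum over $\rho$. One small correction to your integrability bookkeeping: if $\int\E|D_T|\,\d\rho=\infty$, the bracket in \eqref{eq:benchmark-def} need not be $+\infty$, because $\E|\ell_t(\theta)-\ell_t(\theta^\star)|$ can be far larger than $|\E[\ell_t(\theta)-\ell_t(\theta^\star)]|$. Such $\rho$ should therefore be excluded by an integrability assumption rather than called trivial, which is exactly what the paper does implicitly when it invokes Fubini ``under the stated integrability assumptions.''
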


\subsection{Main  theorem}

For a deterministic probability measure $Q$ on $\Theta$, define the \textit{conditional predictive risk} as
    \begin{equation}\label{eq:conditional-M}
    \cM_t(Q):=\E[ \sm_{t}(Q; (X_t,Y_t))|\cG_{t-1}].
    \end{equation}
For a $\cG_{t-1}$-measurable random measure $Q$, $\cM_t(Q)$ denotes the evaluation of this random functional at $Q$. Then the expected regret of $(Q_t)_{t=0}^{T-1}$ can be expressed as
     \begin{align*}
      \fR\del[1]{(Q_t)_{t=0}^{T-1}}
      &=\sum_{t=1}^{T}\E\sbr{\E\sbr{ \sm_{t}(Q_{t-1}; (X_t,Y_t))-\ell_t(\theta^\star;(X_t,Y_t))|\cG_{t-1}}}\\
      &=\sum_{t=1}^{T}\E\sbr{\cM_t(Q_{t-1})-\cM_t(\delta_{\theta^\star})}.
      \end{align*}
By a similar argument,  we also have
    \begin{align}
    \fR\del[1]{(Q_t)_{t=0}^{T-1}}
    - \fR\del[1]{(\Pi_t)_{t=0}^{T-1}}
    &=\sum_{t=2}^{T}\E\sbr{\cM_t(Q_{t-1})-\cM_t(\Pi_{t-1})},
  \label{eq:regret-conditional-risk}
 \end{align} 
where the comparator term $\cM_t(\delta_{\theta^\star})$ cancels when the two regrets are subtracted. 

\begin{assumption}[Conditional second-order predictive stability]
\label[assumption]{ass:predictive-stability}
There is an absolute constant $L_{\cM}>0$ such that
    \begin{equation}\label{eq:pred-stability}
    \cM_t(Q)-\cM_t(P)
    \le L_{\cM}\cbr{\W(P,\delta_{\theta^\star})\W(P,Q)+ \W^2(P,Q) }
    \end{equation}
almost surely for every $t\in\bN$ and for every pair of probability measures $P,Q$.
\end{assumption}

\cref{ass:predictive-stability} is a \textit{second-order} continuity condition on the conditional predictive risk: both terms on the right-hand side of \eqref{eq:pred-stability} are products. A first-order Lipschitz condition $\cM_t(Q)-\cM_t(P) \lesssim \W(P,Q)$ would be easier to verify but yields the cumulative penalty $\sum_{t=1}^{T-1}\alpha_t$, requiring increasingly accurate computation regardless of statistical concentration. Under second-order stability, however, the same perturbation becomes progressively less consequential as the exact posterior contracts. Thus posterior concentration plays a dual role: it improves statistical prediction and simultaneously reduces the sensitivity of prediction to computational approximation.  The quadratic term $\alpha_t^2$ is the residual cost of the approximation itself.

%which is strictly weaker whenever  $\varepsilon_t\to0$. When the target $\theta^\star$ is a stationary point of the risk, then this second-order bound can be  available; see \cref{prop:primitive-stability} given in the next subsection.

\begin{theorem}[Predictive regret under posterior approximation]\label{thm:main-transfer}
Under \cref{ass:predictive-stability},
    \begin{equation}\label{eq:main-transfer}
    \fR\del[1]{(Q_t)_{t=0}^{T-1}}
    \le
    \fR\del[1]{(\Pi_t)_{t=0}^{T-1}}
    +L_{\cM} \sum_{t=1}^{T-1}\cbr{\varepsilon_t\alpha_t+\alpha_t^2}.
    \end{equation}
\end{theorem}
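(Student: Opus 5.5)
The plan is to start from the decomposition \eqref{eq:regret-conditional-risk}. It already writes the difference of expected regrets as a sum of conditional predictive risk gaps:
\[
\fR\del[1]{(Q_t)_{t=0}^{T-1}}-\fR\del[1]{(\Pi_t)_{t=0}^{T-1}}=\sum_{t=2}^{T}\E\sbr{\cM_t(Q_{t-1})-\cM_t(\Pi_{t-1})}.
\]
The $t=1$ term vanishes because $Q_0=\Pi_0$. For each $t\ge 2$, both $Q_{t-1}$ and $\Pi_{t-1}$ are $\cG_{t-1}$-measurable random measures. I will therefore apply \cref{ass:predictive-stability} pointwise in $\omega$ with $P=\Pi_{t-1}$ and $Q=Q_{t-1}$. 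Because the assumption holds almost surely for every pair of deterministic measures, it also holds when evaluated at these random measures. This gives, almost surely,
\[
\cM_t(Q_{t-1})-\cM_t(\Pi_{t-1})\le L_{\cM}\cbr{\W(\Pi_{t-1},\delta_{\theta^\star})\W(\Pi_{t-1},Q_{t-1})+\W^2(\Pi_{t-1},Q_{t-1})}.
\]

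Next I take expectations term by term. The quadratic term gives $\E\W^2(Q_{t-1},\Pi_{t-1})=\alpha_{t-1}^2$ directly. For the cross term I use Cauchy--Schwarz:
\[
\E\sbr{\W(\Pi_{t-1},\delta_{\theta^\star})\W(\Pi_{t-1},Q_{t-1})}\le \del{\E\W^2(\Pi_{t-1},\delta_{\theta^\star})}^{1/2}\del{\E\W^2(Q_{t-1},\Pi_{t-1})}^{1/2}=\varepsilon_{t-1}\alpha_{t-1}.
\]
Summing over $t=2,\dots,T$ and reindexing with $s=t-1$ yields $L_{\cM}\sum_{s=1}^{T-1}\{\varepsilon_s\alpha_s+\alpha_s^2\}$. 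Adding $\fR((\Pi_t)_{t=0}^{T-1})$ to both sides then gives \eqref{eq:main-transfer}.

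The main obstacle is the measure-theoretic justification rather than the inequalities.

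First, evaluating the almost-sure inequality of \cref{ass:predictive-stability} at $\cG_{t-1}$-measurable random measures requires care. The null set there may depend on $(P,Q)$, so I will read the assumption as holding for the random functional jointly, i.e.\ on a single full-measure event for all pairs. Alternatively, I will note that $\cM_t(Q_{t-1})$ is defined by substitution in exactly that way, as \eqref{eq:regret-conditional-risk} already presupposes.

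Second, passing from the conditional-risk identity to the difference of expectations requires integrability of the individual terms. This is implicit in \eqref{eq:regret-conditional-risk}. If $\fR$ of either sequence is infinite, I will handle it by the convention that the bound is trivial when the right-hand side is infinite. The inequality is applied to a difference whose positive part is controlled, so if the right-hand side is finite, the expectation of the gap is well defined in $[-\infty,\infty)$.
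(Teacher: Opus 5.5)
Your proposal is correct and follows the paper's proof essentially line for line: start from \eqref{eq:regret-conditional-risk}, apply \cref{ass:predictive-stability} conditionally on $\cG_{t-1}$ with $P=\Pi_{t-1}$ and $Q=Q_{t-1}$, bound the cross term by Cauchy--Schwarz, and reindex $t-1\mapsto t$. Your extra remarks on the $(P,Q)$-dependent null set and on integrability are points the paper takes for granted; it treats $\cM_t$ as a random functional evaluated at $\cG_{t-1}$-measurable measures and presupposes finiteness in \eqref{eq:regret-conditional-risk}, so these remarks add care without changing the argument.
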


%The above theorem implies that, if the series on the right-hand side is comparable to the regret of the exact Gibbs posterior, then that of the approximate posterior enjoys. If $\varepsilon_t\lesssim t^{-\beta}$ and $\alpha_t\lesssim t^{-\gamma}$, then the approximation penalty is controlled by
% \begin{align*}  
% \sum_{t=1}^{T-1}t^{-(\beta+\gamma)}+\sum_{t=1}^{T-1}t^{-2\gamma}
 %\end{align*}
%which is bounded by $O(\log T)$ when $\beta+\gamma\ge 1$ and $\gamma\ge 1/2$, where the $O(\log T)$
%\cref{sec:langevin} verifies this requirement for Gibbs models with linear predictors, while \cref{sec:sequence} specializes the same framework to an i.i.d. random-context sequence model.

\subsection{Sufficient conditions for second-order predictive stability}\label{sec:sufficient-stability}

\cref{ass:predictive-stability} is stated at the level of predictive distributions. The next result reduces it to conditions on the underlying loss. Smoothness controls the second-order interpolation remainder, the exponential envelope controls the change of measure induced by the predictive normalizer, and conditional stationarity removes a nonvanishing first-order term at $\theta^\star$. Note that the condition is stated with random envelopes so that it also applies coordinatewise to the exponential family sequence model in \cref{sec:sequence}.

\begin{theorem}[Sufficient conditions of  \cref{ass:predictive-stability}]
\label{prop:primitive-stability}
Let $\Theta\subset\R^d$ be convex and compact. Suppose that, for every $t$, almost surely, the map $\ell_t(\cdot;(x,y))$ is continuously differentiable on a neighborhood of $\Theta$ for almost every conditional realization $(x,y)$ of $(X_t,Y_t)$, and there are nonnegative $\cG_t$-measurable envelopes $H_t,G_t,D_t$ such that
     \begin{align}
     \|\nabla\ell_t(\theta;(X_t,Y_t))-\nabla\ell_t(u;(X_t,Y_t))\|
     &\le H_t\|\theta-u\|,\qquad \theta,u\in\Theta, \label{eq:loss_grad_lip}\\
     \sup_{\theta\in\Theta}\|\nabla\ell_t(\theta;(X_t,Y_t))\|
     &\le G_t,\label{eq:loss_grad_bound}\\
     \sup_{\theta,u\in\Theta}|\ell_t(\theta;(X_t,Y_t))-\ell_t(u;(X_t,Y_t))|
     &\le D_t.\label{eq:loss_lip}
    \end{align}
Assume that there exists an absolute constant $C_{\rm env}>0$ such that
    \begin{equation}\label{eq:primitive-envelope}
    \E\sbr{\e^{\eta D_t}(H_t+\eta G_t^2)\mid\cG_{t-1}}
    \le C_{\rm env}\qquad
    \text{a.s. for every }t\in\bN.
    \end{equation}
Moreover, assume that there exists $\theta^\star\in\operatorname{int}(\Theta) $ such that
    \begin{equation}\label{eq:conditional-stationarity}
    \E[\nabla\ell_t(\theta^\star;(X_t, Y_t))\mid\cG_{t-1}]=0
    \qquad\text{a.s.}
    \end{equation}
Then \cref{ass:predictive-stability} holds for all probability measures supported on $\Theta$, with a deterministic constant depending only on $C_{\rm env}$, $\eta$, and $\operatorname{diam}(\Theta)$.
\end{theorem}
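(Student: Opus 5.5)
The plan is to fix $t$ and deterministic probability measures $P,Q$ supported on $\Theta$, and to work first pathwise for a fixed realization $(x,y)$ of $(X_t,Y_t)$. Only at the end will I take $\E[\,\cdot\mid\cG_{t-1}]$. Let $\pi$ be an optimal $\W$-coupling of $(P,Q)$, with $\theta\sim P$ and $u\sim Q$. Since $P,Q$ are deterministic, $\pi$ does not depend on $(X_t,Y_t)$. For $s\in[0,1]$ let $Q_s$ be the law of $\theta_s:=(1-s)\theta+su$ under $\pi$. By convexity of $\Theta$, $Q_s$ is supported on $\Theta$, and $Q_0=P$, $Q_1=Q$. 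Define
\begin{equation*}
\phi(s):=\sm_t(Q_s;(x,y))=-\frac1\eta\log\int \e^{-\eta\ell_t(\theta_s)}\,\pi(\d\theta,\d u).
\end{equation*}
Because $\ell_t$ is $C^1$ with bounded gradient on a neighborhood of the compact set $\Theta$, $\phi$ is differentiable by dominated convergence. Its derivative is
\begin{equation*}
\phi'(s)=\int w_s\,\nabla\ell_t(\theta_s)\cdot(u-\theta)\,\d\pi,
\qquad
w_s:=\frac{\e^{-\eta\ell_t(\theta_s)}}{\int \e^{-\eta\ell_t(\theta'_s)}\d\pi}.
\end{equation*}
The weights satisfy $0\le w_s\le \e^{\eta D_t}$ by \eqref{eq:loss_lip}. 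I then write $\phi(1)-\phi(0)=\phi'(0)+\int_0^1\{\phi'(s)-\phi'(0)\}\d s$ and bound the two pieces separately.

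\emph{Second-order remainder.} I split $\phi'(s)-\phi'(0)$ into a gradient-change part and a weight-change part, using only Lipschitz differences and no second derivatives, since $\ell_t$ is only $C^1$. The gradient-change part is $\int w_s\{\nabla\ell_t(\theta_s)-\nabla\ell_t(\theta)\}\cdot(u-\theta)\,\d\pi$. By \eqref{eq:loss_grad_lip} it is at most $\e^{\eta D_t}H_t\,s\int\|u-\theta\|^2\d\pi=\e^{\eta D_t}H_t\,s\,\W^2(P,Q)$. For the weight-change part I use the mean value theorem and \eqref{eq:loss_grad_bound}. The numerator changes by at most a factor involving $\eta G_t s\|u-\theta\|$, and the normalizer by at most $\eta G_t s\int\|u'-\theta'\|\d\pi$. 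This gives
\begin{equation*}
|w_s-w_0|\lesssim \e^{\eta D_t}\eta G_t\, s\Bigl(\|u-\theta\|+\int\|u'-\theta'\|\d\pi\Bigr).
\end{equation*}
Multiplying by $G_t\|u-\theta\|$, integrating, and applying Cauchy--Schwarz bounds this part by a constant times $\e^{\eta D_t}\eta G_t^2\,\W^2(P,Q)$. The factor $\e^{\eta D_t}$ may appear squared, because the numerator and normalizer are controlled separately. If so, I will absorb one factor by bounding $\eta D_t\le \eta G_t\operatorname{diam}(\Theta)$ or, failing that, by keeping the ratio $w_s/w_0$ intact. This is where $\operatorname{diam}(\Theta)$ enters the constant.

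\emph{First-order term.} This is the crucial step. Write $\nabla\ell_t(\theta)=\nabla\ell_t(\theta^\star)+\{\nabla\ell_t(\theta)-\nabla\ell_t(\theta^\star)\}$ and $w_0=1+(w_0-1)$. Then $\phi'(0)$ equals the leading term $\nabla\ell_t(\theta^\star)\cdot\int(u-\theta)\d\pi$ plus cross terms. The cross terms are bounded by $\int \bigl(H_t\|\theta-\theta^\star\|+|w_0(\theta)-1|\,G_t\bigr)\e^{\eta D_t}\|u-\theta\|\,\d\pi$. Since $w_0$ is normalized against $P$,
\begin{equation*}
|w_0(\theta)-1|\le \e^{\eta D_t}\eta G_t\int\|\theta-v\|P(\d v)\le \e^{\eta D_t}\eta G_t\bigl(\|\theta-\theta^\star\|+\W(P,\delta_{\theta^\star})\bigr).
\end{equation*}
Cauchy--Schwarz then bounds all cross terms by a constant times $\e^{\eta D_t}(H_t+\eta G_t^2)\,\W(P,\delta_{\theta^\star})\W(P,Q)$. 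For the leading term, $\int(u-\theta)\d\pi$ is deterministic. Hence conditional stationarity \eqref{eq:conditional-stationarity} gives $\E[\nabla\ell_t(\theta^\star;(X_t,Y_t))\mid\cG_{t-1}]\cdot\int(u-\theta)\d\pi=0$.

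Finally, I take $\E[\cdot\mid\cG_{t-1}]$ of $\phi(1)-\phi(0)=\sm_t(Q)-\sm_t(P)$. All random envelopes then appear only through $\E[\e^{\eta D_t}(H_t+\eta G_t^2)\mid\cG_{t-1}]\le C_{\rm env}$ from \eqref{eq:primitive-envelope}. The distances $\W(P,\delta_{\theta^\star})$ and $\W(P,Q)$ are deterministic and factor out. This yields \eqref{eq:pred-stability} with $L_\cM$ depending only on $C_{\rm env}$, $\eta$, and $\operatorname{diam}(\Theta)$. The extension to $\cG_{t-1}$-measurable $P,Q$ is by evaluation, as in the convention after \eqref{eq:conditional-M}.

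I expect the main obstacle to be the weight-perturbation bookkeeping. Every term must carry at most one factor $\e^{\eta D_t}$, paired with $H_t$ or $\eta G_t^2$, so that \eqref{eq:primitive-envelope} applies exactly. The first thing I would try is to express all weight changes through the ratio $w_s/w_0$ rather than bounding the numerator and normalizer separately.
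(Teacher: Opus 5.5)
Your skeleton is the paper's: an optimal coupling $\pi$, the interpolation $\theta_s$, and for $\phi'(0)$ the same split $w_0\nabla\ell_t(\theta)=\nabla\ell_t(\theta^\star)+(w_0-1)\nabla\ell_t(\theta^\star)+w_0\{\nabla\ell_t(\theta)-\nabla\ell_t(\theta^\star)\}$, with the same bound on $|w_0-1|$ and conditional stationarity killing the leading term. That part works, with one correction. The middle term carries no factor $w_0$, so it must not be multiplied by $\e^{\eta D_t}$ as in your display; doing so already produces $\e^{2\eta D_t}$.

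The genuine gap is in the second-order remainder, exactly where you suspected. Your pointwise bound $|w_s-w_0|\lesssim\e^{\eta D_t}\eta G_ts(\|u-\theta\|+\int\|u'-\theta'\|\d\pi)$ is false in general. With $Z_s:=\int\e^{-\eta\ell_t(\theta'_s)}\d\pi$ one has $w_s-w_0=\{\e^{-\eta\ell_t(\theta_s)}-\e^{-\eta\ell_t(\theta)}\}/Z_s+w_0(Z_0-Z_s)/Z_s$. Here $w_0$ and $|Z_0-Z_s|/Z_s$ can both be of order $\e^{\eta D_t}$ at once. For example, put mass $\epsilon\ll\e^{-\eta D_t}$ on a low-loss pair with $u=\theta$ and on a low-loss moving pair, and the rest on a high-loss fixed pair. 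Controlling numerator and normalizer separately therefore leaves more than one factor $\e^{\eta D_t}$ after integrating against $G_t\|u-\theta\|$. But $\E[\e^{2\eta D_t}\eta G_t^2\mid\cG_{t-1}]$ is not controlled by \eqref{eq:primitive-envelope}. Your first remedy removes nothing: $\eta D_t\le\eta G_t\operatorname{diam}(\Theta)$ only trades the unbounded random factor $\e^{\eta D_t}$ for the equally unbounded $\e^{\eta G_t\operatorname{diam}(\Theta)}$. Your second remedy is left unspecified, so the remainder is not yet proved with the required envelope.

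The missing idea is that the weight change never needs to be bounded, because \cref{ass:predictive-stability} is one-sided. The functional $g\mapsto-\eta^{-1}\log\int\e^{-\eta g}\d\pi$ is concave. At $g(\theta,u)=\ell_t(\theta)$ its derivative is $h\mapsto\int w_0h\,\d\pi$. Hence $\phi(1)-\phi(0)\le\int w_0\{\ell_t(u)-\ell_t(\theta)\}\d\pi\le\phi'(0)+\tfrac12\e^{\eta D_t}H_t\W^2(P,Q)$ by the descent lemma.

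The paper uses the same fact in the form $\phi''(s)=\int w_s\ddot\ell_s\,\d\pi-\eta\operatorname{Var}_{w_s\d\pi}(\dot\ell_s)\le\int w_s\ddot\ell_s\,\d\pi\le\e^{\eta D_t}H_t\W^2(P,Q)$, where $\dot\ell_s:=\tfrac{\d}{\d s}\ell_t(\theta_s)$. The variance term it drops is exactly your weight-change part.

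If you prefer to keep your two-sided route, differentiate the weights rather than bounding numerator and normalizer separately. You have $\partial_rw_r=-\eta w_r(\dot\ell_r-\int w_r\dot\ell_r\,\d\pi)$, so the weight-change part equals $-\eta\int_0^s\operatorname{Cov}_{w_r\d\pi}(\dot\ell_r,\dot\ell_0)\,\d r$. Cauchy--Schwarz under the tilted law bounds this by $\eta\int_0^sG_t^2\int w_r\|u-\theta\|^2\d\pi\,\d r\le\eta s\,\e^{\eta D_t}G_t^2\W^2(P,Q)$. This carries a single exponential factor, as \eqref{eq:primitive-envelope} requires.
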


\section{Moreau--Yosida Langevin online Bayes}\label{sec:langevin}
In this section, we apply our general result to a regular parametric problem with a strongly convex loss function. It turns out that strong convexity yields $\varepsilon_t\asymp t^{-1/2}$. \cref{thm:main-transfer} then shows that any posterior approximation method with $\alpha_t\lesssim t^{-1/2}$ has an approximation penalty of order $\log T$. The sampling analysis below is used precisely to reach this threshold.

\subsection{Linear Gibbs models}\label{sec:linear-model}

We consider the online data generating process with predictable bounded design.

\begin{assumption}[Predictable design]
\label[assumption]{ass:design}
At round $t$, a covariate $X_t\in\R^d$ is $\cG_{t-1}$-measurable and satisfies $\|X_t\|\le K_x$ for some absolute constant $K_x>0$ almost surely.  
\end{assumption}

We use the regularized loss
     \begin{equation}\label{eq:regularized-linear-loss}
     \ell_{\lambda,t}(\theta;Y_t)
     =\phi(Y_t,X_t^\top\theta) + \frac\lambda2\|\theta\|^2,
     \qquad \lambda>0.
     \end{equation}
Note that the predictable covariate is part of the round-$t$ loss function rather than the random covariate in the abstract setup of \cref{sec:regret}. We impose some regularity conditions on the loss function and the parameter space.

\begin{assumption}[Loss and parameter space]
\label[assumption]{ass:linear-loss}
The parameter space $\Theta$ is a compact convex set with nonempty interior, which satisfies $B(0,r_\Theta)\subseteq\Theta\subseteq B(0,R_\Theta)$
for some fixed $0<r_\Theta\le R_\Theta<\infty$. For every $y$, the function $u\mapsto\phi(y,u)$ is convex and continuously differentiable. There are absolute constants $B_1>0$ and $L_\phi>0$ such that
    \begin{align*}
    |\partial_u\phi(y,u)|\le B_1,
    \qquad
    |\partial_u\phi(y,u)-\partial_u\phi(y,v)|\le L_\phi|u-v|
\end{align*}
for all $u,v\in\R$.
\end{assumption}

\begin{example}
We list several examples of a loss $\phi$ satisfying the above assumption.
\begin{enumerate}[label=(\roman*)]
\item square loss $\phi(y,u)=(y-u)^2$ with bounded $y$ and $u$;
\item logistic loss $\phi(y,u)=\log(1+\e^{-yu})$ for $y\in\{-1,1\}$;
\item Huber loss $\phi(y,u)=h_\delta(y-u)$ with 
     \begin{align*}
     h_\delta(z)
     =\begin{cases}
     z^2/2  & \text{ for }|z|\le \delta,\\
     \delta(|z|-\delta/2) & \text{ otherwise}.
     \end{cases}
     \end{align*}
\item smoothed hinge loss  $\phi(y,u)=s_\delta(yu)$ for $y\in\{-1,1\}$ with
       \begin{align*}
       s_\delta(z)
       =\begin{cases}
       0  & \text{ for } z>1\\
       1-z-\delta/2  & \text{ for } z<1-\delta\\
       (1-z)^2/(2\delta) & \text{ otherwise}.
       \end{cases}
       \end{align*}
\end{enumerate} 
\end{example}

As we did in our general setup, we assume that there is a fixed target parameter in the data-generating process.

\begin{assumption}[Conditional stationary point]
\label[assumption]{ass:stationarity}
There is a fixed $\theta^\star\in\operatorname{int}(\Theta)$ satisfying 
    \begin{equation}\label{eq:linear-mds}
    \E[\nabla\ell_{\lambda,t}(\theta^\star;Y_t)\mid\cG_{t-1}]=0
    \qquad\text{a.s. for every }t.
    \end{equation}
\end{assumption}

Under the assumptions we have made, the conditions of \cref{prop:primitive-stability} are satisfied and thus we can obtain the next result.

\begin{theorem}
\label{thm:linear_stability}
Under \cref{ass:design,ass:linear-loss,ass:stationarity}, \cref{ass:predictive-stability} holds.
\end{theorem}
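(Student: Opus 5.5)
The plan is to verify the hypotheses of \cref{prop:primitive-stability} for the regularized loss $\ell_{\lambda,t}(\theta;Y_t)=\phi(Y_t,X_t^\top\theta)+\frac\lambda2\|\theta\|^2$ on the compact convex set $\Theta$ from \cref{ass:linear-loss}. Once this is done, \cref{ass:predictive-stability} follows directly. Conditional stationarity \eqref{eq:conditional-stationarity} with $\theta^\star\in\operatorname{int}(\Theta)$ is exactly \cref{ass:stationarity}. Continuous differentiability on a neighborhood of $\Theta$ holds because $u\mapsto\phi(y,u)$ is $C^1$ on all of $\R$ and the quadratic term is smooth. The remaining work is to produce the envelopes $H_t,G_t,D_t$ and to check \eqref{eq:primitive-envelope}.

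I will take the envelopes to be deterministic constants. By the chain rule, $\nabla\ell_{\lambda,t}(\theta)=\partial_u\phi(Y_t,X_t^\top\theta)X_t+\lambda\theta$. For $\theta,u\in\Theta$, the Lipschitz bound on $\partial_u\phi$ together with $\|X_t\|\le K_x$ (\cref{ass:design}) gives
\begin{align*}
\|\nabla\ell_{\lambda,t}(\theta)-\nabla\ell_{\lambda,t}(u)\|\le (L_\phi K_x^2+\lambda)\|\theta-u\|,
\end{align*}
so $H_t:=L_\phi K_x^2+\lambda$. The gradient bound $|\partial_u\phi|\le B_1$ and $\Theta\subseteq B(0,R_\Theta)$ give $\sup_\Theta\|\nabla\ell_{\lambda,t}\|\le B_1K_x+\lambda R_\Theta=:G_t$. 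For the oscillation, apply the mean value theorem along segments in the convex set $\Theta$: $|\ell_{\lambda,t}(\theta)-\ell_{\lambda,t}(u)|\le G_t\|\theta-u\|\le 2R_\Theta G_t=:D_t$.

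These envelopes are deterministic, hence $\cG_t$-measurable, and they hold for every realization of $Y_t$. They do not depend on integrability of $Y_t$, since only bounds on the derivative of $\phi$ enter. The Lipschitz and gradient bounds hold surely, so the ``almost every conditional realization'' requirement is trivially met. Therefore
\begin{align*}
\E\sbr{\e^{\eta D_t}(H_t+\eta G_t^2)\mid\cG_{t-1}}=\e^{2\eta R_\Theta G_t}(H_t+\eta G_t^2)=:C_{\rm env},
\end{align*}
which is an absolute constant depending only on $K_x,B_1,L_\phi,\lambda,R_\Theta,\eta$.

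No step is a serious obstacle. The one point needing care is that the envelopes in \cref{prop:primitive-stability} are required on $\Theta$ and not globally. The oscillation bound $D_t$ therefore relies on the convexity and boundedness of $\Theta$, not on any boundedness of $\phi$ itself, which may be unbounded in $y$ (as for the square or Huber loss). Using only derivative bounds avoids that issue. Applying \cref{prop:primitive-stability} then yields \cref{ass:predictive-stability} for probability measures supported on $\Theta$, with $L_\cM$ depending on $C_{\rm env},\eta,\operatorname{diam}(\Theta)\le 2R_\Theta$. This class covers all the measures relevant here, since the prior and posteriors live on $\Theta$.
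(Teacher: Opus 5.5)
Your proposal is correct and follows the paper's proof. Both verify the hypotheses of \cref{prop:primitive-stability} using deterministic envelopes: $H_t=L_\phi K_x^2+\lambda$, a constant gradient bound $G_t$, and an oscillation bound $D_t$ equal to $G_t$ times a diameter bound for $\Theta$. Both then identify \eqref{eq:conditional-stationarity} with \cref{ass:stationarity}. The differences are cosmetic: you use $R_\Theta$ and $2R_\Theta$ where the paper uses $\operatorname{diam}(\Theta)$. Your remark that the conclusion is obtained for measures supported on $\Theta$ is a point the paper leaves implicit.
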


Due to \cref{thm:linear_stability}, we can apply \cref{thm:main-transfer}. We then establish upper bounds on the two quantities related only to the exact Gibbs posteriors. For this purpose, we additionally require that the prior distribution is log-concave.

\begin{assumption}[Prior distribution]
\label[assumption]{ass:prior}
The prior has density
    \begin{align*}
    \Pi_0(\d\theta)    \propto \e^{-V_0(\theta)}\ind_\Theta(\theta)\d\theta,
    \end{align*}
where $V_0:\R^d\to\R$ is convex and continuously differentiable with a globally Lipschitz gradient, and the prior density is bounded below on a neighborhood of $\theta^\star$.
\end{assumption}

The exact Gibbs posterior is given by
    \begin{align*}
    \Pi_t(\d\theta) \propto \e^{-U_t(\theta)}\ind_\Theta(\theta)\d\theta
    \end{align*}
with the potential function
    \begin{align*}
    U_t(\theta) = V_0(\theta)+\eta\sum_{s=1}^t\ell_{\lambda,s}(\theta;Y_s).
    \end{align*}

\begin{theorem}[Statistical guarantees for the Gibbs posterior]
\label{thm:linear-statistical}
Under \cref{ass:design,ass:linear-loss,ass:stationarity,ass:prior}, the exact Gibbs posterior satisfies
    \begin{equation}\label{eq:linear-contraction}
    \E\sbr{\W^2(\Pi_t,\delta_{\theta^\star})}\lesssim t^{-1}
    \end{equation}
and
    \begin{equation}\label{eq:linear-gibbs}
    \fR\del[1]{(\Pi_t)_{t=0}^{T-1}}
    \lesssim \log T+1.
    \end{equation}
\end{theorem}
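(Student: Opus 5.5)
For the regret bound \eqref{eq:linear-gibbs} I will use \cref{prop:gibbs-benchmark} with a localized comparator. Take $\rho$ to be $\Pi_0$ restricted to the ball $B(\theta^\star,r)\cap\Theta$ and renormalized, with $r=1/\sqrt T$. Since $\theta^\star\in\operatorname{int}(\Theta)$, for $T$ large this ball lies inside $\Theta$ and inside the neighborhood where the prior density is bounded below. Hence $\KL(\rho\|\Pi_0)=-\log\Pi_0(B(\theta^\star,r))\le d\log(1/r)+C\lesssim \log T$.

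For the risk term I expand $\ell_{\lambda,t}$ at $\theta^\star$. By \cref{ass:design,ass:linear-loss}, $\nabla\ell_{\lambda,t}$ is $(L_\phi K_x^2+\lambda)$-Lipschitz, so
\[
\ell_{\lambda,t}(\theta)-\ell_{\lambda,t}(\theta^\star)\le \nabla\ell_{\lambda,t}(\theta^\star)^\top(\theta-\theta^\star)+\tfrac{L_\phi K_x^2+\lambda}{2}\|\theta-\theta^\star\|^2 .
\]
Taking conditional expectations, the linear term vanishes by \eqref{eq:linear-mds}, because $\theta$ is drawn from the deterministic $\rho$. Each round therefore contributes $\lesssim r^2=1/T$, and the sum over $T$ rounds is $O(1)$. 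This gives $\BT\lesssim \log T+1$. Small $T$ is absorbed into the constant.

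For the contraction \eqref{eq:linear-contraction}, write $\E\W^2(\Pi_t,\delta_{\theta^\star})=\E\int\|\theta-\theta^\star\|^2\Pi_t(\d\theta)$. The potential $U_t$ is $\eta\lambda t$-strongly convex on the convex set $\Theta$, since $V_0$ and $\phi$ are convex. Let $\hat\theta_t$ be the minimizer of $U_t$ over $\Theta$. A strongly log-concave measure restricted to a convex set satisfies the Brascamp--Lieb/Poincaré inequality, and a standard variational argument (or the bound $\E_{\Pi_t}\|\theta-\hat\theta_t\|^2\le d/(\eta\lambda t)$ for strongly log-concave laws on convex sets) gives $\int\|\theta-\hat\theta_t\|^2\,\d\Pi_t\lesssim 1/t$.

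It then remains to show $\E\|\hat\theta_t-\theta^\star\|^2\lesssim 1/t$. By strong convexity and the first-order optimality of $\hat\theta_t$ on $\Theta$,
\[
\eta\lambda t\,\|\hat\theta_t-\theta^\star\|^2\le \langle\nabla U_t(\theta^\star),\theta^\star-\hat\theta_t\rangle ,
\]
and therefore $\|\hat\theta_t-\theta^\star\|\le \|\nabla U_t(\theta^\star)\|/(\eta\lambda t)$. Now $\nabla U_t(\theta^\star)=\nabla V_0(\theta^\star)+\eta\sum_{s\le t}\nabla\ell_{\lambda,s}(\theta^\star)$. The sum is a martingale by \eqref{eq:linear-mds}, with increments bounded by $B_1K_x+\lambda R_\Theta$, so its second moment is $O(t)$, while the prior term is $O(1)$. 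This yields $\E\|\hat\theta_t-\theta^\star\|^2\lesssim t^{-1}$, and combining the two bounds gives \eqref{eq:linear-contraction} for $t\ge1$.

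The main obstacle I expect is the step bounding $\int\|\theta-\hat\theta_t\|^2\,\d\Pi_t$ on the constrained set. The mode may sit on the boundary of $\Theta$, so the Gaussian-type calculation has to be done carefully. I will use the $\kappa$-strong log-concavity bound $\E\|\theta-\E\theta\|^2\le d/\kappa$, which holds for restrictions to convex sets, together with a bound on $\|\E_{\Pi_t}\theta-\hat\theta_t\|$. That bound can be obtained from the general fact that the mean of a $\kappa$-strongly log-concave law lies within $\sqrt{d/\kappa}$ of its mode, which also holds on convex domains by a one-dimensional integration along rays.
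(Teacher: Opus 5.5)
Your proposal is correct. The regret half is the paper's argument: the paper takes $\rho$ uniform on $B(\theta^\star,cT^{-1/2})$ rather than the restricted prior, which changes nothing. The contraction half, however, follows a different route.

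You split $\int\|\theta-\theta^\star\|^2\,\d\Pi_t$ through the constrained mode $\hat\theta_t$. One piece is an optimization error, $\|\hat\theta_t-\theta^\star\|\le\|\nabla U_t(\theta^\star)\|/\beta$ with $\beta=\eta\lambda t$, from the variational inequality plus strong monotonicity. The other is the posterior spread around the mode.

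The paper never introduces the mode. Its \cref{lem:strong-convex-moment} applies the divergence theorem on $\Theta$ centered directly at $\theta^\star$; convexity makes the boundary flux $\langle x-\theta^\star,n(x)\rangle\ge0$, so $\E_{\Pi_t}\langle\theta-\theta^\star,\nabla U_t(\theta)\rangle\le d$. Strong monotonicity and a quadratic inequality then give $\int\|\theta-\theta^\star\|^2\,\d\Pi_t\le 2\|\nabla U_t(\theta^\star)\|^2/\beta^2+2d/\beta$. This is exactly what your decomposition produces once the spread is bounded by $d/\beta$.

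The paper's route is shorter, never has to ask whether the mode lies on $\partial\Theta$, and the same lemma is reused in the sequence-model proofs. Yours is more modular (MAP error plus spread), at the price of that boundary question.

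On the step you flag as the main obstacle, the detour through a Brascamp--Lieb variance bound plus a mean--mode distance is unnecessary. The same divergence-theorem (or ray) computation centered at $\hat\theta_t\in\Theta$ gives $\E_{\Pi_t}\langle\theta-\hat\theta_t,\nabla U_t(\theta)\rangle\le d$. First-order optimality gives $\langle\nabla U_t(\hat\theta_t),\theta-\hat\theta_t\rangle\ge0$ for $\theta\in\Theta$. Strong monotonicity then yields $\int\|\theta-\hat\theta_t\|^2\,\d\Pi_t\le d/\beta$ directly, even when $\hat\theta_t\in\partial\Theta$.
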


\subsection{Moreau--Yosida Langevin sampling}\label{sec:constrained-langevin}

For a computable approximating algorithm for the exact Gibbs posterior, we consider Moreau--Yosida unadjusted Langevin algorithm (MYULA) of \cite{brosse2017compact}. For a smoothing parameter $\varsigma>0$, define the Moreau--Yosida regularization of the hard constraint,
    \begin{equation}\label{eq:my-potential}
    U_{t,\varsigma}(\theta)
    = U_t(\theta) + \frac{1}{2\varsigma}\|\theta-\operatorname{proj}_\Theta(\theta)\|^2.
    \end{equation}
A single Moreau--Yosida unadjusted Langevin step with a step size $\gamma>0$ is
    \begin{equation}\label{eq:myula-step}
    \theta^+
    =\theta-\gamma\sbr{\nabla U_t(\theta)
    +\varsigma^{-1}\{\theta-\operatorname{proj}_\Theta(\theta)\}}
    +\sqrt{2\gamma}\xi,
    \qquad \xi\sim \N(0,I_d).
    \end{equation}
The kernel $\cK_{t,\varsigma,\gamma}$ of the Markov chain defined by \eqref{eq:myula-step} is given for $\theta\in \R^d$ and $A\subseteq \R^d$ by
    \begin{align*}
    \cK_{t,\varsigma,\gamma}(\theta, A)
    =(4\pi\gamma)^{-d/2}\int_A \exp\del{-(4\gamma)^{-1}\norm{u-\theta+\gamma \nabla U_{t,\varsigma}(\theta)}^2}\d u.
    \end{align*}
 From Theorem 2 of \citet{brosse2017compact}, we have the following result on the approximation accuracy of MYULA. 

\begin{theorem}[Approximation accuracy of MYULA]
\label{prop:myula-accuracy}
Fix $t\ge1$ and $\tilde\alpha\in(0,1)$. Under the same assumptions of \cref{thm:linear-statistical}, for every fixed initial point $\theta_0\in \Theta$, there exist
$\varsigma_t>0$, $\gamma_{t}>0$, and $N_{t}\in\bN$ such that
    \begin{equation}\label{eq:myula-tv}
    \norm[1]{\delta_{\theta_0}\cK_{t,\varsigma_t,\gamma_{t}}^{N_{t}}-\Pi_t}_{\mathrm{TV}}
    \le \tilde\alpha
\end{equation}
\end{theorem}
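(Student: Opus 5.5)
The plan is to obtain this statement from Theorem 2 of \citet{brosse2017compact}, by checking that the target $\Pi_t$ is of the form required there. Condition on the data up to round $t$, so that $U_t$ is a fixed (data-dependent) function. Then $\Pi_t\propto \e^{-U_t}\ind_\Theta$ is a log-concave density restricted to a convex compact body, and MYULA with fixed $(\varsigma,\gamma)$ is exactly the chain treated in that paper. Their theorem requires three things. First, the potential must be convex and continuously differentiable with Lipschitz gradient on $\R^d$. Second, $\Theta$ must be a convex body containing a ball and contained in a ball. Third, there is a mild integrability condition. For suitable $\varsigma\to0$, $\gamma\to0$ and $N\to\infty$, the theorem then controls $\|\delta_{\theta_0}\cK^{N}-\Pi_t\|_{\TV}$ by a quantity that can be made arbitrarily small.

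First, I verify the regularity of $U_t$. By \cref{ass:prior}, $V_0$ is convex with a globally Lipschitz gradient. By \cref{ass:linear-loss}, $u\mapsto\phi(Y_s,u)$ is convex with an $L_\phi$-Lipschitz derivative. Combined with $\|X_s\|\le K_x$ from \cref{ass:design}, each map $\theta\mapsto\phi(Y_s,X_s^\top\theta)$ is convex and has gradient Lipschitz constant $L_\phi K_x^2$. The ridge term adds $\lambda$-strong convexity. Hence $U_t$ is $\eta\lambda t$-strongly convex, and $\nabla U_t$ is Lipschitz with constant at most $\mathrm{Lip}(\nabla V_0)+\eta t(L_\phi K_x^2+\lambda)$, which is finite for fixed $t$.

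Second, I verify the geometry. \cref{ass:linear-loss} gives $B(0,r_\Theta)\subseteq\Theta\subseteq B(0,R_\Theta)$, which is exactly the convex-body hypothesis in \citet{brosse2017compact}. If their statement requires the ball to be centered at a specific point, a translation handles it. The Moreau--Yosida envelope $\frac1{2\varsigma}\|\theta-\operatorname{proj}_\Theta\theta\|^2$ is convex with a $\varsigma^{-1}$-Lipschitz gradient. Therefore $U_{t,\varsigma}$ is strongly convex and smooth, and $\pi_{t,\varsigma}\propto \e^{-U_{t,\varsigma}}$ is a well-defined probability measure.

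Third, I apply the theorem. Their argument splits the error into two parts. The bias $\|\pi_{t,\varsigma}-\Pi_t\|_{\TV}$ tends to $0$ as $\varsigma\to0$; this follows from a direct estimate of the mass outside $\Theta$ and uses the inner ball. The second part is the ULA error to $\pi_{t,\varsigma}$, which is small for $\gamma$ small and $N\gamma$ large, starting from the deterministic point $\theta_0\in\Theta$. I choose $\varsigma_t$ so that the bias is at most $\tilde\alpha/2$. Then, for that $\varsigma_t$, I choose $\gamma_t$ and $N_t$ so that the Langevin term is at most $\tilde\alpha/2$. The triangle inequality gives \eqref{eq:myula-tv}.

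The main obstacle is bookkeeping rather than new mathematics. The constants depend on the random data through the Lipschitz constant of $\nabla U_t$, so $\varsigma_t$, $\gamma_t$ and $N_t$ are $\cG_t$-measurable random quantities. One must either note that the statement allows this, or replace them with the deterministic worst-case bound $\mathrm{Lip}(\nabla V_0)+\eta t(L_\phi K_x^2+\lambda)$. Using that bound makes the choice uniform in the data.
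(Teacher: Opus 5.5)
Your proposal is correct and follows the paper's proof. Both reduce the claim to Theorem 2 of \citet{brosse2017compact} by checking that $U_t$ is convex and continuously differentiable on $\R^d$ with a Lipschitz gradient (constant $\lesssim 1+t$), and that $B(0,r_\Theta)\subseteq\Theta\subseteq B(0,R_\Theta)$; your explicit split into Moreau--Yosida bias plus ULA error, and your remark on the data-dependence of $(\varsigma_t,\gamma_t,N_t)$, only add detail to that same argument.
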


\begin{remark}
The chain in \eqref{eq:myula-step} can leave $\Theta$. We therefore project its final state. This does not worsen total variation distance from $\Pi_t$, because $\Pi_t$ is supported on $\Theta$ and the projection acts as the identity there.
\end{remark}

\begin{onlinealgorithm}[Online MYULA]\label{alg:proximal}
Set $Q_0=\Pi_0$ and $\tilde\alpha_t \asymp t^{-1}$. After observing $Y_t$, form $U_t$. Choose $(\varsigma_t,\gamma_t,N_t)$ satisfying the conclusion of \cref{prop:myula-accuracy} with accuracy $\tilde\alpha_t$. Initialize the chain at the fixed point $\theta_0\in\Theta$, run $N_t$ transitions of \eqref{eq:myula-step}, and project the final state onto $\Theta$. Denote the law of that projected state by $Q_t$. A warm start from the preceding output may be used in practice, but is not needed for the stated guarantee.
\end{onlinealgorithm}

\begin{remark}
\cref{prop:myula-accuracy} only requires the existence of a finite MYULA schedule attaining a prescribed total-variation accuracy. The explicit parameter choices and their quantitative complexity follow from the nonasymptotic bounds in Theorem 2, together with
Propositions 4 and 6 of \citet{brosse2017compact}. In particular, when the geometry of $\Theta$ is fixed and the smoothness parameters of $U_t$ grow at most polynomially in $t$, those bounds yield a
choice of $N_{t}$ that is polynomial in the relevant problem parameters, including $t$, $d$, and $\tilde\alpha$. We do not reproduce the resulting constants, since only the existence of a computable schedule is needed for the regret analysis below.
\end{remark}

\begin{theorem}[Fast regret of online MYULA]\label{thm:proximal-fast}
Under the same assumptions of \cref{thm:linear-statistical}, the approximate posterior computed by \cref{alg:proximal} satisfies
    \begin{equation}\label{eq:proximal-tracking}
    \E \sbr{\W^2(Q_t,\Pi_t)} \lesssim t^{-1},
    \end{equation}
and thus
    \begin{equation}\label{eq:proximal-fast-regret}
    \fR\del[1]{(Q_t)_{t=0}^{T-1}}\lesssim \log T+1.
    \end{equation}
\end{theorem}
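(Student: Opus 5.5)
The plan has two parts. First we convert the total-variation guarantee of \cref{prop:myula-accuracy} into the Wasserstein tracking bound \eqref{eq:proximal-tracking}, using compactness of $\Theta$. Then we feed this bound, together with \cref{thm:linear-statistical} and \cref{thm:linear_stability}, into \cref{thm:main-transfer}.

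\emph{Step 1 (tracking bound).} Condition on $\cG_t$, i.e.\ on the data up to round $t$. Under this conditioning $U_t$ is fixed. The final projected state has law $Q_t$, supported on $\Theta$, and by the remark following \cref{prop:myula-accuracy} it satisfies $\norm{Q_t-\Pi_t}_{\TV}\le\tilde\alpha_t$. Since both $Q_t$ and $\Pi_t$ are supported on $\Theta\subseteq B(0,R_\Theta)$, we use the maximal coupling. It couples the two laws so that they disagree with probability $\norm{Q_t-\Pi_t}_{\TV}$, and on disagreement the squared distance is at most $\operatorname{diam}(\Theta)^2\le 4R_\Theta^2$. Hence
\[
\W^2(Q_t,\Pi_t)\le \operatorname{diam}(\Theta)^2\,\norm{Q_t-\Pi_t}_{\TV}\le 4R_\Theta^2\tilde\alpha_t\lesssim t^{-1}
\]
almost surely. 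This holds with the convention $\norm{\cdot}_{\TV}=\sup_A|P(A)-Q(A)|$; under the other convention only the constant changes. Taking expectations gives \eqref{eq:proximal-tracking}, that is, $\alpha_t\lesssim t^{-1/2}$. We also need a measurability check: the schedule $(\varsigma_t,\gamma_t,N_t)$ may depend on the data through $U_t$. We therefore take it to be a measurable function of the data and note that the TV bound holds conditionally for every realization. Since $\W^2$ only needs TV to first power here, the choice $\tilde\alpha_t\asymp t^{-1}$ is exactly what reaches the threshold.

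\emph{Step 2 (regret).} By \cref{thm:linear_stability}, \cref{ass:predictive-stability} holds, so \cref{thm:main-transfer} applies. By \cref{thm:linear-statistical}, $\varepsilon_t\lesssim t^{-1/2}$ and $\fR((\Pi_t)_{t=0}^{T-1})\lesssim\log T+1$. Combining these with Step 1,
\[
\sum_{t=1}^{T-1}\cbr{\varepsilon_t\alpha_t+\alpha_t^2}\lesssim\sum_{t=1}^{T-1}t^{-1}\lesssim \log T+1,
\]
which yields \eqref{eq:proximal-fast-regret}.

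\emph{Main obstacle.} The delicate step is making sure \cref{ass:predictive-stability} applies to the pair $(\Pi_{t-1},Q_{t-1})$ when both are random $\cG_{t-1}$-measurable measures. \cref{thm:linear_stability} provides the inequality almost surely for every pair of measures supported on $\Theta$, and $Q_t$ is supported on $\Theta$ because of the final projection. The proof of \cref{thm:main-transfer} is presumed to handle the random-measure evaluation, so the argument reduces to checking that support condition and the measurability of the algorithm's output, both of which the construction guarantees.
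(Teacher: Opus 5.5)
Your proposal is correct and follows the paper's proof essentially step for step. You use the total-variation accuracy of the projected MYULA output, then the maximal-coupling bound $\W^2(Q_t,\Pi_t)\le\operatorname{diam}(\Theta)^2\|Q_t-\Pi_t\|_{\TV}\lesssim t^{-1}$, then \cref{thm:main-transfer} with $\varepsilon_t\lesssim t^{-1/2}$ from \cref{thm:linear-statistical}. Your explicit check that the projected $Q_t$ is supported on $\Theta$, which is needed for \cref{thm:linear_stability} (via \cref{prop:primitive-stability}) to apply, is a useful point that the paper leaves implicit.
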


\section{Truncated online posteriors for exponential family sequences}\label{sec:sequence}

We now turn to a different computational regime. In \cref{sec:constrained-langevin}, the model was finite-dimensional and the computational error was sampling error. Here the situation is reversed. Each coordinate posterior is available exactly in closed conjugate form, and the obstruction is representation: the posterior is a product of the coordinate posteriors, and its  worst-case memory is $O(T)$. The approximation we consider here updates only the first $m$ coordinates, thereby reducing the linear worst-case memory to a sublinear one.

\subsection{Infinite-dimensional exponential family  sequence models}
\label{sec:sequence-model}

For $\theta\in \R$, let $\bP_\theta$ be an one-dimensional regular canonical exponential family distribution, which admits a density with respect to a  dominating measure $\nu$ such that
    \begin{equation}\label{eq:nef}
    \sp_\theta(y)=\exp(y\theta-A(\theta)).
\end{equation}
Let $X_t$ be an i.i.d. positive integer-valued random variable such that $\Pr(X_t=j)=p_j$ for $j\in \bN$ with $\sum_{j=1}^\infty p_j=1$. Let $\theta^\star=(\theta^\star_j)_{j\ge1}$ be a true infinite-dimensional parameter. Conditional on $X_t=j$, we assume that $Y_t$ follows a distribution $\bP_{\theta^\star_j}$. We impose the following smoothness conditions on the true data-generating process.

\begin{assumption}[Sobolev sequence model]
\label[assumption]{ass:sequence_model}
The probability vector $(p_j)_{j\ge 1}$ satisfies
    \begin{equation}\label{eq:p-decay}
    p_j\asymp j^{-\fa}
    \end{equation}
for some $\fa>1$. The true parameter $\theta^\star$ belongs to the Sobolev ellipsoid
    \begin{equation}\label{eq:sobolev-ball}
    \ThetaSob
    =\cbr{(\theta_j)_{j\ge1}:\ |\theta_j|< B,\quad\sum_{j\ge1}j^{2\fs}\theta_j^2\le R^2}.
    \end{equation}
for some  $\fs>0$, $B>0$ and $R>0$. The log partition function $A$ is defined on the real line  and satisfies
    \begin{equation}\label{eq:A-regularity}
    0<\underline I\le A''(\theta)\le\overline I<\infty
    \end{equation}
for any $\theta\in[-B,B]$.
\end{assumption}

We assume that we correctly specify the exponential family sequence model, which means that we consider a negative log-density loss given by
    \begin{align*}
    \ell_t(\theta;(j,y))=-\log \sp_{\theta_j}(y)=A(\theta_{j})-y\theta_{j}
    \end{align*}
for $\theta=(\theta_j)_{j\ge 1}$, and the inverse temperature of $\eta=1$. Then the expected regret is represented as an expected KL risk. For a $\cG_{t-1}$-measurable distribution $Q$ of $\theta$, let $\hbP_Q(\cdot|j)$ be the associated predictive distribution given $X_t=j$ with density
    \begin{align*}
        \hsp_Q( y|j)
        =\int  \sp_{\theta_j}(y)Q(\d\theta)
        =\int \e^{-\ell_t(\theta;(j,y))}Q(\d\theta).
    \end{align*}
Then the conditional expectation of the excess predictive risk is
    \begin{align*}
        \cM_t(Q)-\cM_t(\delta_{\theta^\star})
        %&=\E\sbr{-\log \int \e^{-\ell_t(\theta;(X_t,Y_t))}Q_{t-1}(\d\theta)-\ell_t(\theta^\star;(X_t,Y_t))|\cG_{t-1}}\\
         &=\E\sbr{\log \frac{\e^{-\ell_t(\theta^\star;(X_t,Y_t))}}{\int \e^{-\ell_t(\theta;(X_t,Y_t))}Q(\d\theta)}|\cG_{t-1}}\\
        &=\sum_{j\ge1}p_j\int \log\frac{\sp_{\theta^\star_j}(y)}{   \hsp_Q( y|j)} \bP_{\theta^\star_j}(\d y)\\
        &=\sum_{j\ge1}p_j\KL\del[1]{\bP_{\theta^\star_j}\|\hbP_Q(\cdot|j) },
    \end{align*}
which is the expected KL divergence between the predictive distribution and the true conditional distribution.

We consider the prediction metric given by
    \begin{equation}\label{eq:dp-metric}
    \dist_p^2(\theta,\theta')
    =\sum_{j\ge1} p_j(\theta_j-\theta_j')^2,
    \end{equation}
and $\Wp$ denotes the associated Wasserstein distance. The next theorem shows that \cref{ass:predictive-stability} is satisfied for this exponential family sequence model

\begin{theorem}
\label{thm:sequence-stability}
Under \cref{ass:sequence_model}, there is an absolute constant $L_{\cM}>0$ such that
    \begin{align*}
    \cM_t(Q)-\cM_t(P)
    \le L_{\cM}\cbr{  \Wp(P,\delta_{\theta^\star})\Wp(P,Q)+\Wp^2(P,Q) }
\end{align*}
almost surely for every $t\in\bN$ and for any two product measures $P=\bigotimes_{j\ge 1}P_j$ and $Q=\bigotimes_{j\ge 1}Q_j$ on $\prod_{j\ge 1}[-B,B]$.
\end{theorem}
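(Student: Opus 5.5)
The plan is to reduce everything to one coordinate at a time. Only $\theta_{X_t}$ enters the loss, and $P,Q$ are product measures, so for $X_t=j$ the predictive density is $\hsp_Q(y|j)=\int \sp_{u}(y)Q_j(\d u)$. By the KL representation derived above,
\begin{align*}
\cM_t(Q)-\cM_t(P)=\sum_{j\ge1}p_j\,\Delta_j,
\qquad
\Delta_j:=\int \log\frac{\hsp_P(y|j)}{\hsp_Q(y|j)}\,\bP_{\theta^\star_j}(\d y).
\end{align*}
This expression is deterministic, since $X_t$ is i.i.d. and independent of $\cG_{t-1}$. So it suffices to prove a one-dimensional estimate
\begin{align*}
\Delta_j\le C\bigl\{W_2(P_j,\delta_{\theta^\star_j})\,W_2(P_j,Q_j)+W_2^2(P_j,Q_j)\bigr\},
\end{align*}
with $C$ depending only on $B,\underline I,\overline I$, where $W_2$ is taken on $[-B,B]$ with the Euclidean metric. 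Summing against $p_j$ and applying Cauchy--Schwarz in $\ell^2(p)$ then gives
\begin{align*}
\textstyle\sum_j p_j\, a_jb_j\le(\sum_j p_ja_j^2)^{1/2}(\sum_j p_jb_j^2)^{1/2}.
\end{align*}
To identify the right-hand side, note that for product measures
\begin{align*}
\Wp^2(P,Q)=\sum_j p_jW_2^2(P_j,Q_j),
\end{align*}
because a coordinatewise product of optimal couplings attains it and the cost separates. Likewise $\Wp^2(P,\delta_{\theta^\star})=\sum_j p_jW_2^2(P_j,\delta_{\theta^\star_j})$. This yields the claim.

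For the one-dimensional estimate I would use the machinery of \cref{prop:primitive-stability} with $\Theta=[-B,B]$, $\eta=1$, and loss $\ell(u;y)=A(u)-yu$. I would interpolate along the optimal coupling $\pi$ of $(P_j,Q_j)$: set $R_s$ to be the law of $(1-s)U+sV$ for $(U,V)\sim\pi$, and $f(s)=\E_y[-\log\int e^{-\ell(u;y)}R_s(\d u)]$. Differentiating gives
\begin{align*}
f'(s)=\E_y\,\E_{\tilde R_{s,y}}\bigl[\ell'(W_s;y)(V-U)\bigr],
\end{align*}
where $W_s=(1-s)U+sV$ and $\tilde R_{s,y}$ is the tilt of $\pi$ by $e^{-\ell(W_s;y)}$. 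Then $f(1)-f(0)=\int_0^1 f'(s)\,\d s$. I would write $\ell'(W_s;y)=\ell'(\theta^\star_j;y)+[\ell'(W_s;y)-\ell'(\theta^\star_j;y)]$. The second piece is bounded by $\overline I|W_s-\theta^\star_j|$, which contributes $\lesssim W_2(P_j,\delta)W_2+W_2^2$ after Cauchy--Schwarz, provided the tilt changes measures by at most a factor $e^{D}$ and $e^{D}$ is integrable. For the first piece, $\ell'(\theta^\star_j;y)=A'(\theta^\star_j)-y$ has mean zero under $\bP_{\theta^\star_j}$. Its covariance with the tilt density is then controlled by the variation of $e^{-\ell(\cdot;y)}$ over the support, which is of order $|y|$ times the spread $|W_s-\theta^\star_j|$. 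This again gives a product term.

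The main obstacle is the envelope condition \eqref{eq:primitive-envelope}. Here $D=\sup_{u,v\in[-B,B]}|\ell(u;y)-\ell(v;y)|\le 2B|y|+C$, so I need $\E_{\theta^\star_j}[e^{2B|Y|}(\overline I+Y^2)]$ to be bounded uniformly over $|\theta^\star_j|<B$. For a canonical exponential family with $A$ finite on all of $\R$, $\E_\theta e^{\lambda Y}=e^{A(\theta+\lambda)-A(\theta)}$. This is finite and continuous in $\theta$, so taking $\lambda=\pm 3B$ (say) and using compactness of $[-B,B]$ gives a uniform bound, and the polynomial factor $Y^2$ is absorbed. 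With this in hand, \cref{prop:primitive-stability} in dimension $d=1$, with stationarity $\E[\ell'(\theta^\star_j;Y)]=A'(\theta^\star_j)-\E Y=0$, applies directly. The only caveat is that it requires $\theta^\star_j\in\operatorname{int}[-B,B]$, which holds since $|\theta^\star_j|<B$. Its constant depends only on $C_{\rm env}$ and $\operatorname{diam}=2B$, so it is uniform in $j$. That uniformity is what makes the summation step valid.
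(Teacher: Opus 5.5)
Your proposal is correct and is essentially the paper's proof. Both use the decomposition $\cM_t(Q)=\sum_j p_j\widetilde\cM_j(Q_j)$, apply \cref{prop:primitive-stability} coordinatewise on $[-B,B]$ with $\eta=1$ and envelopes $H=\overline I$, $G=M_A+|y|$, $D=2BM_A+2B|y|$ (integrable because $A$ is finite on all of $\R$), and finish with \cref{lem:product-w2} and Cauchy--Schwarz in $\ell^2(p)$. Your explicit check that the envelope moment is bounded uniformly over $\theta^\star_j\in[-B,B]$, via continuity of $\theta\mapsto A(\theta\pm 3B)-A(\theta)$, is what makes the constant independent of $j$, and it fills in a point the paper leaves implicit. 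One small correction: that constant depends on $A$ beyond $[-B,B]$ (through $M_A$ and these moment-generating values), not only on $B,\underline I,\overline I$ as you state, which is harmless.
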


\subsection{Conjugate prior and posterior}

We consider a Diaconis--Ylvisaker-form conjugate prior \citep{diaconis1979conjugate} but here we restrict it to the fixed common interval $[-B,B]$.  Note that because this indicator does not depend on the data, the coordinatewise posterior update remains in the same truncated family. Specifically, we consider the prior distribution  $\Pi_0=\bigotimes_{j\ge1}\Pi_{0,j}$, where each coordinate prior is of the form
    \begin{equation}\label{eq:dy-prior}
    \Pi_{0,j}(\d\theta)\propto 
    \exp(\kappa_jA'(0)\theta-\kappa_jA(\theta))\ind_{[-B,B]}(\theta)\d\theta
    \end{equation}
for $\kappa_j\ge1$. We assume the following oracle choice of the hyperparameter $\kappa_j$ of the prior.

\begin{assumption}[Diaconis--Ylvisaker conjugate prior]
\label[assumption]{ass:dy_prior}
For every $j\in\bN$,  $\kappa_j\asymp j^{1+2\fs}$.
\end{assumption}

Due to conjugacy, the posterior distribution is given by $\Pi_t=\bigotimes_{j\ge1}\Pi_{t,j}$ with 
    \begin{equation}\label{eq:sequence-posterior}
    \Pi_{t,j}(\d\theta)
    \propto\exp\del{\{\kappa_jA'(0)+S_{t,j}\}\theta- \{\kappa_j+N_{t,j}\}A(\theta)}
    \ind_{[-B,B]}(\theta)\d\theta.
\end{equation}
where we let
     \begin{align*}
     N_{t,j}=\sum_{s=1}^t\ind\{X_s=j\},
    \qquad
     S_{t,j}=\sum_{s=1}^tY_s\ind\{X_s=j\}.
    \end{align*}

\begin{example}[Beta--Bernoulli sequence model]\label{cor:beta-bernoulli}
Let $Y_t|X_t=j\sim\mathrm{Bernoulli}(q_j)$ with $q_j=\sigma(\theta_j)$ for $\theta_j\in[-B,B]$, where $\sigma(\theta)=(1+\e^{-\theta})^{-1}$ denotes the logistic function. Then $A(\theta)=\log(1+\e^\theta)$ and $A''(\theta)=\sigma(\theta)\{1-\sigma(\theta)\}$ is bounded above and away from zero on $[-B,B]$. So \eqref{eq:A-regularity} is satisfied. Under the change of variable $q=\{1+\e^{-\theta}\}^{-1}$, the prior \eqref{eq:dy-prior} is a $\texttt{Beta}(\kappa_j/2,\kappa_j/2)$ law truncated to $[\sigma(-B),\sigma(B)]$, and the posterior is the corresponding truncated Beta update.
\end{example}

To ease the notation, we set
    \begin{equation}\label{eq:r-def}
    \zeta:=\fa+2\fs+1
     \end{equation}
throughout this section.

\begin{theorem}[Statistical guarantees for the posterior]
\label{thm:sequence-statistical}
Under \cref{ass:sequence_model,ass:dy_prior}, the exact posterior satisfies
    \begin{equation}\label{eq:sequence-contraction}
    \E\sbr{\Wp^2(\Pi_t,\delta_{\theta^\star})}
    \lesssim  t^{-(\zeta-1)/\zeta}
    \end{equation}
and
    \begin{equation}\label{eq:sequence-benchmark}
    \fR\del[1]{(\Pi_t)_{t=0}^{T-1}}
    \lesssim T^{1/\zeta}.
    \end{equation}
\end{theorem}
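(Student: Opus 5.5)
The contraction bound \eqref{eq:sequence-contraction} and the regret bound \eqref{eq:sequence-benchmark} will be proved separately, both coordinatewise. Each uses the product structure $\Pi_t=\bigotimes_j\Pi_{t,j}$ and the fact that $N_{t,j}\sim\mathrm{Bin}(t,p_j)$.

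\emph{Contraction.} Since $\delta_{\theta^\star}$ is a point mass,
\[
\W_{2,p}^2(\Pi_t,\delta_{\theta^\star})=\sum_j p_j\int(\theta_j-\theta_j^\star)^2\Pi_{t,j}(\d\theta_j),
\]
so it suffices to bound each one-dimensional posterior second moment about $\theta^\star_j$. The $j$th posterior has log-density $(\kappa_j+N_{t,j})\{\bar y_{t,j}\theta-A(\theta)\}$ on $[-B,B]$, where
\[
\bar y_{t,j}=\frac{\kappa_jA'(0)+S_{t,j}}{\kappa_j+N_{t,j}}.
\]
By \eqref{eq:A-regularity} this log-density is $(\kappa_j+N_{t,j})\underline I$-strongly concave, with mode $\hat\theta_j$ satisfying $|\hat\theta_j-\theta_j^\star|\le \underline I^{-1}|\bar y_{t,j}-A'(\theta_j^\star)|$; truncation to the convex interval only moves the mode toward $\theta^\star_j$. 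A strongly log-concave concentration bound (for instance Brascamp--Lieb plus the mode bound) then gives
\[
\int(\theta-\theta_j^\star)^2\d\Pi_{t,j}\lesssim \frac1{\kappa_j+N_{t,j}}+(\bar y_{t,j}-A'(\theta_j^\star))^2 .
\]
Conditionally on $N_{t,j}$, the bias--variance split is
\[
\E(\bar y_{t,j}-A'(\theta^\star_j))^2\lesssim \frac{N_{t,j}\overline I+\kappa_j^2(\theta_j^\star)^2\overline I^2}{(\kappa_j+N_{t,j})^2}.
\]
Taking expectations in $N_{t,j}$, for example via $\E[1/(\kappa_j+N)]\lesssim 1/(\kappa_j+tp_j)$ from binomial concentration, gives per-coordinate error
\[
\frac{1}{\kappa_j+tp_j}+\frac{\kappa_j^2(\theta_j^\star)^2}{(\kappa_j+tp_j)^2}.
\]
Multiply by $p_j\asymp j^{-\fa}$ and sum. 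Define $J$ by $tp_J\asymp\kappa_J$, i.e.\ $t\asymp J^{\zeta}$ with $J=t^{1/\zeta}$. For $j\le J$ the variance part is about $1/t$, summing to $J/t$. For $j>J$ it is $p_j/\kappa_j\asymp j^{-\zeta}$, summing to $J^{1-\zeta}$. The bias part is at most
\[
p_j(\theta_j^\star)^2\min\{1,\kappa_j^2/(tp_j)^2\}.
\]
For $j>J$ this is bounded by $J^{-\fa-2\fs}\sum j^{2\fs}\theta_j^{\star2}$. For $j\le J$ use
\[
\frac{p_j\kappa_j^2}{t^2p_j^2}=\frac{j^{2+4\fs+\fa}}{t^2}\le \frac{j^{2\fs}\,J^{2+2\fs+\fa}}{t^2},
\]
which gives the same order. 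All pieces are $\lesssim J^{1-\zeta}=t^{-(\zeta-1)/\zeta}$.

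\emph{Regret.} I will apply \cref{prop:gibbs-benchmark} with a product $\rho=\bigotimes_j\rho_j$, taking $\rho_j=\Pi_{0,j}$ for $j>J_T$ and $\rho_j$ uniform on a small interval $[\theta_j^\star-\delta_j,\theta_j^\star+\delta_j]\cap[-B,B]$ for $j\le J_T$, with $J_T=T^{1/\zeta}$. Since $\E[\ell_t(\theta)-\ell_t(\theta^\star)]=\sum_jp_j\{A(\theta_j)-A(\theta_j^\star)-A'(\theta_j^\star)(\theta_j-\theta_j^\star)\}\le\frac{\overline I}2\sum_j p_j(\theta_j-\theta_j^\star)^2$, the loss term is at most
\[
\tfrac{\overline I}{2}T\Bigl[\sum_{j\le J_T}p_j\delta_j^2+\sum_{j>J_T}p_j\int(\theta-\theta_j^\star)^2\d\Pi_{0,j}\Bigr].
\]
The tail coordinates cost no KL. By the same strongly log-concave bound, the prior has second moment about $\theta^\star_j$ of order $1/\kappa_j+(\theta_j^\star)^2$, so the tail term is $T(J_T^{1-\zeta}+J_T^{-\fa-2\fs})\lesssim T^{1/\zeta}$.

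For $j\le J_T$, take $\delta_j^2=1/(Tp_j)\wedge$ const, or more simply $\delta_j\asymp T^{-1}$. The KL cost $\log(\Pi_{0,j}[\text{interval}]^{-1})$ is then $O(\log T+\kappa_j(\theta_j^\star)^2+\log\kappa_j)$, using that the prior density near $\theta^\star_j$ is at least $\sqrt{\kappa_j}\,\e^{-c\kappa_j(\theta_j^\star)^2}$. Summing over $j\le J_T$ gives $J_T\log T+\sum_{j\le J_T} j^{1+2\fs}\theta_j^{\star2}\lesssim J_T\log T+J_T R^2$.

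\emph{Main obstacle.} The main obstacle is removing the $\log T$ factor in this last step. I would instead choose $\delta_j^2\asymp 1/(Tp_j+\kappa_j)$ and center the interval at $\theta^\star_j$. The KL per coordinate then becomes $\tfrac12\log(1+Tp_j/\kappa_j)+O(\kappa_j\theta_j^{\star2})+O(1)$. Summing, $\sum_{j\le J_T}\log(Tp_j/\kappa_j)=\sum_{j\le J_T}\zeta\log(J_T/j)\lesssim J_T$, while the loss side is $T\sum_{j\le J_T}p_j\delta_j^2\lesssim J_T$. Everything is then $O(T^{1/\zeta})$.
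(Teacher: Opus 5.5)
Your proposal is correct and essentially matches the paper's proof. For contraction, both arguments use a per-coordinate strongly log-concave second-moment bound, binomial resolvents, and a split at $t^{1/\zeta}$. For regret, both apply \cref{prop:gibbs-benchmark} with a product comparator that keeps the prior for $j>T^{1/\zeta}$ and localizes at scale $(\kappa_j+Tp_j)^{-1/2}$ for $j\le T^{1/\zeta}$ (your refined choice of $\delta_j$, not the first $\log T$ attempt), and use $\sum_{j\le T^{1/\zeta}}\log(1+Tp_j/\kappa_j)\lesssim T^{1/\zeta}$ to remove the logarithm. The remaining differences are cosmetic: you use a uniform comparator instead of a truncated Gaussian, and Brascamp--Lieb plus a mode bound instead of the paper's divergence-theorem moment lemma.
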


\subsection{Truncated posterior}\label{sec:spectral-algorithm}

Implementing the exact posterior requires storing sufficient statistics and computing posterior distributions for every distinct coordinate observed so far. Under the assumed regime $p_j\asymp j^{-\fa}$ the expected number of such coordinates is of order $T^{1/\fa}$. To improve computational efficiency, here we  propose an online approximate posterior that updates only the first $m\ll T^{1/\fa}$  coordinates (in our theoretical analysis, the oracle truncation level is of order $T^{1/(\fa+2\fs+1)}$), and preserves the original prior on the unresolved tail, which we call \textit{truncated posterior}. 

\begin{onlinealgorithm}[Online truncated posterior]
\label[onlinealgorithm]{alg:spectral}
Fix a truncation level $m\in \bN$. Store $(N_{t,j},S_{t,j})$ for $j\le m$. Before observing $Y_t$, after seeing $X_t=j$, output the truncated posterior
    \begin{equation}\label{eq:spectral-Q}
    Q_{t-1}^{(m)}
    =\bigotimes_{j\le m}\Pi_{t-1,j}\otimes \bigotimes_{j>m}\Pi_{0,j}
    \end{equation}
After observing $Y_t$, update $(N_{t,j},S_{t,j})$ only if $j\le m$.
\end{onlinealgorithm}

\cref{alg:spectral} uses $O(m)$ memory and $O(1)$ state-update work per observation, aside from evaluation of a one-dimensional conjugate normalizer ratio.

\begin{theorem}[Truncation error]\label{prop:spectral-error}
Under the same assumptions of \cref{thm:sequence-statistical}, the truncated posterior computed by  \cref{alg:spectral} satisfies
    \begin{equation}\label{eq:spectral-error}
    \E\sbr{\Wp^2(Q_t^{(m)},\Pi_t)}
    \lesssim m^{1-\zeta}+t^{-(\zeta-1)/\zeta}.
\end{equation}
\end{theorem}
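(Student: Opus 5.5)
Since $Q_t^{(m)}$ and $\Pi_t$ agree on the first $m$ coordinates and differ only on the tail, I would bound the Wasserstein distance by an explicit coupling. Use the identity coupling on the coordinates $j\le m$. On each tail coordinate $j>m$, use an arbitrary coupling of $\Pi_{0,j}$ and $\Pi_{t,j}$; the independent coupling suffices. Because both measures are products, this yields a valid coupling of $Q_t^{(m)}$ and $\Pi_t$, and hence
\begin{align*}
\Wp^2(Q_t^{(m)},\Pi_t)\le \sum_{j>m}p_j\int\!\!\int(\theta-\theta')^2\,\Pi_{0,j}(\d\theta)\,\Pi_{t,j}(\d\theta').
\end{align*}
Inserting $\theta^\star_j$ and using $(a+b)^2\le 2a^2+2b^2$, the integrand splits into two tail sums:
\begin{align*}
\Wp^2(Q_t^{(m)},\Pi_t)\le 2\sum_{j>m}p_j\int(\theta-\theta^\star_j)^2\Pi_{0,j}(\d\theta)+2\sum_{j>m}p_j\int(\theta'-\theta^\star_j)^2\Pi_{t,j}(\d\theta').
\end{align*}

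The second sum is at most $2\,\Wp^2(\Pi_t,\delta_{\theta^\star})$, because for product measures the Wasserstein distance to a Dirac mass is exactly the full weighted second-moment sum. After taking expectations, \cref{thm:sequence-statistical} bounds this by $t^{-(\zeta-1)/\zeta}$. This accounts for the second term in \eqref{eq:spectral-error}.

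The first sum is deterministic, and bounding it is the main step. The prior $\Pi_{0,j}$ is the Diaconis--Ylvisaker law with potential $\kappa_j\{A(\theta)-A'(0)\theta\}$ restricted to $[-B,B]$. This potential is $\kappa_j\underline I$-strongly convex on $[-B,B]$ and is minimized at $\theta=0\in(-B,B)$. A strongly log-concave density restricted to an interval around its mode satisfies $\int\theta^2\Pi_{0,j}(\d\theta)\lesssim \kappa_j^{-1}$, for example via Brascamp--Lieb or a direct Laplace-type bound with the truncation only helping. Therefore
\begin{align*}
\int(\theta-\theta^\star_j)^2\Pi_{0,j}(\d\theta)\le 2\kappa_j^{-1}\underline I^{-1}+2(\theta^\star_j)^2\lesssim j^{-(1+2\fs)}+(\theta^\star_j)^2,
\end{align*}
where the last step uses \cref{ass:dy_prior}. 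Multiplying by $p_j\asymp j^{-\fa}$ and summing over $j>m$ gives two pieces. The first is $\sum_{j>m}j^{-\fa-1-2\fs}\asymp m^{-(\zeta-1)}=m^{1-\zeta}$. The second satisfies
\begin{align*}
\sum_{j>m}j^{-\fa}(\theta_j^\star)^2\le m^{-\fa-2\fs}\sum_{j}j^{2\fs}(\theta_j^\star)^2\le R^2 m^{1-\zeta},
\end{align*}
using the Sobolev ellipsoid condition in \eqref{eq:sobolev-ball}. Adding the two bounds proves \eqref{eq:spectral-error}.

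The only delicate point is the uniform-in-$j$ bound on the prior second moment. The constant must depend only on $\underline I$ and $B$, and the truncation to $[-B,B]$ must not break the bound. I would handle this by noting that the restricted density is still log-concave with the same strong convexity constant, so the variance bound holds. The mean is no farther than $O(\kappa_j^{-1/2})$ from $0$, because the untruncated mode is $0$ and restricting to a symmetric interval containing the mode shifts the mean by at most the standard deviation scale.
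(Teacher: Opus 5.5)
Your proposal is correct and follows essentially the same route as the paper. The paper uses the product identity of \cref{lem:product-w2} and the Wasserstein triangle inequality through $\delta_{\theta^\star_j}$. Your identity-on-head, independent-on-tail coupling with $(a+b)^2\le 2a^2+2b^2$ produces the same two tail sums, which are then handled exactly as in the paper: \cref{thm:sequence-statistical} for the posterior part, and $p_j\kappa_j^{-1}\asymp j^{-\zeta}$ plus the Sobolev constraint for the prior part. The one difference is the prior second-moment bound: the paper applies \cref{lem:strong-convex-moment} at $\theta_0=0$, where the potential's derivative $\kappa_j\{A'(0)-A'(0)\}$ vanishes, and this gives $\int\theta^2\,\Pi_{0,j}(\d\theta)\lesssim\kappa_j^{-1}$ directly, so your Brascamp--Lieb plus mean-shift detour (whose stated justification is somewhat informal) is unnecessary.
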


\begin{theorem}[Fast regret of online truncated posterior]\label{thm:spectral-fast}
Under the same assumptions of \cref{thm:sequence-statistical}, the truncated posterior computed by  \cref{alg:spectral} satisfies
    \begin{equation}\label{eq:spectral-regret-general}
    \fR\del[1]{(Q_t^{(m)})_{t=0}^{T-1}}
    \lesssim   T^{1/\zeta}+T^{(\zeta+1)/(2\zeta)}m^{(1-\zeta)/2}+Tm^{1-\zeta}. 
    \end{equation}
In particular, when $m\gtrsim T^{1/\zeta}$,
    \begin{equation}\label{eq:spectral-fast-rate}
    \fR\del[1]{(Q_t^{(m)})_{t=0}^{T-1}}
    \lesssim T^{1/\zeta}.
    \end{equation}
\end{theorem}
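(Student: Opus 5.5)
The plan is to apply \cref{thm:main-transfer} with the metric $\dist_p$. Stability comes from \cref{thm:sequence-stability}; the exact regret and statistical radius come from \cref{thm:sequence-statistical}; and the computational radius comes from \cref{prop:spectral-error}.

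First I would check that \cref{thm:main-transfer} applies. Its proof only uses \cref{ass:predictive-stability} for the pairs $(P,Q)=(\Pi_{t-1},Q^{(m)}_{t-1})$. Both are product measures on $\prod_j[-B,B]$: the posterior by conjugacy \eqref{eq:sequence-posterior}, and the truncated posterior by \eqref{eq:spectral-Q}. Both are also $\cG_{t-1}$-measurable. So \cref{thm:sequence-stability} supplies exactly the needed inequality, applied conditionally and then in expectation. One point to handle is that $\E[\Wp(\Pi_t,\delta_{\theta^\star})\Wp(\Pi_t,Q_t)]$ must be bounded by $\varepsilon_t\alpha_t$; I would do this with Cauchy--Schwarz, as presumably in the proof of \cref{thm:main-transfer}. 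This gives
\[
\fR\del[1]{(Q_t^{(m)})_{t=0}^{T-1}}\le \fR\del[1]{(\Pi_t)_{t=0}^{T-1}}+L_{\cM}\sum_{t=1}^{T-1}\cbr{\varepsilon_t\alpha_t+\alpha_t^2},
\]
with $\varepsilon_t^2\lesssim t^{-(\zeta-1)/\zeta}$ by \eqref{eq:sequence-contraction} and $\alpha_t^2\lesssim m^{1-\zeta}+t^{-(\zeta-1)/\zeta}$ by \eqref{eq:spectral-error}.

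Next I would sum the series. The first term is $\fR((\Pi_t))\lesssim T^{1/\zeta}$ by \eqref{eq:sequence-benchmark}. Write $\beta=(\zeta-1)/(2\zeta)$, so that $\varepsilon_t\lesssim t^{-\beta}$ and $\alpha_t\lesssim m^{(1-\zeta)/2}+t^{-\beta}$. Then
\[
\varepsilon_t\alpha_t\lesssim t^{-\beta}m^{(1-\zeta)/2}+t^{-2\beta}, \qquad \alpha_t^2\lesssim m^{1-\zeta}+t^{-2\beta}.
\]
Since $\zeta>1$, we have $\beta<1/2$, and hence $\sum_{t\le T}t^{-\beta}\lesssim T^{1-\beta}=T^{(\zeta+1)/(2\zeta)}$ and $\sum_{t\le T}t^{-2\beta}\lesssim T^{1-2\beta}=T^{1/\zeta}$. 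Collecting terms yields $T^{1/\zeta}+T^{(\zeta+1)/(2\zeta)}m^{(1-\zeta)/2}+Tm^{1-\zeta}$, which is \eqref{eq:spectral-regret-general}.

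Finally, suppose $m\gtrsim T^{1/\zeta}$. Then $m^{1-\zeta}\lesssim T^{(1-\zeta)/\zeta}$, so $Tm^{1-\zeta}\lesssim T^{1/\zeta}$. Likewise $T^{(\zeta+1)/(2\zeta)}m^{(1-\zeta)/2}\lesssim T^{(\zeta+1)/(2\zeta)+(1-\zeta)/(2\zeta)}=T^{1/\zeta}$, which gives \eqref{eq:spectral-fast-rate}.

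There is no real obstacle, since all the heavy lifting is in the earlier results. The only delicate point is the applicability check in the first step: \cref{ass:predictive-stability} is stated for all pairs of measures, whereas \cref{thm:sequence-stability} gives it only for product measures on the box. I would therefore note explicitly that the proof of \cref{thm:main-transfer} invokes the stability bound solely at $(\Pi_{t-1},Q_{t-1}^{(m)})$, with $\W$ replaced by $\Wp$.
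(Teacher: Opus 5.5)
Your proposal is correct and follows the paper's proof essentially verbatim. You plug $\varepsilon_t\lesssim t^{-(\zeta-1)/(2\zeta)}$ from \cref{thm:sequence-statistical} and $\alpha_t\lesssim m^{(1-\zeta)/2}+t^{-(\zeta-1)/(2\zeta)}$ from \cref{prop:spectral-error} into \cref{thm:main-transfer}, sum the resulting power series, and add the exact-Bayes bound \eqref{eq:sequence-benchmark}, just as the paper does. Your explicit remark is also correct: \cref{thm:main-transfer} only needs stability at the product pairs $(\Pi_{t-1},Q^{(m)}_{t-1})$, so \cref{thm:sequence-stability} suffices, a point the paper leaves implicit.
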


It is worth emphasizing that the truncation here is computational rather than statistical. The last two rate terms in \eqref{eq:spectral-regret-general} monotonically decrease as the number of active coordinates $m$ grows. Whenever the order of $m$ is larger than  $T^{1/\zeta}$, the resulting regret is minimax optimal (see the next theorem). However, large $m$ requires more memory. The choice $m\asymp T^{1/\zeta}$ balances the statistical rate and computational burden, reducing expected memory from $T^{1/\fa}$ to $T^{1/\zeta}=T^{1/(\fa+2\fs+1)}$.

\begin{theorem}[Minimax lower bound]
\label{thm:sequence_minimax}
If $\widehat p_t(\cdot|X_t,\cG_{t-1})$ ranges over all sequential predictive densities that may use the revealed context $X_t$ and past history $\cG_{t-1}$, then
    \begin{align*}
    \inf_{(\widehat p_t)}\sup_{\theta^\star\in\ThetaSob}
    \E\sbr{ \sum_{t=1}^T\log\frac{p_{\star,X_t}(Y_t)}{\widehat p_t(Y_t| X_t,\cG_{t-1})}}
    \gtrsim T^{1/\zeta}.
    \end{align*}
\end{theorem}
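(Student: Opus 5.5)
We will prove the lower bound with a standard Bayesian (Assouad-type) reduction: it suffices to exhibit a prior on a finite hypercube inside $\ThetaSob$ whose Bayes cumulative KL risk is $\gtrsim T^{1/\zeta}$, since the minimax risk dominates any Bayes risk. First, the cumulative log-loss regret conditional on the history equals a sum of expected KL risks:
\[
\E\Bigl[\sum_{t=1}^T\log\frac{p_{\star,X_t}(Y_t)}{\widehat p_t(Y_t|X_t,\cG_{t-1})}\Bigr]
=\sum_{t=1}^T\E\Bigl[\sum_{j\ge1}p_j\KL\bigl(\bP_{\theta^\star_j}\|\widehat p_t(\cdot|j,\cG_{t-1})\bigr)\Bigr],
\]
exactly as in the computation preceding \eqref{eq:dp-metric}. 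We then use the chain-rule identity: for a prior $\mu$ on $\theta^\star$, the Bayes-optimal sequential predictor is the Bayes mixture, and its cumulative regret equals the mutual information $I(\theta^\star;(X_t,Y_t)_{t\le T})$ (contexts carry no information, so this is $I(\theta^\star;Y_{1:T}\mid X_{1:T})$). Hence the minimax regret is at least $\sup_\mu I_\mu(\theta^\star;Y_{1:T}\mid X_{1:T})$, reducing the problem to a capacity lower bound.

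\emph{Construction.} Choose $m\asymp T^{1/\zeta}$ and, for $j\in\{m+1,\dots,2m\}$, let $\theta_j^\star$ be i.i.d. uniform on a grid or interval $[-\delta_j,\delta_j]$ with $\delta_j\asymp j^{-\fs-1/2}$ (times a small constant), all other coordinates zero. Then $\sum_j j^{2\fs}\theta_j^2\lesssim\sum_{j\le 2m} j^{-1}c$; to stay inside $R^2$ we instead take $\delta_j\asymp m^{-\fs-1/2}$ uniformly, giving $\sum_{j\le 2m}j^{2\fs}\delta_j^2\lesssim m\cdot m^{2\fs}m^{-2\fs-1}=O(1)$, and $|\theta_j|<B$ for large $T$. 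Because coordinates are independent under $\mu$ and the observations on coordinate $j$ only depend on $\theta_j$, the mutual information splits as $\sum_{j=m+1}^{2m}I(\theta_j;\text{obs. on }j)$. Given $N_{T,j}=n$, coordinate $j$ is a one-dimensional regular exponential family with Fisher information in $[\underline I,\overline I]$ by \eqref{eq:A-regularity}, observed $n$ times with prior width $\delta$; by the standard one-dimensional estimate (e.g. Fano/Le Cam, or the Clarke--Barron lower bound $I\ge \frac12\log(1+c\,n\delta^2)$ via the Gaussian-like posterior variance), $I(\theta_j;\cdot\mid N_{T,j}=n)\gtrsim \min(1,n\delta^2)$ up to a log-type constant. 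With $\E N_{T,j}=Tp_j\asymp Tm^{-\fa}$ and $\delta^2\asymp m^{-2\fs-1}$, we get $n\delta^2\asymp T m^{-\zeta}\asymp1$ at $m\asymp T^{1/\zeta}$, so each of the $m$ coordinates contributes $\gtrsim1$ (using concentration of the binomial $N_{T,j}$ around its mean, e.g. $\Pr(N_{T,j}\ge Tp_j/2)\ge1/2$). Summing gives $\gtrsim m\asymp T^{1/\zeta}$.

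The main obstacle is the one-dimensional information lower bound $I(\theta_j;Y^{(n)})\gtrsim 1$ when $n\delta^2\asymp 1$, uniformly over the exponential family. I would do it concretely by a two-point/Le Cam style argument: $I(\theta;Y^{(n)})\ge$ the Bayes-risk reduction for estimating $\theta$, or more simply use the identity $I=\E\,\KL(P_{Y|\theta}\|P_Y)$ and lower bound it by a Bernoulli argument with $\theta$ uniform on $\{-\delta,\delta\}$: then $I\ge \log 2-h(P_e)$ where $P_e$ is the Bayes testing error, and since the likelihood ratio test between $\bP_{-\delta}^{\otimes n}$ and $\bP_{\delta}^{\otimes n}$ has error bounded away from $1/2$ when $n\delta^2\ge c_0$ (KL between them is $\asymp n\delta^2$ by bounded $A''$, and Hellinger affinity bounds the error from above), $I$ is bounded below by a constant after choosing the constant in $\delta$ appropriately. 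Using the two-point prior also simplifies the Sobolev and $|\theta_j|<B$ verification.
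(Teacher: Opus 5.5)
Your proposal is correct and follows essentially the same route as the paper. You use a uniform two-point hypercube prior of amplitude $\asymp m^{-\fs-1/2}$ on coordinates $m+1,\dots,2m$ with $m\asymp T^{1/\zeta}$, reduce the Bayes-average regret to $I(\omega;Y_{1:T}\mid X_{1:T})$, split it coordinatewise, control each two-point test via the Bhattacharyya affinity, and use binomial concentration of $N_j$. The only difference is cosmetic: you convert the testing bound into mutual information via Fano's $\log 2-h(P_e)$, whereas the paper uses Pinsker to get $I\ge\tfrac12\TV^2$.
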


\section{Online sparse variational Gaussian processes}
\label{sec:gp}

The preceding sequence example reduces representation cost by coordinate truncation. We now consider a different posterior approximation approach based on a variational Bayes algorithm for online nonparametric regression.

We introduce additional notation used in  this section. We write $\inn{f}{f'}:=\inn{f}{f'}_{L^2(\bP_X)}:=\int f(x)f'(x)\d \bP_X(x)$ denote the inner product on $L^2(\bP_X)$, and $\|f\|:=\|f\|_{L^2(\bP_X)}:=\sqrt{\inn{f}{f}}$ denote the associated norm. Moreover, $\W$ and $\opnorm{\cdot}$ denote the Wasserstein distance and the operator norm, respectively, both induced by the norm $\|\cdot\|$.

\subsection{Gaussian process regression}\label{sec:gp-model}

Let $\bP_X$ be a probability measure on a measurable input space $\cX$. At round $t$, we observe $X_t\stackrel{\mathrm{iid}}\sim \bP_X$ and then observe
    \begin{equation}\label{eq:gp-model}
    Y_t=f^\star(X_t)+\xi_t,
    \qquad  \xi_t\stackrel{\mathrm{iid}}\sim\N(0,\sigma_\xi^2),
    \end{equation}
where the noise $\xi_t$ is independent of $X_t$. To simplify the analysis, we assume the noise variance $\sigma_\xi^2$ is known to us. We use a negative log-density loss given by
    \begin{equation}\label{eq:gp-loss}
    \ell_t(f;(x,y))=\frac{1}{2\sigma_\xi^2}\{y-f(x)\}^2+\frac{1}{2}\log(2\pi\sigma_\xi^2)
    \end{equation}
and the inverse temperature of $\eta=1$. 

We impose a centered Gaussian process (GP) prior $\Pi_0$ on $f$ determined by its covariance kernel $\cK_0:\cX\times \cX\mapsto \R$, which has a Mercer decomposition
    \begin{align*}
    \sK_0(x,x')=\sum_{j\ge1}\lambda_j\psi_j(x)\psi_j(x'),
    \end{align*}
where $(\psi_j)_{j\ge1}$ is an orthonormal basis of $L_2(\bP_X)$. This is equivalent to considering a random series prior  $f=\sum_{j\ge1}\sqrt{\lambda_j}Z_j\psi_j$ with $Z_j\stackrel{\mathrm{iid}}\sim\N(0,1).$ The exact posterior is also a GP. Furthermore, the sparse GP posterior obtained using variational Bayes with inducing variables, which is our primary inference approach to be examined in the next subsection, is itself a GP.
%\begin{align*}
%\widehat f_t(x)&=\sK_0(x, X_{1:t}) (\sigma_\xi^2 I + \sK_0(X_{1:t},X_{1:t}))^{-1}Y_{1:t},\\
%\sK_t(x,x')&=\sK_0(x,x')-\sK_0(x, X_{1:t})(\sigma_\xi^2 I + \sK_0(X_{1:t},X_{1:t}))^{-1} \sK_0( X_{1:t},x').
%\end{align*}

For a GP $Q$, we let $m_Q(x)$ and $v_Q(x)$ denote the mean and variance of $f(x)$ for $f\sim Q$, respectively. The associated predictive distribution $\hbP_Q(\cdot|x)$ given $x\in\cX$ has a density
    \begin{align*}
     \hsp_{Q}(y|x) =\int \e^{-\ell_t(f;(x,y))} Q(\d f),
    \end{align*}
which is equal to the density of $\N( m_Q(x),\sigma_\xi^2+v_Q(x))$. When $Q$ is $\cG_{t-1}$-measurable, the excess conditional predictive risk is
    \begin{align*}
    \cM_t(Q)-\cM_t(\delta_{f^\star})
     &=\int \KL\del[1]{\N(f^\star(x),\sigma_\xi^2)\|\hbP_Q(\cdot|x)} \bP_X(\d x),
    \end{align*}
which is the expected KL divergence between the predictive distribution and the true conditional distribution. For this Gaussian regression setup, we cannot apply \cref{prop:primitive-stability} as the responses are unbounded. The next theorem overcomes this issue, which establishes second-order predictive stability with an additional remainder term.

\begin{theorem}[Predictive stability for GPs]
\label{thm:gp-predictive-comparison}
Let $P$ and $Q$ be Gaussian process laws on $L^2(\bP_X)$. Then there exists an absolute constant $L_{\cM}>0$ such that
    \begin{equation}\label{eq:gp-predictive-comparison}
    \cM_t(Q)-\cM_t(P)
    \le  L_{\cM}\cbr{ \W(P,\delta_{f^\star})\W(P,Q)+\W(P,Q)^2+\overline v(P,Q)\W(P,\delta_{f^\star})^2 }
\end{equation}
with $\overline v(P,Q):=\sup_{x\in\cX}\{v_P(x)+v_Q(x)\}$.
\end{theorem}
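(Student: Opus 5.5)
The plan is to work directly with the closed form of the Gaussian predictive KL, not through \cref{prop:primitive-stability}. For a GP $Q$ and $x\in\cX$ the predictive law is $\N(m_Q(x),\sigma_\xi^2+v_Q(x))$. Hence, pointwise in $x$,
\begin{align*}
\KL\del[1]{\N(f^\star(x),\sigma_\xi^2)\|\hbP_Q(\cdot|x)}
=\tfrac12\log\frac{\sigma_\xi^2+v_Q(x)}{\sigma_\xi^2}+\frac{\sigma_\xi^2+\{f^\star(x)-m_Q(x)\}^2}{2\{\sigma_\xi^2+v_Q(x)\}}-\tfrac12 .
\end{align*}
Since $\cM_t(Q)-\cM_t(P)$ is the $\bP_X$-integral of the difference of these expressions for $Q$ and $P$, I will split it into a mean part and a variance part. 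Write $k(s):=\tfrac12\log(\sigma_\xi^2+s^2)+\sigma_\xi^2/\{2(\sigma_\xi^2+s^2)\}$. The pointwise difference is then
\begin{align*}
\frac{(f^\star-m_Q)^2}{2(\sigma_\xi^2+v_Q)}-\frac{(f^\star-m_P)^2}{2(\sigma_\xi^2+v_P)}+k(s_Q)-k(s_P),
\end{align*}
where $s_Q=\sqrt{v_Q}$ and $s_P=\sqrt{v_P}$.

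First I record two elementary facts about Wasserstein distances of Gaussian processes. (a) $\W^2(P,\delta_{f^\star})=\E_{f\sim P}\|f-f^\star\|^2=\|m_P-f^\star\|^2+\int v_P\,\d\bP_X$. (b) For any coupling of $P$ and $Q$, integrating the pointwise second moments and using the one-dimensional Gaussian Wasserstein formula gives
\begin{align*}
\W^2(P,Q)\ge\int\cbr{(m_P-m_Q)^2+(s_P-s_Q)^2}\d\bP_X .
\end{align*}
Consequently $\|m_P-m_Q\|,\ \|s_P-s_Q\|\le\W(P,Q)$ and $\|m_P-f^\star\|,\ \|s_P\|\le\W(P,\delta_{f^\star})$.

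For the mean part, I add and subtract $(f^\star-m_P)^2/\{2(\sigma_\xi^2+v_Q)\}$ and bound the two resulting pieces separately.
\begin{itemize}
\item Using $(f^\star-m_Q)^2-(f^\star-m_P)^2=(m_P-m_Q)\{2(f^\star-m_P)+(m_P-m_Q)\}$ and $1/(\sigma_\xi^2+v_Q)\le\sigma_\xi^{-2}$, Cauchy--Schwarz bounds the first piece by $\sigma_\xi^{-2}\{\|m_P-m_Q\|\|f^\star-m_P\|+\|m_P-m_Q\|^2\}$.
\item The second piece is $(f^\star-m_P)^2\{(\sigma_\xi^2+v_Q)^{-1}-(\sigma_\xi^2+v_P)^{-1}\}/2$. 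It is at most $(f^\star-m_P)^2|v_P-v_Q|/(2\sigma_\xi^4)$, hence at most $\overline v(P,Q)\|f^\star-m_P\|^2/(2\sigma_\xi^4)$.
\end{itemize}
This second piece is exactly the source of the extra remainder $\overline v(P,Q)\W(P,\delta_{f^\star})^2$. It arises because, with unbounded responses, the change in predictive variance multiplies the squared bias and cannot be absorbed into $\W(P,Q)$ alone.

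The variance part is the step I expect to need the most care, because a naive bound $|v_Q-v_P|\cdot\max(v_P,v_Q)$ would produce $\overline v$ times a first-order term, which is not of product form. To avoid this I parametrize by standard deviations. We have $k'(s)=s^3/(\sigma_\xi^2+s^2)^2\ge0$ and $k'(s)\le s/\sigma_\xi^2$. Therefore
\begin{align*}
k(s_Q)-k(s_P)\le\sigma_\xi^{-2}(s_Q-s_P)_+\max(s_P,s_Q)\le\sigma_\xi^{-2}\cbr{|s_Q-s_P|s_P+(s_Q-s_P)^2}.
\end{align*}
Integrating and applying Cauchy--Schwarz gives at most $\sigma_\xi^{-2}\{\|s_P-s_Q\|\|s_P\|+\|s_P-s_Q\|^2\}$. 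By facts (a) and (b) this is bounded by $\sigma_\xi^{-2}\{\W(P,\delta_{f^\star})\W(P,Q)+\W^2(P,Q)\}$.

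Summing the mean and variance bounds yields \eqref{eq:gp-predictive-comparison} with $L_{\cM}$ depending only on $\sigma_\xi^2$, which is a fixed known constant. The remaining technical point is measurability and finiteness of $x\mapsto(m_Q(x),v_Q(x))$, needed so that the pointwise formulas can be integrated against $\bP_X$. I would handle this by noting that a GP law on $L^2(\bP_X)$ has $m_Q\in L^2(\bP_X)$ and $\int v_Q\,\d\bP_X<\infty$. If $\overline v(P,Q)=\infty$, the bound is trivial.
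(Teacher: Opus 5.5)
Your proof is correct, and its skeleton matches the paper's. Both use the closed-form Gaussian KL, split it into a mean part and a variance part, and treat the mean part with the same add-and-subtract step; its second piece is what produces the remainder $\overline v(P,Q)\W(P,\delta_{f^\star})^2$.

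The difference is in how $\W$ enters and how the variance part is handled. The paper works at the operator level:
\begin{itemize}
\item it uses the Hilbert-space Gaussian transport formula $\W^2(P,Q)=\|m_P-m_Q\|^2+\sd_{\rm BW}^2(\Sigma_P,\Sigma_Q)$;
\item it bounds the variance part by a Lipschitz constant times $\int|v_Q-v_P|\,\d\bP_X\le\|\Sigma_Q-\Sigma_P\|_1$;
\item it converts the trace norm into product form through \cref{lem:gp-bures}.
\end{itemize}
You use only the one-sided bound $\W^2(P,Q)\ge\int\{(m_P-m_Q)^2+(\sqrt{v_P}-\sqrt{v_Q})^2\}\,\d\bP_X$. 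You obtain it by restricting an arbitrary coupling to its one-dimensional marginals, and then argue pointwise in standard deviations.

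This is more elementary. It needs neither the infinite-dimensional Gaussian $W_2$ formula nor \cref{lem:gp-bures}. It uses only Gaussianity of $f(x)$ for $\bP_X$-a.e.\ $x$, plus a jointly measurable version of point evaluation so that Tonelli applies; the paper assumes the latter tacitly when it defines $v_Q(x)$. In exchange, the paper's route controls the larger quantity $\|\Sigma_Q-\Sigma_P\|_1$ and stays in the covariance-operator language of the rest of its GP analysis.

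The obstacle you anticipate with the naive bound does not actually arise. The map $v\mapsto k(\sqrt v)$ has derivative $v/\{2(\sigma_\xi^2+v)^2\}\le 1/(8\sigma_\xi^2)$. Combining this with $|v_Q-v_P|=|s_Q-s_P|(s_Q+s_P)$ and Cauchy--Schwarz already gives product form. Your monotonicity bound via $k'(s)\le s/\sigma_\xi^2$ is a one-sided variant of this pointwise identity, which is the scalar analogue of \cref{lem:gp-bures}.
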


We assume that the basis functions are bounded and the true regression function belongs to a Sobolev space.

\begin{assumption}[Sobolev regression function]
\label[assumption]{ass:gp-true}
 There exist absolute constants $B_\psi>0$,  $\fs>d$, and $R>0$ such that  the orthonormal basis $(\psi_j)_{j\ge1}$ of $\mathcal L^2(\bP_X)$ satisfies
    \begin{equation}\label{eq:gp-basis}
    \sup_{j\ge1}\sup_{x\in\cX}|\psi_j(x)|
    \le B_\psi<\infty,
    \end{equation}
and the true regression function $f^\star=\sum_{j\ge1}f_j^\star\psi_j$ satisfies
    \begin{equation}\label{eq:gp-truth}
    \sum_{j\ge1}j^{2\fs/d}(f_j^\star)^2\le R^2.
    \end{equation}
\end{assumption}

We further assume that the scale of the GP prior is suitably chosen according to the Sobolev smoothness $\fs$.

\begin{assumption}[GP prior]
\label[assumption]{ass:gp-eigenvalues}
For every $j\in\bN$, $\lambda_j\asymp j^{-1-2\fs/d}.$
\end{assumption}

Under the assumptions we have made, we derive the fast regret bounds for the exact GP posterior.

\begin{theorem}[Statistical guarantees for the exact GP posterior]
\label{thm:gp-statistical}
Under \cref{ass:gp-true,ass:gp-eigenvalues}, the exact GP posterior satisfies
    \begin{equation}\label{eq:gp-contraction}
    \E\sbr{\W^2(\Pi_t,\delta_{f^\star})}
    \lesssim t^{-2\fs/(d+2\fs)}
    \end{equation}
and 
    \begin{equation}\label{eq:gp-exact-regret}
    \fR\del[1]{(\Pi_t)_{t=0}^{T-1}}
    \lesssim T^{d/(d+2\fs)}.
    \end{equation}
\end{theorem}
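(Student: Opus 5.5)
The proof splits into the contraction bound \eqref{eq:gp-contraction} and the regret bound \eqref{eq:gp-exact-regret}. Both follow from the diagonal (sequence-space) representation $f=\sum_j\sqrt{\lambda_j}Z_j\psi_j$, with coordinates $f_j$ having prior variance $\lambda_j\asymp j^{-1-2\fs/d}$. Write $\tau:=d/(d+2\fs)$ and $J_t\asymp t^{\tau}$ for the effective dimension.

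\emph{Contraction.} Since $\Pi_t$ is Gaussian with mean $\widehat f_t$ and covariance operator $\Sigma_t$, we have
\begin{align*}
\W^2(\Pi_t,\delta_{f^\star})=\|\widehat f_t-f^\star\|^2+\operatorname{tr}(\Sigma_t).
\end{align*}
So it suffices to bound $\E\|\widehat f_t-f^\star\|^2$ and $\E\operatorname{tr}(\Sigma_t)$, the standard bias–variance analysis of kernel ridge regression with regularization $\sigma_\xi^2/t$. For the trace, $\operatorname{tr}(\Sigma_t)$ is a concave function of the empirical operator $\widehat\Sigma_{X,t}=t^{-1}\sum_s \phi(X_s)\otimes\phi(X_s)$. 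Jensen's inequality then gives
\begin{align*}
\E\operatorname{tr}(\Sigma_t)\le\sum_j \frac{\lambda_j\sigma_\xi^2}{\sigma_\xi^2+t\lambda_j}\asymp\sum_j\min(\lambda_j,\sigma_\xi^2/t)\asymp t^{-2\fs/(d+2\fs)}.
\end{align*}
For the mean, we split $\widehat f_t-f^\star$ into a bias part and a noise part. The noise part has expectation bounded by the same effective-dimension sum, by the usual computation $\E\|\cdot\|^2\lesssim \sigma_\xi^2 t^{-1}\,\E\operatorname{tr}[(\widehat\Sigma+\mu)^{-2}\widehat\Sigma]$ with $\mu=\sigma_\xi^2/(t\lambda)$ in the rescaled coordinates.

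The bias part is the main obstacle, because the design is random. We replace the empirical operator by the population one (identity in the $\psi_j$ basis) on the event $\|(\Sigma+\mu)^{-1/2}(\widehat\Sigma-\Sigma)(\Sigma+\mu)^{-1/2}\|_{\rm op}\le1/2$. The uniform bound \eqref{eq:gp-basis} and a Bernstein/matrix-concentration inequality make this event hold with probability $1-\e^{-ct/J_t}$ up to logs. The condition $\fs>d$ ensures that $\sum_j\lambda_j\psi_j^2$ and the effective dimension $\mathcal N(\mu)\asymp t^{\tau}$ are small enough relative to $t$. On this event, the bias is bounded by the source-condition bound $\sum_j (f_j^\star)^2\mu_j^2/(\lambda_j^{-1}\mu_j... )\lesssim R^2 t^{-2\fs/(d+2\fs)}$ using \eqref{eq:gp-truth}. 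On the complement, we use a crude polynomial bound on $\|\widehat f_t\|$, which is available since the posterior mean is a contraction-type map of bounded data, times an exponentially small probability. For small $t$, where the event may fail, the bound is trivial after adjusting constants.

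\emph{Regret.} We apply \cref{prop:gibbs-benchmark} with $\rho$ the prior conditioned on a small ball, or more conveniently the Gaussian $\rho=\N(f^\star_{\le J},\ \text{prior covariance shrunk on }j\le J)$ truncated at $J\asymp T^{\tau}$. Concretely, take $\rho$ to be the law of $f=\sum_{j\le J}(f_j^\star+\sqrt{\lambda_j}\,\delta Z_j)\psi_j+\sum_{j>J}\sqrt{\lambda_j}Z_j\psi_j$. For squared loss, $\E[\ell_t(f)-\ell_t(f^\star)]=\|f-f^\star\|^2/(2\sigma_\xi^2)$, so the first term of $\BT$ is
\begin{align*}
\frac{T}{2\sigma_\xi^2}\Big[\sum_{j\le J}\delta^2\lambda_j+\sum_{j>J}\{(f_j^\star)^2+\lambda_j\}\Big].
\end{align*}
This is $\lesssim T(J^{-2\fs/d}+\delta^2)$. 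The KL term is $\sum_{j\le J}\{(f_j^\star)^2/(2\lambda_j)+\tfrac12(\delta^2-1-\log\delta^2)\}$. Its first piece is $\lesssim \sum_{j\le J} j^{1+2\fs/d}(f_j^\star)^2\le J R^2$, and its second is $\lesssim J\log(1/\delta)$. Choosing $\delta^2=J/T$ and $J\asymp T^{\tau}$ gives $\BT\lesssim T^{\tau}\log T$. To remove the logarithm, we would instead use coordinate-wise variances $\delta_j^2=\min(1,1/(T\lambda_j))$, for which $\sum_j\log(1/\delta_j^2)$ over $j\le J$ sums to $O(J)$ because $T\lambda_j$ grows polynomially in $J/j$. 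This yields $\fR\lesssim T^{d/(d+2\fs)}$.
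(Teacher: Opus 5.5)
\textbf{Gap in the contraction bound: the trace step.} Your Jensen step goes the wrong way. Write $S_t=\sum_{s\le t}\psi(X_s)\psi(X_s)^\top$ with $\psi=(\psi_j)_{j\ge1}$ and $\Lambda=\operatorname{diag}(\lambda_j)_{j\ge1}$. Then $\tr(\Sigma_t)=\tr\bigl[\Lambda(I+\sigma_\xi^{-2}\Lambda^{1/2}S_t\Lambda^{1/2})^{-1}\bigr]$. Since $A\mapsto(I+A)^{-1}$ is operator \emph{convex} on positive operators, $\tr(\Sigma_t)$ is convex, not concave, in the empirical operator. Already in one coordinate, $s\mapsto\lambda/(1+\lambda s)$ is convex. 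Orthonormality gives $\E S_t=tI$, so Jensen yields $\E\tr(\Sigma_t)\ge\sum_j\lambda_j\sigma_\xi^2/(\sigma_\xi^2+t\lambda_j)$, which is a lower bound.

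Hence your trace bound is unproved. So is your noise term, which you reduce to the same quantity; conditionally on $X_{1:t}$ it is in fact at most $\tr(\Sigma_t)$. Some control of the random design is unavoidable. The paper works on the Gram event $\cE_t$ of \eqref{eq:event_gram_matrix}. There it shows $\opnorm{\Sigma_t}\ind_{\cE_t}\lesssim t^{-1}$ via a head/tail block bound, and then uses $\tr(\Sigma_t)\le J\opnorm{\Sigma_t}+\sum_{j>J}\lambda_j$ with $J=J_t^\dag$ (\cref{prop:gp-covariance}).

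A more elementary repair is also available. Under the Bayesian model $f\sim\Pi_0$, $\tr(\Sigma_t)=\E[\|f-\widehat f_t\|^2\mid X_{1:t}]$ is the conditional Bayes risk. It is therefore at most the risk of the projection estimator $\sum_{j\le J}\{t^{-1}\sum_sY_s\psi_j(X_s)\}\psi_j$. After averaging over $X_{1:t}$, \eqref{eq:gp-basis} bounds that risk by $\lesssim J/t+J^{-2\fs/d}$.

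\textbf{Bias term.} Here $f^\star$ need not lie in $\bH$: \eqref{eq:gp-truth} is a source condition with exponent $2\fs/(d+2\fs)<1$. Put $\mu=\sigma_\xi^2/t$ and $T_t=\Lambda^{1/2}(S_t/t)\Lambda^{1/2}$. In whitened form the conditional bias is $-\mu\Lambda^{1/2}(T_t+\mu)^{-1}v$ with $v=(\lambda_j^{-1/2}f_j^\star)_j$. But $\|v\|^2=\|f^\star\|_{\bH}^2$ may be infinite. So sandwiching $(T_t+\mu)^{-1}$ between $(\Lambda+\mu)^{1/2}$ factors on your concentration event cannot transfer the population bias bound.

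An extra step is needed. For example, split $f^\star=\sP_Jf^\star+(f^\star-\sP_Jf^\star)$. The head lies in $\bH$ with $\mu\|\sP_Jf^\star\|_{\bH}^2\lesssim J/t$, and your sandwich works for it. The tail needs a separate second-moment bound using $\sup_x\sum_j\frac{\lambda_j}{\lambda_j+\mu}\psi_j(x)^2\le B_\psi^2\mathcal N(\mu)$, where $\mathcal N(\mu)=\sum_j\lambda_j/(\lambda_j+\mu)$. Alternatively, cite Fischer--Steinwart as in \cref{lem:gp-krr}. On the complement event, the responses are not bounded (Gaussian noise). Use a moment bound such as $\E\|\widehat f_t-f^\star\|^4\lesssim t^2$, as the paper does.

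\textbf{Regret.} This part is correct and takes a genuinely different route. The paper gets \eqref{eq:gp-exact-regret} as a corollary of contraction: by \cref{lem:gp-pred-formula}, each round's excess risk is at most $\W^2(\Pi_{t-1},\delta_{f^\star})/(2\sigma_\xi^2)$, and these are summed over rounds. You instead bound $\BT$ in \cref{prop:gibbs-benchmark} directly with a product Gaussian comparator. It has mean $f_j^\star$ and variance $\min(\lambda_j,1/T)$ for $j\le J\asymp T^{d/(d+2\fs)}$, and the prior tail beyond $J$. This mirrors \cref{lem:coordinate-comparator} in the sequence model. Your risk and KL computations check out, including the step $\sum_{j\le J}\log(T\lambda_j)=O(J)$ that removes the logarithm.

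Your route is self-contained and needs no random-design analysis, so the regret half of the theorem stands despite the gaps above. The paper's route is shorter but inherits the full difficulty of contraction. The contraction bound is still needed in its own right, since it supplies $\varepsilon_t$ in \cref{thm:gp-fast}, so those gaps still have to be closed.
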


\subsection{Sparse variational Gaussian processes}\label{sec:gp-algorithm}

In practice, computing the GP posterior exactly becomes impractical for large $t$, because the computational cost grows cubically in $t$. To overcome this issue, we propose to use a sparse variational GP approach of \citet{titsias2009variational}, for online prediction. First, we choose the $J$-many inducing variables that are fixed for all $t\le T$. Define inducing variables as
    \begin{equation}\label{eq:gp-inducing}
    u_j:=\inn{f}{\psi_j},
    \qquad 1\le j\le J,
    \end{equation}
so that $u_{1:J}=(u_1,\ldots,u_J)^\top\sim\N(0,\Lambda_J)$ with
$\Lambda_J:=\operatorname{diag}(\lambda_1,\ldots,\lambda_J)$. Following \citet{titsias2009variational}, we consider a variational family $\mathcal Q_J$ which is the set of laws of the form
    \begin{equation}\label{eq:gp-var-family}
    Q(\d f,\d u)=\Pi_0(\d f\mid u)Q_u(\d u),
    \end{equation}
where $Q_u$ is a Gaussian distribution on $\R^J$. The sparse variational GP posterior is defined as
    \begin{equation}\label{eq:gp-variational-posterior}
    Q_t^{(J)}\in\mathop{\mathrm{argmin}}_{Q\in\mathcal Q_J}\KL(Q\|\Pi_t).
    \end{equation}
The minimizer  $Q_t^{(J)}$ in \eqref{eq:gp-variational-posterior} is the law of GP with mean and covariance functions given by (cf. see \citet{titsias2009variational, nieman2023uncertainty})
    \begin{align}
    \widehat f_t^{(J)}(x)&=\sigma_\xi^{-2}\psi_{1:J}(x)^\top V_{t,J}\Psi_{t,J}^\top Y_{1:t},\label{eq:gp-q-mean}\\
    \sK_t^{(J)}(x,x')
    &=\sK_0(x,x')-\psi_{1:J}(x)^\top\del{\Lambda_J-V_{t,J}}\psi_{1:J},(x')\label{eq:gp-q-covariance}
\end{align}
where we write $Y_{1:t}:=(Y_1,\dots, Y_t)^\top$, $\psi_{1:J}(x):=(\psi_1(x),\ldots,\psi_J(x))^\top$ and
     \begin{align*}
    V_{t,J}:=(\Lambda_J^{-1}+\sigma_\xi^{-2}\Psi_{t,J}^\top\Psi_{t,J})^{-1},
    \quad \Psi_{t,J}:=(\psi_j(X_i))_{i\le t, j\le J}.
    \end{align*}
Let $\widetilde\psi_{t+1}:=\psi_{1:J}(X_{t+1})$. Then the predictive distribution for $Y_{t+1}$ is given by
     \begin{align} \label{eq:gp-predictive-dist}
     \hbP_{Q_t^{(J)}}(\cdot|X_{t+1})
     =\N\del{\sigma_\xi^{-2}\widetilde\psi_{t+1}^\top V_{t,J}\Psi_{t,J}^\top Y_{1:t}, 
     \sigma_\xi^2 +\sK(X_{t+1},X_{t+1})-\widetilde\psi_{t+1}^\top\del{\Lambda_J-V_{t,J}}\widetilde\psi_{t+1}}.
     \end{align}

Since $\Psi_{t+1,J}^\top\Psi_{t+1,J}=\Psi_{t,J}^\top\Psi_{t,J}+\widetilde\psi_{t+1}\widetilde\psi_{t+1}^\top$, we can update $V_{t+1,J}$ from $V_{t,J}$ using Sherman–Morrison formula, as
    \begin{align} \label{eq:gp-vt-sherman}
    V_{t+1,J }= V_{t,J}
    -\frac{V_{t,J}\widetilde\psi_{t+1}\widetilde\psi_{t+1}^\top V_{t,J}}{\sigma_\xi^2+\widetilde\psi_{t+1}^\top V_{t,J} \widetilde\psi_{t+1}},
    \end{align}
which requires $O(J^2)$ computation. Based on this elementary observation, we propose an online algorithm that produces the sparse GP posterior \eqref{eq:gp-variational-posterior} exactly with $O(J^2)$ computational cost at each round. 

\begin{onlinealgorithm}[Online population-spectral sparse variational GP]
\label[onlinealgorithm]{alg:gp}
Fix $J$. For $t=0,1,\ldots,T$ we do:
\begin{enumerate}[label=(\roman*)]
\item after observing $X_{t+1}$, predict $Y_{t+1}$ using the predictive distribution $\hbP_{Q_t^{(J)}}(\cdot|X_{t+1})$ in \eqref{eq:gp-predictive-dist}.
\item update $V_{t+1,J}$ from $V_{t,J}$ according to \eqref{eq:gp-vt-sherman}.
\item after observing $Y_{t+1}$, compute $\Psi_{t+1,J}^\top Y_{1:(t+1)}=\Psi_{t,J}^\top Y_{1:t} + \widetilde\psi_{t+1} Y_{t+1}$.
\end{enumerate}
\end{onlinealgorithm}

\begin{theorem}[Approximation accuracy of sparse variational GP]
\label{thm:sparse-gp-accuracy}
Under the same assumption of \cref{thm:gp-statistical}, the sparse variational GP computed by \cref{alg:gp} with $J \gtrsim T^{d/(d+2\fs)}$  satisfies
    \begin{equation}\label{eq:gp-tracking}
    \E\sbr[1]{\W^2(Q_t^{(J)},\Pi_t)}
    \lesssim t^{-2\fs/(d+2\fs)}.
    \end{equation}
\end{theorem}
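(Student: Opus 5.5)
The plan is to avoid comparing the two Gaussian measures directly and instead use the triangle inequality for $\W$ through the point mass $\delta_{f^\star}$:
\begin{align*}
\W^2(Q_t^{(J)},\Pi_t)\le 2\W^2(Q_t^{(J)},\delta_{f^\star})+2\W^2(\Pi_t,\delta_{f^\star}).
\end{align*}
The second term is bounded by \cref{thm:gp-statistical}. It therefore suffices to show that the sparse variational posterior contracts at the same rate, $\E\W^2(Q_t^{(J)},\delta_{f^\star})\lesssim t^{-2\fs/(d+2\fs)}$. Since $Q_t^{(J)}$ is a GP,
\begin{align*}
\W^2(Q_t^{(J)},\delta_{f^\star})=\|\widehat f_t^{(J)}-f^\star\|^2+\operatorname{tr}\operatorname{Cov}(Q_t^{(J)}).
\end{align*}
By \eqref{eq:gp-q-covariance} and orthonormality of $(\psi_j)$ in $L^2(\bP_X)$, the trace equals $\sum_{j>J}\lambda_j+\operatorname{tr}V_{t,J}$. \cref{ass:gp-eigenvalues} gives $\sum_{j>J}\lambda_j\asymp J^{-2\fs/d}$, and $J\gtrsim T^{d/(d+2\fs)}$ with $t\le T$ makes this at most $T^{-2\fs/(d+2\fs)}\le t^{-2\fs/(d+2\fs)}$.

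For $\operatorname{tr}V_{t,J}$ I would not require $\Psi_{t,J}^\top\Psi_{t,J}\approx tI_J$, which fails when $J>t$. Instead I would split the indices at an effective dimension $k=k_t\asymp t^{d/(d+2\fs)}$, with $k\le J$. I would then use the Schur-complement fact that diagonal blocks of an inverse are dominated by the inverses of the corresponding diagonal blocks. This gives
\begin{align*}
\operatorname{tr}V_{t,J}\le\operatorname{tr}\del{\Lambda_k^{-1}+\sigma_\xi^{-2}\Psi_{t,k}^\top\Psi_{t,k}}^{-1}+\sum_{k<j\le J}\lambda_j .
\end{align*}
The tail sum is $\lesssim k^{-2\fs/d}\asymp t^{-2\fs/(d+2\fs)}$. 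For the head, matrix Chernoff applied to the i.i.d.\ rank-one matrices $\psi_{1:k}(X_i)\psi_{1:k}(X_i)^\top$ is available because $\|\psi_{1:k}(x)\|^2\le B_\psi^2k$ by \eqref{eq:gp-basis}. It yields an event of probability $1-k\e^{-ct/k}$ on which $\Psi_{t,k}^\top\Psi_{t,k}\succeq(t/2)I_k$, and on that event the head trace is at most $2\sigma_\xi^2k/t$. Off the event I use the deterministic bound $\operatorname{tr}\Lambda_k\lesssim1$, which costs at most $k\e^{-ct/k}$. For small $t$ the target bound is trivial, since everything is bounded by $\sum_j\lambda_j$.

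For the mean term I decompose $Y_{1:t}=\Psi_{t,J}f^\star_{1:J}+r_t+\xi_{1:t}$, where $r_t=(\sum_{j>J}f^\star_j\psi_j(X_i))_{i\le t}$. This gives
\begin{align*}
\widehat f_t^{(J)}-f^\star=-\psi_{1:J}^\top V_{t,J}\Lambda_J^{-1}f^\star_{1:J}+\sigma_\xi^{-2}\psi_{1:J}^\top V_{t,J}\Psi_{t,J}^\top(r_t+\xi_{1:t})-\sum_{j>J}f_j^\star\psi_j .
\end{align*}
The last piece has squared norm $\le R^2J^{-2\fs/d}$. The noise piece has conditional expectation $\sigma_\xi^{-2}\operatorname{tr}(V\Psi^\top\Psi V)\le\operatorname{tr}V_{t,J}$, which was already handled above. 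The ridge-bias piece $\|V\Lambda^{-1}f^\star_{1:J}\|^2$ I would handle by the same block splitting at $k_t$. The coordinates beyond $k_t$ contribute through the Sobolev tail $\sum_{j>k}(f_j^\star)^2\le R^2k^{-2\fs/d}$, using $\Lambda^{-1/2}$-weighted norms and $V\preceq\Lambda$. The head coordinates are controlled on the Chernoff event by $\|(\Lambda_k^{-1}+\sigma_\xi^{-2}t/2)^{-1}\Lambda_k^{-1}f^\star_{1:k}\|^2\lesssim\sum_{j\le k}(f_j^\star)^2/(1+t\lambda_j)^2\lesssim k^{-2\fs/d}$, which is the usual source bias computation. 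The misspecification piece $V\Psi^\top r_t$ I would bound in expectation via $\|\Psi^\top r_t\|$. Its mean $t\langle\psi_{1:J},\sum_{j>J}f_j^\star\psi_j\rangle$ vanishes by orthogonality, and its variance is $\lesssim tB_\psi^2J\|f^\star_{>J}\|^2$.

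The main obstacle is the logarithm in matrix Chernoff. The requirement $k\log k\lesssim t$ forces $k\asymp t^{d/(d+2\fs)}/\log t$, which inflates the tail $k^{-2\fs/d}$ by a power of $\log t$. To avoid this I would first try a trace-level argument for the head that needs no uniform lower eigenvalue: since $A\mapsto\operatorname{tr}(\Lambda_k^{-1}+A)^{-1}$ is only convex, I would instead bound $\E\operatorname{tr}$ via a leave-one-out/Sherman--Morrison recursion. If that fails, I would accept the log loss or exploit the slack in $\fs>d$ to choose $k$ slightly larger. The cross terms of the misspecification piece also need the weighted-norm bookkeeping to stay within $J^{-2\fs/d}$.
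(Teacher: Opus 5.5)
\textbf{There is a genuine gap: your Schur-complement step is reversed.} The triangle-inequality reduction through $\delta_{f^\star}$ is legitimate, but the step the rest of your argument leans on is false. Write $M=\Lambda_J^{-1}+\sigma_\xi^{-2}\Psi_{t,J}^\top\Psi_{t,J}$ in blocks $H=\{1,\dots,k\}$ and $T=\{k+1,\dots,J\}$. Then
$(M^{-1})_{HH}=(M_{HH}-M_{HT}M_{TT}^{-1}M_{TH})^{-1}\succeq M_{HH}^{-1}$,
so diagonal blocks of an inverse \emph{dominate} the inverses of the diagonal blocks, rather than being dominated by them. Your tail-block bound survives only because $V_{t,J}\preceq\Lambda_J$.

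By Woodbury, the head block is
\begin{align*}
(V_{t,J})_{HH}=\Bigl(\Lambda_k^{-1}+\Psi_{t,k}^\top(\sigma_\xi^2I_t+R)^{-1}\Psi_{t,k}\Bigr)^{-1},\qquad R:=\sum_{k<j\le J}\lambda_j\psi_j(X_{1:t})\psi_j(X_{1:t})^\top .
\end{align*}
Your Chernoff event therefore helps only if $\|R\|_{\mathrm{op}}\lesssim1$. At $k\asymp t^{d/(d+2\fs)}$, the crude bound gives only $\operatorname{tr}R\lesssim t k^{-2\fs/d}\asymp t^{d/(d+2\fs)}$.

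Consequences:
\begin{itemize}
\item Your bound on $\operatorname{tr}V_{t,J}$ is unproved, and it fails for simple deterministic designs.
\item The head part of your ridge-bias bound makes the same substitution and also drops the off-diagonal blocks of $V_{t,J}$, so it falls too.
\item The tail part of the bias is not handled by $V\preceq\Lambda_J$ with $\Lambda^{-1/2}$-weighted norms. That route produces $\sum_{k<j\le J}(f_j^\star)^2/\lambda_j$, which under \cref{ass:gp-true} can be of order $J$ rather than $k^{-2\fs/d}$.
\item The logarithmic ``obstacle'' you single out is not real. Chernoff at dimension $k$ only needs $t/k\gtrsim\log k$, and $t/k_t\asymp t^{2\fs/(d+2\fs)}$.
\end{itemize}

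\textbf{The missing ingredient, and how to repair the argument.} What you need is $\|V_{t,J}\|_{\mathrm{op}}\lesssim t^{-1}$ on a high-probability event, and you can import it from the exact posterior. Compare $V_{t,J}$ with \eqref{eq:gp-head-cov}, taken with $J$ in place of $N$, and use $(\sigma_\xi^2I_t+R_{t,J})^{-1}\preceq\sigma_\xi^{-2}I_t$. This shows $V_{t,J}$ is dominated by the matrix of $\sP_J\Sigma_t\sP_J$. Hence $\operatorname{tr}V_{t,J}\le\operatorname{tr}\Sigma_t$ and $\|V_{t,J}\|_{\mathrm{op}}\le\|\Sigma_t\|_{\mathrm{op}}$, and \cref{prop:gp-covariance} applies. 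That lemma works at the larger cutoff $N_t^{\dagger}\asymp t^{d/(2\fs)}$ precisely so that the residual trace is $O(1)$.

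With this bound your decomposition closes (write $V=V_{t,J}$, $\Psi=\Psi_{t,J}$):
\begin{itemize}
\item The head bias is at most $\|V\|_{\mathrm{op}}\sum_{j\le k}(f_j^\star)^2/\lambda_j\lesssim k/t$.
\item The tail-bias and misspecification pieces follow from $V\Lambda_J^{-1}=I-\sigma_\xi^{-2}V\Psi^\top\Psi$ and $\Psi V\Psi^\top\preceq\sigma_\xi^2I_t$. Each is bounded by $\|V\|_{\mathrm{op}}$ times an empirical squared norm whose expectation is $t$ times a Sobolev tail.
\end{itemize}

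\textbf{Comparison with the paper.} Even repaired, your route is different from and heavier than the paper's. The paper never analyses the variational mean. It converts KL into Wasserstein displacement via $\W^2(Q_t^{(J)},\Pi_t)\le2\|\Sigma_t\|_{\mathrm{op}}\KL(Q_t^{(J)}\|\Pi_t)$ (\cref{lem:gp-transport}). It then combines $\|\Sigma_t\|_{\mathrm{op}}\lesssim t^{-1}$ on $\cE_t$ with $\E\KL\lesssim t^{d/(d+2\fs)}$ from \cref{lem:gp-KL}. Your fixed route avoids the transport inequality and the external KL lemma, and it yields contraction of the sparse posterior itself. The cost is redoing a random-design ridge bias analysis.
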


A notable feature of \cref{thm:sparse-gp-accuracy} is that the KL divergence between $Q_t^{(J)}$ and $\Pi_t$ itself need not vanish. Indeed, the available KL bound may grow with $t$, which is common in many applications \citep[e.g.,][]{zhang2020convergence,ohn2024adaptive,nieman2022contraction}. What matters for online prediction is that this information discrepancy is converted into Wasserstein displacement through the shrinking operator norm of the exact posterior covariance:
    \begin{align*}
        \W^2(Q_t^{(J)},\Pi_t) \lesssim \|\Sigma_t\|_{\mathrm{op}} \mathrm{KL}(Q_t^{(J)}\|\Pi_t).
    \end{align*}
with $\Sigma_t$ denoting the covariance operator of the exact GP posterior. For the proof, see \cref{lem:gp-transport} in the appendix. Thus the same variational discrepancy becomes less consequential as the exact posterior becomes geometrically more concentrated.

The exact and sparse posterior predictive variances are uniformly bounded; see \cref{lem:gp-pointvar} in the appendix. Hence \cref{thm:gp-predictive-comparison} yields the next result.

\begin{theorem}[Fast regret of online sparse variational GP]
\label{thm:gp-fast}
Under the same assumption of \cref{thm:gp-statistical}, the sparse variational GP computed by \cref{alg:gp} with $J \gtrsim T^{d/(d+2\fs)}$  satisfies
    \begin{equation}\label{eq:gp-fast-regret}
    \fR\del[1]{(Q_t^{(J)})_{t=0}^{T-1}}
    \lesssim T^{d/(d+2\fs)}.
    \end{equation}
\end{theorem}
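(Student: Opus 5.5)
The plan is to repeat the argument of \cref{thm:main-transfer}, replacing \cref{ass:predictive-stability} by \cref{thm:gp-predictive-comparison}. By \eqref{eq:regret-conditional-risk} with $Q_t=Q_t^{(J)}$,
\begin{align*}
\fR\del[1]{(Q_t^{(J)})_{t=0}^{T-1}}-\fR\del[1]{(\Pi_t)_{t=0}^{T-1}}
=\sum_{t=2}^T\E\sbr{\cM_t(Q_{t-1}^{(J)})-\cM_t(\Pi_{t-1})}.
\end{align*}
Both $\Pi_{t-1}$ and $Q_{t-1}^{(J)}$ are $\cG_{t-1}$-measurable Gaussian process laws. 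Conditionally on $\cG_{t-1}$, the new pair $(X_t,Y_t)$ is independent of the past, so $\cM_t$ is the deterministic functional applied to these measures. We therefore apply \eqref{eq:gp-predictive-comparison} with $P=\Pi_{t-1}$ and $Q=Q_{t-1}^{(J)}$, bound $\overline v(P,Q)$ by a constant using \cref{lem:gp-pointvar} (this is where \eqref{eq:gp-basis} and summable $\lambda_j$ enter), and take expectations. This gives
\begin{align*}
\E\sbr{\cM_t(Q_{t-1}^{(J)})-\cM_t(\Pi_{t-1})}\lesssim \varepsilon_{t-1}\alpha_{t-1}+\alpha_{t-1}^2+\varepsilon_{t-1}^2,
\end{align*}
where we use Cauchy--Schwarz, $\E[\W(\Pi,\delta)\W(\Pi,Q)]\le\varepsilon\alpha$, with $\varepsilon_t,\alpha_t$ defined as in \eqref{eq:eps-a} for the $L^2(\bP_X)$ Wasserstein distance.

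Next we insert the rates. \cref{thm:gp-statistical} gives $\varepsilon_t^2\lesssim t^{-2\fs/(d+2\fs)}$. Since $J\gtrsim T^{d/(d+2\fs)}$, \cref{thm:sparse-gp-accuracy} gives $\alpha_t^2\lesssim t^{-2\fs/(d+2\fs)}$ for all $t\le T$. Each summand is therefore $\lesssim t^{-2\fs/(d+2\fs)}$, and
\begin{align*}
\sum_{t=1}^{T-1}t^{-2\fs/(d+2\fs)}\lesssim T^{1-2\fs/(d+2\fs)}=T^{d/(d+2\fs)},
\end{align*}
because $2\fs/(d+2\fs)<1$. Adding the exact regret bound \eqref{eq:gp-exact-regret} gives \eqref{eq:gp-fast-regret}.

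The extra term $\overline v\,\W^2(P,\delta_{f^\star})$ does no harm: it is of the same order as $\varepsilon_t^2$, which is already summable to the target rate. The main point needing care is the uniform variance bound. One must check that $\sup_x v_{\Pi_t}(x)$ and $\sup_x v_{Q_t^{(J)}}(x)$ are bounded almost surely by $\sup_x\sK_0(x,x)\le B_\psi^2\sum_j\lambda_j<\infty$. Posterior conditioning only decreases variance, and $\Lambda_J-V_{t,J}\succeq0$ makes the sparse variance at most the prior variance; this is what \cref{lem:gp-pointvar} supplies. The bound must hold with a constant independent of $t$, or else the product with $\varepsilon_t^2$ would break the rate.

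A secondary point is the $t=1$ term of the exact regret, that is, the term for $Q_0=\Pi_0$. It cancels because $Q_0=\Pi_0$, so the sum starting at $t=2$ is harmless.
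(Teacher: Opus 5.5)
Your proposal is correct and is essentially the paper's proof: start from \eqref{eq:regret-conditional-risk}, apply \cref{thm:gp-predictive-comparison} with $P=\Pi_{t-1}$ and $Q=Q_{t-1}^{(J)}$, bound $\overline v$ by \cref{lem:gp-pointvar}, use Cauchy--Schwarz, and sum the rates from \cref{thm:gp-statistical,thm:sparse-gp-accuracy}. The only difference is presentation: you make the cancellation of the $t=1$ term explicit (indeed $V_{0,J}=\Lambda_J$ gives $Q_0^{(J)}=\Pi_0$) and add the exact-Bayes bound \eqref{eq:gp-exact-regret} as a separate step, whereas the paper folds everything into a single displayed sum.
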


With the oracle number of inducing variables $J\asymp T^{d/(d+2\fs)}$, the total computational cost through horizon $T$ of the sparse variational GPs $(Q_t^{(J)})_{t=0}^{T-1}$ is  
    \begin{equation}\label{eq:gp-complexity}
    O(TJ^2) =O\del[1]{T^{(3d+2\fs)/(d+2\fs)}}.
\end{equation}
By comparison, the exact GP posterior has  $O(t^2)$ update cost at round $t$, and thus total $O(T^3)$ cost through horizon $T$.

\section{Conclusion}\label{sec:discussion}

This paper develops a comparison principle for Bayesian online prediction with approximate posteriors. The main theorem separates the expected regret of a computed predictor into an exact Gibbs benchmark and an approximation penalty governed by two scales: the contraction radius $\varepsilon_t$ of the exact posterior and the tracking error $\alpha_t$ of the computational approximation. The penalty $\sum_{t=1}^{T-1}(\varepsilon_t\alpha_t+\alpha_t^2)$ makes precise how statistical concentration reduces sensitivity to computation. Posterior approximation need not be uniformly negligible; its accuracy only needs to improve at a rate compatible with contraction.

The analysis suggests a design principle for computational Bayesian prediction: first identify the contraction scale and predictive geometry of the exact posterior, then allocate only enough computational accuracy that the induced predictive perturbations remainbelow the statistical resolution of exact Gibbs posterior. This turns posterior approximation from an external numerical concern into an explicit component of sequential regret analysis. 

Several limitations mark useful directions for further work. The finite-dimensional result assumes compact support, smooth loss, and strong convexity supplied by regularization. Its cold-start mixing schedule is conservative, and warm starts or stochastic-gradient samplers may reduce its computational cost. The sequence result assumes known smoothness, a horizon-dependent rank, coordinatewise conjugacy, and tractable one-dimensional predictive normalizers. The sparse-GP result assumes known population spectral features, fixed kernel hyperparameters and inducing variables, polynomial spectral decay, and a known horizon. Extending the same direct comparison to fixed point-inducing schemes, moving inducing sets, adaptive ranks, or online hyperparameter learning requires new residual-kernel and sequential tracking arguments.

\section*{Acknowledgments}

This work was supported by the National Research Foundation of Korea (NRF) funded by the Korea government (MSIT) (RS-2026-25486448) and INHA UNIVERSITY Research Grant.

\appendix
\section{Proofs for \cref{sec:regret}}\label{app:regret}

\subsection{Proof of \cref{thm:main-transfer}}

\begin{proof}
Note that, at round $t$, $Q_{t-1}$ and $\Pi_{t-1}$ are $\cG_{t-1}$-measurable and
\cref{ass:predictive-stability} holds almost surely conditional on $\cG_{t-1}$. By the identity \eqref{eq:regret-conditional-risk}, \cref{ass:predictive-stability} gives
    \begin{align*}
        \fR\del[1]{(Q_t)_{t=0}^{T-1}}-\fR\del[1]{(\Pi_t)_{t=0}^{T-1}}
        \le L_{\cM}\sum_{t=2}^T
        \E\sbr{\W(\Pi_{t-1},\delta_{\theta^\star})\W(Q_{t-1},\Pi_{t-1})
        +\W^2(Q_{t-1},\Pi_{t-1})}.
    \end{align*}
Therefore, by the Cauchy--Schwarz inequality, we have
    \begin{align*}
    \fR\del[1]{(Q_t)_{t=0}^{T-1}}-\fR\del[1]{(\Pi_t)_{t=0}^{T-1}}
    &\le L_{\cM}\sum_{t=2}^T\sbr{\{
    \E[\W^2(\Pi_{t-1},\delta_{\theta^\star}]\}^{1/2}
    \{\E[\W^2(Q_{t-1},\Pi_{t-1})]\}^{1/2}
    +\E[\W^2(Q_{t-1},\Pi_{t-1})]}\\
    &=L_{\cM}\sum_{t=1}^{T-1}
    (\varepsilon_t\alpha_t+\alpha_t^2),
\end{align*}
which gives the desired result.
\end{proof}

\subsection{Proof of \cref{prop:gibbs-benchmark}}

\begin{proof}
By the Donsker-Varadhan variational formula, we have
    \begin{align*}
    -\log \int \e^{-F(\theta)} \Pi_0(\d\theta)
    \le \int F(\theta) \d\rho(\theta)+\KL(\rho\|\Pi_0)
    \end{align*}
for every $\rho\ll\Pi_0$.We then apply this inequality with
    \begin{align*}
    F(\theta)=\eta\sum_{t=1}^T\{\ell_t(\theta;(X_t, Y_t))-\ell_t(\theta^\star;(X_t, Y_t))\}.
    \end{align*}
\eqref{eq:telescoping} identifies $-\eta^{-1}\log\int \e^{-F(\theta)} \Pi_0(\d\theta)$ with $\regret((\Pi_t)_{t=0}^{T-1})$. Taking expectations and using Fubini under the stated integrability assumptions yields
    \begin{align*}
    \fR\del[1]{(\Pi_t)_{t=0}^{T-1}}
    \le \sum_{t=1}^T
    \int \E\{\ell_t(\theta;(X_t, Y_t))-\ell_t(\theta^\star;(X_t, Y_t))\}\rho(\d\theta)
    +\frac1\eta\KL(\rho\|\Pi_0).
\end{align*}
Taking the infimum over deterministic $\rho$ proves the proposition. 
\end{proof}

\subsection{Proof of \cref{prop:primitive-stability}}

\begin{proof}
Fix a round $t$. Let $Z_t:=(X_t, Y_t)$ In this proof, all expectations are conditional on $\cG_{t-1}$, and we suppress this conditioning from the notation for notational brevity. The conditional stationarity condition \eqref{eq:conditional-stationarity} becomes $\E[\nabla\ell_t(\theta^\star;Z_t)]=0$, while \eqref{eq:primitive-envelope} supplies a deterministic upper bound $C_{\rm env}$ for every conditional envelope moment used below.

Let $\gamma$ be an optimal coupling of $P$ and $Q$, and write
    \begin{align*}
     \theta_s:=(1-s)\theta+s\theta',\qquad v=\theta'-\theta,
    \end{align*}
for $(\theta,\theta')\sim\gamma$. Let $Q_s$ be the law of $\theta_s$. For fixed $z:=(x,y)$, define
    \begin{align*}
    f_z(s)=-\frac1\eta\log\int \e^{-\eta\ell_t(\theta_s;z)}\gamma(\d\theta,\d\theta').
     \end{align*}
Note that
    \begin{align*}
      \cM_t(Q)=\E[f_{Z_t}(1)\mid \cG_{t-1}],
      \quad
      \cM_t(P)=\E[f_{Z_t}(0)\mid \cG_{t-1}],
 \end{align*}
%Let $D_\Theta:=\operatorname{diam}(\Theta)$. The first derivative of the integrand $s\mapsto \e^{-\eta\ell_t(\theta_s,z)}$ is bounded in absolute value by $\eta G_tD_\Theta \e^{-\eta\ell_t(\theta_s,z)}$, and after normalization the tilted first derivative is bounded by $G_tD_\Theta$. Since $G_t\le 1+G_t^2$ and \eqref{eq:primitive-envelope} bounds the conditional expectation of $\e^{\eta D_t}(1+G_t^2)$, differentiation under the conditional expectation is justified at first order.
Because $\nabla\ell_t(\cdot,z)$ is $H_t$-Lipschitz, $s\mapsto\ell_t(\theta_s,z)$ has a Lipschitz derivative and is twice differentiable for Lebesgue-a.e. $s$, with
    \begin{align*}
    \left|\frac{\d^2}{\d s^2}\ell_t(\theta_s;z)\right|
    \le H_t\|v\|^2.
    \end{align*}
Let $G_{s,z}$ be a probability measure on $\Theta\times \Theta$ defined as
    \begin{align*}
    G_{s,z}(\d\theta,\d\theta')
    =\frac{ \e^{-\eta\ell_t(\theta_s;z)}\gamma(\d\theta,\d\theta')}
    {\int \e^{-\eta\ell_t(u_s;z)}\gamma(\d u,\d u')}, 
    \quad u_s:=(1-s)u + su'.
    \end{align*}
Differentiation of the log integral gives
    \begin{align*}
    f_z'(s)
    =\frac{\int\frac{\d}{\d s}\ell_t(\theta_s;z)\e^{-\eta\ell_t(\theta_s;z)}\gamma(\d\theta,\d\theta')}
    {\int \e^{-\eta\ell_t(u_s,z)}\gamma(\d u,\d u')}
    =\E_{G_{s,z}}\sbr{\frac{\d}{\d s}\ell_t(\theta_s;z)}
    \end{align*}
where $\E_{G_{s,z}}$ is expectation under the probability measure $G_{s,z}$. Moreover, we have
    \begin{align*}
    f_z''(s)&=
    \E_{G_{s,z}}\sbr{\frac{\d^2}{\d s^2}\ell_t(\theta_s;z)}
    -\eta \E_{G_{s,z}}\sbr{\del[1]{\frac{\d}{\d s}\ell_t(\theta_s;z)}^2}
    +\eta\del{\E_{G_{s,z}}\sbr{\frac{\d}{\d s}\ell_t(\theta_s;z)}}^2\\
    &=   \E_{G_{s,z}}\sbr{\frac{\d^2}{\d s^2}\ell_t(\theta_s;z)}
    -\eta\text{Var}_{G_{s,z}}\sbr{\frac{\d}{\d s}\ell_t(\theta_s;z)}.
 \end{align*}
By assumption, we have
    \begin{align} \label{eq:tilted_density_bound}
    \frac{ \e^{-\eta\ell_t(\theta_s;z)}}{\int \e^{-\eta\ell_t(u_s;z)}\gamma(\d u,\d u')}
    =\frac{ 1}{\int \e^{\eta\{\ell_t(\theta_s;z)-\ell_t(u_s;z)\}}\gamma(\d u,\d u')}
    \le \frac{1}{\e^{-\eta D_t}}
    \end{align}
and therefore,
    \begin{align*}
    f_z''(s)
    &\le \E_{G_{s,z}}\sbr{\frac{\d^2}{\d s^2}\ell_t(\theta_s;z)}
    \le H_t\e^{\eta D_t}\W^2(P,Q).
    \end{align*}
Since $|f_z''(s)|\lesssim \e^{\eta D_t}( H_t+\eta G_t^2)D_\Theta^2$ and this is conditionally integrable by \eqref{eq:primitive-envelope}, Fubini and the fundamental theorem for absolutely continuous functions therefore yield
    \begin{equation}\label{eq:app-geodesic-smoothness}
    \cM_t(Q)-\cM_t(P)
    \le   \left.\frac{\d}{\d s}\cM_t(Q_s)\right|_{s=0}
    +\frac{L_1}{2}\W^2(P,Q),
\end{equation}
where $ L_1:=\E[H_t\e^{\eta D_t}\mid\cG_{t-1}]\le C_{\rm env}.$

It remains to control the directional derivative. Define
    \begin{align*}
    w_P(\theta,z)=\frac{\e^{-\eta\ell_t(\theta;z)}}{\int \e^{-\eta\ell_t(u,z)}P(\d u)},
    \end{align*}
and
    \begin{align*}
    b_{P,t}(\theta)
    =\E[w_P(\theta,Z_t)\nabla\ell_t(\theta;Z_t)\mid\cG_{t-1}].
    \end{align*}
Then we have
    \begin{equation}\label{eq:directional-first-variation}
    \left.\frac{\d}{\d s}\cM_t(Q_s)\right|_{s=0}
    =\int\langle b_{P,t}(\theta),\theta'-\theta\rangle\gamma(\d\theta,\d\theta').
    \end{equation}
Let $g_{\star,t}:=\nabla\ell_t(\theta^\star;Z_t)$ and $m_P:=\int\|u-\theta^\star\|P(\d u)$. The stationarity assumption gives $\E(g_{\star,t}\mid\cG_{t-1})=0$, so
    \begin{align*}
    b_{P,t}(\theta)
    =\E[w_P(\theta,Z_t)[\nabla\ell_t(\theta;Z_t)-g_{\star,t}]\mid\cG_{t-1}]
    +\E[\{w_P(\theta,Z_t)-1\}g_{\star,t}\mid\cG_{t-1}].
    \end{align*}
Using a similar argument to \eqref{eq:tilted_density_bound}, the first term is bounded by
$\E(\e^{\eta D_t}H_t\mid\cG_{t-1})\|\theta-\theta^\star\|$. For the second, the mean-value theorem applied to $u\mapsto \e^{-\eta\ell_t(u;z)}$ gives
    \begin{align*}
    |w_P(\theta,z)-1|
    & \le \abs{\frac{\int\e^{-\eta\ell_t(\theta;z)}- \e^{-\eta\ell_t(u;z)}P(\d u)}{\int \e^{-\eta\ell_t(u;z)}P(\d u)}}\\
    & \le \abs{\frac{\int \inn{\sup_{u'\in \Theta}-\eta\e^{-\eta\ell_t(u';z)}\grad \ell_t(u';z)}{\theta-u}  P(\d u)}{\int \e^{-\eta\ell_t(u;z)}P(\d u)}}\\
    & \le \abs{\frac{\int\eta G_t\|\theta-u\|  P(\d u)}{\int \e^{\sup_{u'\in \Theta}\ell_t(u';z)-\eta\ell_t(u;z)}P(\d u)}}\\
    &\le \eta \e^{\eta D_t}G_t \{\|\theta-\theta^\star\|+m_P\}.
    \end{align*}
Because $\|g_{\star,t}\|\le G_t$, by \eqref{eq:primitive-envelope},
    \begin{align*}
    \|b_{P,t}(\theta)\|
    \le C_{\rm env}\{\|\theta-\theta^\star\|+m_P\}.
\end{align*}
a.s.. By Jensen's inequality, we have $m_P\le \W(P,\delta_{\theta^\star})$, and hence
    \begin{align*}
    \|b_{P,t}\|_{L^2(P)}
    \le 4C_{\rm env}\W(P,\delta_{\theta^\star}).
    \end{align*}
Combining this with \eqref{eq:directional-first-variation} and using Cauchy--Schwarz under $\gamma$, we have
    \begin{align*}
    \cM_t(Q)-\cM_t(P)
    \le    4C_{\rm env} \W(P,\delta_{\theta^\star})\W(P,Q)
    +\frac{L_1}{2}\W^2(P,Q),
\end{align*}
which completes the proof.
\end{proof}

\section{Proofs for \cref{sec:langevin}}\label{app:langevin}

\subsection{Proof of \cref{thm:linear_stability}}

\begin{proof}
%Given $\cG_{t-1}$, the covariate $X_t$ is fixed.
It is easy to see that the loss $\theta\mapsto\ell_{\lambda,t}(\theta;Y_t)$ satisfies \eqref{eq:loss_grad_lip}, \eqref{eq:loss_grad_bound} and \eqref{eq:loss_lip} with
     \begin{align*}
     H_t&=L_\phi K_x^2+\lambda\\
     G_t&=B_1K_x+\lambda\operatorname{diam}(\Theta)\\
     D_t&=G_t\operatorname{diam}(\Theta),
\end{align*}
These constants are fixed in $t$, hence \eqref{eq:primitive-envelope} is met.
\cref{ass:stationarity} is exactly \eqref{eq:linear-mds}. Thus \cref{prop:primitive-stability} can be applied, which leads to \cref{ass:predictive-stability}.
\end{proof}

\subsection{Proof of \cref{thm:linear-statistical}}

We first record a self-contained moment bound that will be used in both finite- and infinite-dimensional applications.

\begin{lemma}[Second moment under a constrained strongly convex Gibbs law]
\label[lemma]{lem:strong-convex-moment}
Let $K\subset\R^d$ be a compact convex set with nonempty interior and let $\theta_0\in\operatorname{int}(K)$. Suppose $U$ is continuously differentiable on a neighborhood of $K$ and $\beta$-strongly convex on $K$. Then for a probability measure $\check\Pi$ on $\R^d$ such that
    \begin{align*}
    \check\Pi(\d\theta)=Z^{-1}\e^{-U(\theta)}\ind_K(\theta)\d\theta,
    \end{align*}
we have
    \begin{align}\label{eq:strong-convex-moment}
    \int\|\theta-\theta_0\|^2\check\Pi(\d\theta)
    \le \frac{2\|\nabla U(\theta_0)\|^2}{\beta^2}+\frac{2d}{\beta}.
    \end{align}
\end{lemma}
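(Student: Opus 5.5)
Let $\theta^\dagger:=\argmin_{K}U$. It exists and is unique because $U$ is strongly convex on the compact convex set $K$. The plan is to split
\[
\|\theta-\theta_0\|^2\le 2\|\theta-\theta^\dagger\|^2+2\|\theta^\dagger-\theta_0\|^2 .
\]
For the deterministic piece we will prove $\|\theta^\dagger-\theta_0\|\le\|\nabla U(\theta_0)\|/\beta$. For the random piece we will prove $\int\|\theta-\theta^\dagger\|^2\,\check\Pi(\d\theta)\le d/\beta$. Inserting both into the split gives exactly \eqref{eq:strong-convex-moment}.

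\emph{Mode location.} The constrained optimum satisfies the first-order condition $\langle\nabla U(\theta^\dagger),\theta_0-\theta^\dagger\rangle\ge0$, since $\theta_0\in K$. Strong monotonicity of $\nabla U$ on $K$ gives
\[
\langle\nabla U(\theta_0)-\nabla U(\theta^\dagger),\theta_0-\theta^\dagger\rangle\ge\beta\|\theta_0-\theta^\dagger\|^2 .
\]
Combining the two inequalities yields $\beta\|\theta_0-\theta^\dagger\|^2\le\langle\nabla U(\theta_0),\theta_0-\theta^\dagger\rangle$. Cauchy--Schwarz then gives the claimed bound.

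\emph{Fluctuation around the mode.} I would integrate by parts, in the form of the divergence theorem on the convex body $K$ (which has Lipschitz boundary), applied to the vector field $(\theta-\theta^\dagger)\e^{-U(\theta)}$. This gives
\[
\int_K\bigl[d-\langle\nabla U(\theta),\theta-\theta^\dagger\rangle\bigr]\e^{-U}\,\d\theta
=\int_{\partial K}\langle\theta-\theta^\dagger,n(\theta)\rangle\e^{-U}\,\d S .
\]
The boundary term is nonnegative because $\theta^\dagger\in K$ and $K$ is convex, so the outward normal satisfies $\langle\theta-\theta^\dagger,n\rangle\ge0$. Hence
\[
\E_{\check\Pi}\langle\nabla U(\theta),\theta-\theta^\dagger\rangle\le d .
\]
Next, strong monotonicity and the first-order condition (now taken at the point $\theta$) give
\[
\langle\nabla U(\theta),\theta-\theta^\dagger\rangle\ge\beta\|\theta-\theta^\dagger\|^2+\langle\nabla U(\theta^\dagger),\theta-\theta^\dagger\rangle\ge\beta\|\theta-\theta^\dagger\|^2 .
\]
Taking expectations under $\check\Pi$ yields $\int\|\theta-\theta^\dagger\|^2\,\check\Pi(\d\theta)\le d/\beta$.

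\emph{Main obstacle.} The delicate step is the boundary term. Centering at the constrained minimizer $\theta^\dagger$, rather than at $\theta_0$, is what makes the sign of $\langle\nabla U(\theta^\dagger),\theta-\theta^\dagger\rangle$ favorable. Convexity of $K$ is what makes the boundary flux nonnegative. The divergence theorem needs only a $C^1$ integrand up to the boundary; this holds because $U$ is $C^1$ on a neighborhood of $K$, and convex bodies are Lipschitz domains. The hypothesis $\theta_0\in\operatorname{int}(K)$ is not actually needed for this argument; any $\theta_0\in K$ works.
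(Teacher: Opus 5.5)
Your proof is correct, and it takes a different route from the paper's, although the two share their main tools: the divergence theorem on $K$ with a boundary flux made nonnegative by convexity, and strong monotonicity of $\nabla U$ on $K$.

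\emph{The paper's route.} It centers the integration by parts at $\theta_0$ itself. It applies the divergence theorem to $(x-\theta_0)\e^{-U(x)}$ to get $\E_{\check\Pi}\langle X-\theta_0,\nabla U(X)\rangle\le d$. Strong monotonicity between $X$ and $\theta_0$ then leaves the linear term $-\E_{\check\Pi}\langle X-\theta_0,\nabla U(\theta_0)\rangle$. Cauchy--Schwarz bounds this by $g_0 s$, with $g_0:=\|\nabla U(\theta_0)\|$ and $s^2:=\E_{\check\Pi}\|X-\theta_0\|^2$. Solving the quadratic inequality $\beta s^2\le d+g_0 s$ gives $s\le g_0/\beta+\sqrt{d/\beta}$, which is then squared.

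\emph{Your route.} You center at the constrained minimizer $\theta^\dagger$. There the variational inequality $\langle\nabla U(\theta^\dagger),\theta-\theta^\dagger\rangle\ge0$ on $K$ removes the linear term altogether. This gives the clean spread bound $\E_{\check\Pi}\|X-\theta^\dagger\|^2\le d/\beta$. You then pay for the reference point separately through the deterministic displacement $\|\theta^\dagger-\theta_0\|\le g_0/\beta$.

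\emph{Trade-off.} The paper's argument is shorter and needs neither the existence of a minimizer nor the first-order optimality condition. Yours uses those two extra ingredients but produces a bias--variance split that explains the two terms of the bound: $d/\beta$ is pure fluctuation around the constrained mode, and $g_0^2/\beta^2$ is purely the displacement of that mode from $\theta_0$. If you combine your two pieces with Minkowski's inequality in $L^2(\check\Pi)$ instead of $\|a+b\|^2\le2\|a\|^2+2\|b\|^2$, you reproduce exactly the paper's pre-squaring bound $s\le g_0/\beta+\sqrt{d/\beta}$. So nothing is lost relative to the stated constants.

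Your remark that $\theta_0\in\operatorname{int}(K)$ is superfluous is right. It applies equally to the paper's proof, which uses only $\theta_0\in K$ to sign the boundary flux.
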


\begin{proof}
A compact convex body has Lipschitz boundary, so the divergence theorem applies. At surface-a.e. $x\in\partial K$, the outer normal $n(x)$ satisfies $\langle x-\theta_0,n(x)\rangle\ge0$ by convexity. Hence
    \begin{align*}
    0\le \int_{\partial K}\langle x-\theta_0,n(x)\rangle \e^{-U(x)}\d S(x)
    = \int_K\{d-\langle x-\theta_0,\nabla U(x)\rangle\}\e^{-U(x)}\d x.
    \end{align*}
This implies that 
    \begin{align*}
    \E_{\check\Pi}[\langle X-\theta_0,\nabla U(X)\rangle]\le d.
    \end{align*}
Strong convexity implies strong monotonicity of the gradient,
    \begin{align*}
    \langle X-\theta_0,\nabla U(X)-\nabla U(\theta_0)\rangle
    \ge \beta\|X-\theta_0\|^2.
    \end{align*}
Let $g_0:=\|\nabla U(\theta_0)\|$ and $s^2:=\E_{\check\Pi}\|X-\theta_0\|^2$. Taking expectations and using Cauchy--Schwarz gives $\beta s^2\le d+g_0s$. Solving this quadratic inequality yields
$s\le g_0/\beta+\sqrt{d/\beta}$, proving the desired result.
\end{proof}

\begin{proof}[Proof of \cref{thm:linear-statistical}]
Let
    \begin{align*}
    \Xi_s:=\nabla\ell_{\lambda,s}(\theta^\star;Y_s).
    \end{align*}
By \eqref{eq:linear-mds}, $(\Xi_s)$ is a martingale-difference sequence with respect to the pre-prediction filtration. \cref{ass:design,ass:linear-loss} imply
$\|\Xi_s\|\le G_0$ almost surely for some absolute constant $G_0>0$. 
Due to the $L^2$ regularization term $(\lambda/2)\|\theta\|^2$, the potential function is $\beta_t$-strongly convex on $\Theta$ with $\beta_t:=\eta\lambda t$. The gradient of  $\overline U_t$ at the target is
    \begin{align*}
    \nabla\overline U_t(\theta^\star)
    =\nabla V_0(\theta^\star)+\eta\sum_{s=1}^t\Xi_s.
    \end{align*}
The martingale differences are orthogonal in $L^2$: for $r<s$,
    \begin{align*}
    \E[\langle\Xi_r,\Xi_s\rangle]
    =\E\sbr{\left\langle\Xi_r,\E(\Xi_s\mid\cG_{s-1})\right\rangle}
    =0.
    \end{align*}
Consequently,
    \begin{align*}
    \E\left\|\sum_{s=1}^t\Xi_s\right\|^2
    =\sum_{s=1}^t\E[\|\Xi_s\|^2]
    \le G_0^2t,
    \end{align*}
and hence
    \begin{align*}
    \E\|\nabla\overline U_t(\theta^\star)\|^2\lesssim t.
    \end{align*}
By applying \cref{lem:strong-convex-moment} with $K=\Theta$ and $\theta_0=\theta^\star$, we have
    \begin{align*}
    \E [\W^2(\Pi_t,\delta_{\theta^\star})]
    &= \E\sbr{\int\|\theta-\theta^\star\|^2\Pi_t(\d\theta)}
    \lesssim \frac d t,
\end{align*}
which proves \eqref{eq:linear-contraction}.

We now verify \eqref{eq:linear-gibbs}. Fix $c>0$ sufficiently small and let $r_T=cT^{-1/2}$. Because $\theta^\star$ is interior, $B(\theta^\star,r_T)\subset\Theta$ for all sufficiently large $T$. Let $\rho_T$ be the uniform distribution on this ball. The prior density is bounded below by a positive constant on a fixed neighborhood of $\theta^\star$, and hence
    \begin{align*}
    \KL(\rho_T\|\Pi_0)
    \lesssim  -\log\operatorname{Vol}(B(0,r_T))+1
    \lesssim \log T+1.
\end{align*}
Define the conditional risk
    \begin{align*}
    R_{\lambda,t}(\theta)=\E[\ell_{\lambda,t}(\theta;Y_t)\mid\cG_{t-1}].
    \end{align*}
By assumption, this has $L_R$-Lipschitz gradient with $L_R:=L_\phi K_x^2+\lambda$. Moreover, by \eqref{eq:linear-mds}, $\nabla R_{\lambda,t}(\theta^\star)=0$. Therefore, we have
    \begin{align*}
    R_{\lambda,t}(\theta)-R_{\lambda,t}(\theta^\star)
    \le\frac{L_R}{2}\|\theta-\theta^\star\|^2
    \end{align*}
almost surely. Taking expectations and summing over $t$ yields
    \begin{align*}
    \sum_{t=1}^T \int\E[\ell_{\lambda,t}(\theta;Y_t)-\ell_{\lambda,t}(\theta^\star;Y_t)]
    \rho_T(\d\theta)
    \lesssim 1,
\end{align*}
where the finitely many remaining small values of $T$ are absorbed into the implicit constant. \cref{prop:gibbs-benchmark} proves the desired result.
\end{proof}

\subsection{Proof of \cref{prop:myula-accuracy}}

\begin{proof}
We apply Theorem 2 of \citet{brosse2017compact} with $K=\Theta$ and $f=U_t$ and so it suffices to check Assumptions H1 and H2 therein. \cref{ass:linear-loss} implies that $U_t$ is convex and continuously differentiable on $\R^d$ with $L_t$-Lipschitz gradient, where $L_t\lesssim (1+t)$. These verify H1. The geometry assumption in H2 is already assumed in our \cref{ass:linear-loss}.
\end{proof}

\subsection{Proof of \cref{thm:proximal-fast}}

\begin{proof}
Let $\widetilde Q_t$ be the law before the final projection in \cref{alg:proximal}. \cref{prop:myula-accuracy} gives
    \begin{align*}
    \|\widetilde Q_t-\Pi_t\|_{\mathrm{TV}}\le \tilde\alpha_t.
    \end{align*}
Because $\operatorname{proj}_\Theta$ is measurable and $\operatorname{proj}_\Theta\#\Pi_t=\Pi_t$, total variation contracts under pushforward:
    \begin{align*}
    \|Q_t-\Pi_t\|_{\mathrm{TV}
    }=    \|\operatorname{proj}_\Theta\#\widetilde Q_t-\operatorname{proj}_\Theta\#\Pi_t\|_{\mathrm{TV}}
    \le  \tilde\alpha_t.
 \end{align*}
Both $Q_t$ and $\Pi_t$ are supported on $\Theta$. Maximal coupling therefore yields
     \begin{align*}
      \W^2(Q_t,\Pi_t)\le
      \operatorname{diam}(\Theta)^2\|Q_t-\Pi_t\|_{\mathrm{TV}}
      \lesssim t^{-1}.
      \end{align*}
Taking expectations proves \eqref{eq:proximal-tracking}.

\cref{thm:linear-statistical} gives $\varepsilon_t\lesssim t^{-1/2}$. Consequently,
    \begin{align*}
    \sum_{t\ge1}\{\varepsilon_t\alpha_t+\alpha_t^2\}
    \lesssim \sum_{t\ge1}t^{-1}
    \lesssim \log T.
    \end{align*}
Applying \cref{thm:main-transfer} with \cref{thm:linear-statistical} completes the proof.
\end{proof}

\section{Proofs for \cref{sec:sequence}}\label{app:sequence}

\subsection{Supporting lemmas}

We first collect four elementary lemmas. 

\begin{lemma}[Wasserstein distance for product measures]
\label[lemma]{lem:product-w2}
Let $(p_j)_{j\ge1}$ satisfy $p_j>0$ for every $j$ and $\sum_jp_j=1$. Let $\mu=\bigotimes_{j\ge1}\mu_j$ and $\nu=\bigotimes_{j\ge1}\nu_j$ be probability measures on the sequence space equipped with the cost $\dist_p^2(x,y)=\sum_{j\ge1}p_j(x_j-y_j)^2$. If
$\sum_jp_j\W^2(\mu_j,\nu_j)<\infty$, then
    \begin{equation}\label{eq:product-w2}
    \Wp^2(\mu,\nu)=\sum_{j\ge1}p_j\W^2(\mu_j,\nu_j).
    \end{equation}
\end{lemma}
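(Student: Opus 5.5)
The identity \eqref{eq:product-w2} will follow from two inequalities: an upper bound from gluing coordinatewise optimal couplings, and a lower bound from projecting an arbitrary coupling onto each coordinate. Throughout, $\W$ on the right-hand side is the one-dimensional quadratic Wasserstein distance on $\R$. We also note that $\Wp$ is well defined for measures whose second $\dist_p$-moments may be infinite, since we allow the value $+\infty$.

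\emph{Upper bound.} For each $j$, choose an optimal coupling $\gamma_j$ of $(\mu_j,\nu_j)$ for the quadratic cost on $\R$. Such a coupling exists whenever $\W(\mu_j,\nu_j)<\infty$, which holds for every $j$ because $p_j>0$ and the summability assumption forces each term to be finite. Let $\gamma:=\bigotimes_{j\ge1}\gamma_j$ be the product measure on $\prod_j(\R\times\R)$, which exists by the Kolmogorov extension (or Ionescu--Tulcea) theorem. Its first marginal is $\bigotimes_j\mu_j=\mu$, because the marginals of a product are the products of the marginals; likewise its second marginal is $\nu$. By Tonelli,
\begin{align*}
\int\dist_p^2(x,y)\,\gamma(\d x,\d y)=\sum_{j\ge1}p_j\int|x_j-y_j|^2\,\gamma_j(\d x_j,\d y_j)=\sum_{j\ge1}p_j\W^2(\mu_j,\nu_j).
\end{align*}
Hence $\Wp^2(\mu,\nu)\le\sum_jp_j\W^2(\mu_j,\nu_j)$, and the right-hand side is finite by hypothesis.

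\emph{Lower bound.} Let $\pi$ be an arbitrary coupling of $\mu$ and $\nu$. Its $(x_j,y_j)$-marginal $\pi_j$ is a coupling of $\mu_j$ and $\nu_j$, so $\int|x_j-y_j|^2\,\d\pi\ge\W^2(\mu_j,\nu_j)$. Tonelli then gives
\begin{align*}
\int\dist_p^2\,\d\pi=\sum_jp_j\int|x_j-y_j|^2\,\d\pi_j\ge\sum_jp_j\W^2(\mu_j,\nu_j).
\end{align*}
Taking the infimum over $\pi$ yields the reverse inequality. The lower bound does not use the product structure of $\mu$ and $\nu$; only the upper bound does.

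The only delicate point is measure-theoretic. The countable product $\bigotimes_j\gamma_j$ must be a legitimate probability measure on the sequence space $\R^{\bN}$ with the product $\sigma$-field. This is standard for countable products of probability measures on Polish spaces. In addition, $\dist_p^2$ must be measurable there, which holds because it is a countable sum of nonnegative measurable functions. If $\Wp$ is defined on the subspace where $\dist_p$ is finite, note that $\mu$ and $\nu$ are concentrated there whenever their second $\dist_p$-moments are finite. Otherwise the identity still holds in the extended sense on $\R^{\bN}$, so no further restriction is needed.
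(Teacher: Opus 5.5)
Your proposal is correct and follows essentially the same route as the paper. The lower bound comes from applying Tonelli to the coordinate marginals of an arbitrary coupling, and the upper bound comes from the countable product $\bigotimes_j\gamma_j$ of coordinatewise optimal couplings. Your extra measure-theoretic remarks (existence of the product measure, measurability of $\dist_p^2$) are valid details that the paper leaves implicit.
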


\begin{proof}
Let $\Gamma$ be any coupling of $\mu$ and $\nu$, and let $\Gamma_j$ be its $j$th coordinate marginal. By Tonelli,
    \begin{align*}
    \int\dist_p^2(x,y)\Gamma(\d x,\d y)
    =\sum_{j\ge1}p_j\int(x_j-y_j)^2\Gamma_j(\d x_j,\d y_j)
    \ge\sum_{j\ge1}p_j\W^2(\mu_j,\nu_j).
    \end{align*}
Taking the infimum over $\Gamma$ gives the lower bound. Conversely, for each $j$, choose an optimal quadratic-cost coupling $\gamma_j$ of $\mu_j$ and $\nu_j$; such a coupling exists on $\mathbb R$ whenever the displayed Wasserstein distance is finite. The countable product measure $\bigotimes_j\gamma_j$ is a coupling of $\mu$ and $\nu$, and Tonelli's theorem gives its total cost as exactly $\sum_jp_j\W^2(\mu_j,\nu_j)$. This proves the reverse inequality.
\end{proof}

\begin{lemma}[One-coordinate posterior risk]
\label[lemma]{lem:coordinate-risk}
Under \cref{ass:sequence_model}, there is an absolute constant $C>0$ such that
    \begin{equation}\label{eq:coordinate-risk}
    \E\sbr{\left.\int(\theta-\theta^\star_j)^2\Pi_{t,j}(\d\theta)\right| N_{t,j}}
    \le C \cbr{\frac1{\kappa_j+N_{t,j}}
    +\frac{\kappa_j^2(\theta^\star_j)^2}{(\kappa_j+N_{t,j})^2}}
    \end{equation}
for any $j\in \bN$ and $t\in \bN$.
\end{lemma}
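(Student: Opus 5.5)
Conditional on $N_{t,j}=n$, the posterior $\Pi_{t,j}$ has density proportional to $\exp(\{\kappa_jA'(0)+S\}\theta-(\kappa_j+n)A(\theta))$ on $[-B,B]$, where $S=S_{t,j}$ is a sum of $n$ i.i.d.\ draws from $\bP_{\theta^\star_j}$. The plan is to apply \cref{lem:strong-convex-moment} in dimension one, with $K=[-B,B]$ and $\theta_0=\theta^\star_j$ (interior, since $|\theta^\star_j|<B$). The potential is
\begin{align*}
U(\theta)=(\kappa_j+n)A(\theta)-\{\kappa_jA'(0)+S\}\theta .
\end{align*}
By \eqref{eq:A-regularity}, $U''\ge \underline I(\kappa_j+n)$ on $[-B,B]$, so $\beta=\underline I(\kappa_j+n)$. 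Since $A$ is smooth on the real line, $U$ is $C^1$ on a neighborhood of $K$, as the lemma requires. The lemma then gives
\begin{align*}
\int(\theta-\theta^\star_j)^2\Pi_{t,j}(\d\theta)\le \frac{2U'(\theta^\star_j)^2}{\underline I^2(\kappa_j+n)^2}+\frac{2}{\underline I(\kappa_j+n)}.
\end{align*}

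Next I take the conditional expectation of $U'(\theta^\star_j)^2$ given $N_{t,j}=n$. Decompose
\begin{align*}
U'(\theta^\star_j)=\kappa_j\{A'(\theta^\star_j)-A'(0)\}+\sum_{s\le t:X_s=j}\{A'(\theta^\star_j)-Y_s\}.
\end{align*}
Given the contexts, the $Y_s$ with $X_s=j$ are i.i.d.\ with mean $A'(\theta^\star_j)$ and variance $A''(\theta^\star_j)\le\overline I$. Hence the second sum is centered with conditional second moment at most $n\overline I$. For the first term, the mean value theorem together with $A''\le\overline I$ on $[-B,B]$ bounds it by $\kappa_j\overline I|\theta^\star_j|$. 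Therefore
\begin{align*}
\E[U'(\theta^\star_j)^2\mid N_{t,j}]\le 2\kappa_j^2\overline I^2(\theta^\star_j)^2+2\overline I N_{t,j}.
\end{align*}

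Substituting this into the lemma's bound gives terms of order $\kappa_j^2(\theta^\star_j)^2/(\kappa_j+N_{t,j})^2$, plus $N_{t,j}/(\kappa_j+N_{t,j})^2\le 1/(\kappa_j+N_{t,j})$, plus $1/(\kappa_j+N_{t,j})$. This is \eqref{eq:coordinate-risk} with $C$ depending only on $\underline I$ and $\overline I$.

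The only delicate point is conditioning. The lemma is applied pathwise for fixed data, and the expectation is then taken conditional on $N_{t,j}$. Because the $X_s$ are i.i.d.\ and independent of the noise structure, conditioning on the full context sequence (which fixes $N_{t,j}$) keeps the selected responses i.i.d.\ $\bP_{\theta^\star_j}$. The tower property then passes from conditioning on the contexts to conditioning on $N_{t,j}$.
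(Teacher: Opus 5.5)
Your proposal is correct and follows the paper's proof essentially step for step. You apply \cref{lem:strong-convex-moment} in one dimension at $\theta_0=\theta^\star_j$ with $\beta=\underline I(\kappa_j+N_{t,j})$. You then split $U'(\theta^\star_j)$ into the prior-bias term $\kappa_j\{A'(\theta^\star_j)-A'(0)\}$ and the centered sufficient-statistic term, bounding their conditional second moments via the mean value theorem and $A''\le\overline I$. Your explicit treatment of the conditioning (pathwise application, then conditioning on the contexts and the tower property) and the step $N_{t,j}/(\kappa_j+N_{t,j})^2\le 1/(\kappa_j+N_{t,j})$ spell out details the paper leaves implicit.
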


\begin{proof}
The potential function of the posterior $\Pi_{t,j}$ on $[-B,B]$ is given by
    \begin{align*}
    U_{t,j}(\theta)
    =(\kappa_j+N_{t,j})A(\theta)-\{\kappa_j A'(0)+S_{t,j}\}\theta,
    \end{align*}
which is $\beta$-strongly convex with $\beta=(\kappa_j+N_{t,j})\underline I$. Since $|\theta^\star_j|< B$,  we can apply \cref{lem:strong-convex-moment}. Note that
    \begin{align*}
    U_{t,j}'(\theta^\star_j)
    =\kappa_j\{A'(\theta^\star_j)-A'(0)\}
    -\{S_{t,j}-N_{t,j}A'(\theta^\star_j)\}.
    \end{align*}
By the well-known properties of an exponential family distribution, conditional on $N_{t,j}$, the mean and variance of the sufficient statistics $S_{t,j}$ is $N_{t,j}A'(\theta^\star_j)$ and $N_{t,j}A''(\theta^\star_j)$, respectively. Since  $A''(\theta^\star_j)\le \overline I$ and
$|A'(\theta^\star_j))-A'(0)|\le\overline I|\theta^\star_j|$, we have
    \begin{align*}
    \E\sbr{|U_{t,j}'(\theta^\star_j)|^2|N_{t,j}}
    \lesssim N_{t,j}+\kappa_j^2(\theta^\star_j)^2.
    \end{align*}
Taking expectation in \eqref{eq:strong-convex-moment} therefore gives the desired bound.
\end{proof}

\begin{lemma}[Binomial resolvents]
\label[lemma]{lem:binomial-resolvents}
If $N\sim\mathrm{Binomial}(t,p)$ and $\kappa\ge1$, then
    \begin{equation}\label{eq:binomial-resolvents}
    \E\sbr[3]{\frac1{\kappa+N}}
    \le\frac C{\kappa+tp},
    \qquad
    \E\sbr[3]{\frac1{(\kappa+N)^2}}
    \le\frac C{(\kappa+tp)^2}
    \end{equation}
for some absolute constant $C>0$.
\end{lemma}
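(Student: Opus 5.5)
The plan is to split on the relative size of $tp$ and $\kappa$, using a lower-tail concentration bound for the binomial on the event where $N$ is much smaller than its mean. Write $\mu:=tp$. The second inequality implies the first by Jensen, since
\[
\E[1/(\kappa+N)]\le \{\E[1/(\kappa+N)^2]\}^{1/2}\le \sqrt C/(\kappa+\mu).
\]
Therefore we only need to prove the squared version. Equivalently, we may prove both directly by the same argument.

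\emph{Case 1: $\mu\le 2$.} Here $\kappa+\mu\le 3\kappa$ because $\kappa\ge1$. Since $N\ge0$, we get the deterministic bound $1/(\kappa+N)^2\le 1/\kappa^2\le 9/(\kappa+\mu)^2$.

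\emph{Case 2: $\mu>2$.} Split on the event $E:=\{N\ge \mu/2\}$.
\begin{itemize}
\item On $E$ we have $\kappa+N\ge(\kappa+\mu)/2$, so the integrand is at most $4/(\kappa+\mu)^2$.
\item On $E^c$ we use the crude bound $1/(\kappa+N)^2\le 1/\kappa^2\le1$. The multiplicative Chernoff lower-tail bound gives $\Pr(E^c)\le \e^{-\mu/8}$.
\end{itemize}

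It remains to check that $\e^{-\mu/8}\lesssim (\kappa+\mu)^{-2}$. This is the main obstacle, because the crude bound $1/\kappa^2$ loses the $\mu$ dependence, while the Chernoff bound does not involve $\kappa$. To resolve it, keep the factor $1/\kappa^2$ from the crude bound and bound the contribution of $E^c$ by $\kappa^{-2}\e^{-\mu/8}$. Then split into two subcases.
\begin{itemize}
\item If $\kappa\ge\mu$, then $\kappa^{-2}\le 4(\kappa+\mu)^{-2}$, and $\e^{-\mu/8}\le 1$.
\item If $\kappa<\mu$, then $\kappa^{-2}\le 1$ and $(\kappa+\mu)^2\e^{-\mu/8}\le 4\mu^2\e^{-\mu/8}\le C'$, using $\sup_{x>0}x^2\e^{-x/8}<\infty$.
\end{itemize}

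Combining the cases yields both inequalities in \eqref{eq:binomial-resolvents} with an absolute constant $C$, independent of $t$, $p$ and $\kappa$.
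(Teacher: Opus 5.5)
Your proof is correct and takes essentially the same route as the paper's: split at $\mu=tp\approx 2$, bound $(\kappa+N)^{-q}\le\kappa^{-q}$ in the small-$\mu$ case, use the event $\{N\ge\mu/2\}$ with an exponential lower-tail bound on its complement, and absorb $\kappa^{-q}\e^{-c\mu}$ into $(\kappa+\mu)^{-q}$ via boundedness of $x^q\e^{-cx}$. The differences are cosmetic. You quote the multiplicative Chernoff bound, whereas the paper derives $\e^{-c_0\mu}$ by exponential Markov with $s=\log 2$. You split into the subcases $\kappa\ge\mu$ and $\kappa<\mu$ for the final absorption, whereas the paper uses $\kappa\ge 1$ directly. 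You also obtain the first inequality from the second by Jensen instead of treating $q\in\{1,2\}$ in parallel.
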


\begin{proof}
Let $\mu=tp$. If $\mu<2$, then $\kappa+\mu\le3\kappa$ because $\kappa\ge1$, and the claim follows from 
    \begin{align*}
    (\kappa+N)^{-q}\le\kappa^{-q}\le ((\kappa+\mu)/3)^{-q}
    \end{align*}
for $q\in\{1,2\}$. Assume $\mu\ge2$ and let $E=\{N\ge\mu/2\}$. On $E$,
    \begin{align*}
    (\kappa+N)^{-q}\le2^q(\kappa+\mu)^{-q}.
    \end{align*}
For the complement, for any $s>0$, Markov's inequality gives
    \begin{align*}
    \Pr(N\le\mu/2)
    &\le \e^{s\mu/2}\E [\e^{-sN}]\\
    &=\e^{s\mu/2}(1-p+p\e^{-s})^t\\
    &\le\exp\{\mu(s/2+\e^{-s}-1)\}.
\end{align*}
With $s=\log2$, this is at most $\e^{-c_0\mu}$, where $c_0:=(1-\log2)/2>0$. Thus
    \begin{align*}
    \E[(\kappa+N)^{-q}\mathbf1_{E^c}]
    \le\kappa^{-q}\e^{-c_0\mu}.
\end{align*}
Since $x^q\e^{-c_0x}$ is bounded on $[0,\infty)$ and $\kappa\ge1$,
$\e^{-c_0\mu}\lesssim\{\kappa/(\kappa+\mu)\}^q$. Combining the bounds on $E$ and $E^c$ proves \eqref{eq:binomial-resolvents}.
\end{proof}

\begin{lemma}[A localized comparator for a conjugate prior]
\label[lemma]{lem:coordinate-comparator}
Let $\Pi$ be a distribution with $\Pi_\kappa(\d\theta)\propto \exp(\kappa A'(0)\theta-\kappa A(\theta))\ind_{[-B,B]}\d\theta$ with  $\kappa\ge1$. Assume \eqref{eq:A-regularity}. For $|\theta^\star|<B$ and $n\in\bN$, there exists a probability measure $\rho$ on $[-B,B]$ such that
    \begin{align}
    \KL(\rho\|\Pi)
    &\le C\left\{1+\log\left(1+\frac n\kappa\right)+\kappa(\theta^\star)^2\right\},
    \label{eq:coordinate-kl}\\
    \int D_A(\theta,\theta^\star)\rho(\d\theta)
    &\le\frac C{\kappa+n},\label{eq:coordinate-risk-comparator}
    \end{align}
for some absolute constant $C>0$, where
$D_A(\theta,\theta'):=A(\theta)-A(\theta')-A'(\theta')(\theta-\theta')$.
\end{lemma}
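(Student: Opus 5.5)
We will take $\rho$ to be the uniform distribution on a small interval $I=[\theta^\star-r,\theta^\star+r]\cap[-B,B]$ of length comparable to $r:=\min\{c,(\kappa+n)^{-1/2}\}$. Here $c>0$ is a small constant. Since $|\theta^\star|<B$ but $\theta^\star$ may lie close to $\pm B$, we shift the interval inward if needed: choose $I\subseteq[-B,B]$ of length $r$ containing $\theta^\star$. This is possible whenever $r\le 2B$, which we ensure by taking $c\le B$. Every $\theta\in I$ then satisfies $|\theta-\theta^\star|\le r$.

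\textbf{Risk bound.} By \eqref{eq:A-regularity} and Taylor's theorem on the convex set $[-B,B]$,
\begin{align*}
D_A(\theta,\theta^\star)\le \tfrac{\overline I}{2}(\theta-\theta^\star)^2\le \tfrac{\overline I}{2}\,r^2\le \tfrac{\overline I}{2(\kappa+n)}.
\end{align*}
Integrating against $\rho$ gives \eqref{eq:coordinate-risk-comparator}.

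\textbf{KL bound.} Write $\Pi(\d\theta)=Z_\kappa^{-1}\e^{-\kappa g(\theta)}\ind_{[-B,B]}\d\theta$, where $g(\theta):=A(\theta)-A(0)-A'(0)\theta=D_A(\theta,0)$. We shifted by the constant $A(0)$, which is absorbed into $Z_\kappa$. Then
\begin{align*}
\KL(\rho\|\Pi)=\log\frac1{|I|}+\kappa\int g\,\d\rho+\log Z_\kappa .
\end{align*}
We bound the three terms in turn.

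\emph{First term.} It equals $\log(1/r)\le \tfrac12\log(\kappa+n)+C$.

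\emph{Second term.} For $\theta\in I$ we have $g(\theta)\le \tfrac{\overline I}{2}\theta^2\le \overline I\{(\theta^\star)^2+r^2\}$. Hence
\begin{align*}
\kappa\int g\,\d\rho\le \overline I\,\kappa(\theta^\star)^2+\overline I\,\kappa/(\kappa+n)\le \overline I\,\kappa(\theta^\star)^2+\overline I .
\end{align*}

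\emph{Third term.} Since $g(\theta)\ge \tfrac{\underline I}{2}\theta^2$, a Gaussian integral gives $Z_\kappa\le\int_{\R}\e^{-\kappa\underline I\theta^2/2}\d\theta=\sqrt{2\pi/(\kappa\underline I)}$. Thus $\log Z_\kappa\le -\tfrac12\log\kappa+C$.

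Summing the three terms, the logarithms combine as $\tfrac12\log((\kappa+n)/\kappa)=\tfrac12\log(1+n/\kappa)$, which yields \eqref{eq:coordinate-kl}.

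\textbf{Main obstacle.} The only delicate point is the cancellation between $\log(1/r)$ and $\log Z_\kappa$. This cancellation is what produces $\log(1+n/\kappa)$ rather than $\log(\kappa+n)$. It requires the upper bound on $Z_\kappa$ to scale as $\kappa^{-1/2}$ uniformly in $\kappa\ge1$. The lower curvature bound delivers exactly this, with the truncation to $[-B,B]$ only helping. A secondary boundary issue arises in the case $r=c<(\kappa+n)^{-1/2}$, which is impossible since $\kappa\ge1$ forces $(\kappa+n)^{-1/2}\le1$. Choosing $c\le\min\{1,B\}$ therefore makes $r=(\kappa+n)^{-1/2}\cdot\min\{1,\cdot\}$ harmless up to constants.
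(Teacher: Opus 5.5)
Your proof is correct and follows the paper's argument. Both use the decomposition $\KL(\rho\|\Pi)=-h(\rho)+\kappa\int D_A(\theta,0)\,\rho(\d\theta)+\log Z_\kappa$ and the sandwich $\frac{\underline I}{2}u^2\le D_A(u,0)\le\frac{\overline I}{2}u^2$, which gives $\log Z_\kappa\le C-\frac12\log\kappa$. The only difference is the comparator: you take a uniform law on an interval of width $\asymp(\kappa+n)^{-1/2}$, which has exact entropy and lets you handle $\theta^\star$ near $\pm B$ explicitly; the paper takes a $\N(\theta^\star,c_0/(\kappa+n))$ law conditioned on a neighborhood of $\theta^\star$ and bounds its entropy through its sup-density.

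One small correction to your last paragraph: the case $r=c$ is not impossible when $c<1$. It is harmless, though, because $c(\kappa+n)^{-1/2}\le r\le(\kappa+n)^{-1/2}$ always holds, so $\log(1/r)\le\frac12\log(\kappa+n)+\log(1/c)$.
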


\begin{proof}
Let $I_\star\subset[-B,B]$ be a small neighborhood of $\theta^\star$. Put
$v=c_0/(\kappa+n)$ with sufficiently small $c_0$, and let $\rho$ be the $\N(\theta^\star,v)$ law conditioned on $I_\star$. %The conditioning probability is bounded below uniformly because $b/\sqrt v\ge b/\sqrt{c_0}\ge1$.
Then
    \begin{align*}
    \int(\theta-\theta^\star)^2\rho(\d\theta)\lesssim v.
    \end{align*}
Moreover the conditional Gaussian density is bounded by $v^{-1/2}$, up to a multiplicative constant.
%with $C$ depending only on the uniform lower bound on the conditioning probability.
Therefore, the entropy of $\rho$ is bounded below as
    \begin{align*}
    h(\rho)&:=-\int \log(\rho(\d\theta)/\d\theta)\rho(\d\theta)\\
    &\ge -\log\|\rho(\d\theta)/\d\theta\|_\infty\\
    &\ge -\frac{1}{2}\log(v) + C_1'\\
    &\ge -\frac{1}{2}\log(\kappa+n) + C_2'
 \end{align*}
for some absolute constants $C_1'$ and $C_2'$. The prior can be written as
    \begin{align*}
    \Pi(\d\theta)=Z_0(\kappa)^{-1}
    \e^{-\kappa D_A(\theta,0)}\mathbf1_{[-B,B]}(\theta)\d\theta,
    \end{align*}
where $Z_0(\kappa)=\int_{-B}^B\e^{-\kappa D_A(u,0)}\d u$. The curvature bounds \eqref{eq:A-regularity} on $A$ imply
    \begin{align*}
    \frac{\underline I}{2}u^2\le D_A(u,0)\le\frac{\overline I}{2}u^2,
    \qquad |u|\le B.
    \end{align*}
Integrating the lower quadratic bound gives
$\log Z_0(\kappa)\le C_3'-\frac12\log\kappa$ for some absolute constants $C_3'$. Also,
    \begin{align*}
    \kappa\int D_A(\theta,0)\rho(\d\theta)
    \lesssim \kappa\{(\theta^\star)^2+v\}
    \lesssim \kappa(\theta^\star)^2+1.
\end{align*}
Therefore
    \begin{align*}
    \KL(\rho\|\Pi)
    &=-h(\rho)+\kappa\int D_A(\theta,0)\rho(\d\theta)+\log Z_0(\kappa)\\
    &\lesssim\left\{1+\log\left(1+\frac n\kappa\right)+\kappa(\theta^\star)^2\right\},
    \end{align*}
which proves \eqref{eq:coordinate-kl}. Finally,
$D_A(\theta,\theta^\star)\le(\overline I/2)(\theta-\theta^\star)^2$, so
    \begin{align*}
    \int D_A(\theta,\theta^\star)\rho(\d\theta)
    \lesssim v
    \lesssim\frac1 {\kappa+n},
    \end{align*}
proving \eqref{eq:coordinate-risk-comparator}.
\end{proof}

\subsection{Proof of \cref{thm:sequence-stability}}

\begin{proof}
For a $\cG_{t-1}$-measurable distribution $Q$, since the data are i.i.d., we have
   \begin{align*}
   \cM_t(Q)
   &=\E\sbr{-\log \int \e^{-\ell_t(\theta;(X_t,Y_t))}Q(\d\theta)|\cG_{t-1}}\\
   &=\sum_{j\ge1}p_j\E\sbr{-\log \int \e^{-\ell_t(\theta;(j,Y_t))}Q(\d\theta)\big|\cG_{t-1}, 
   X_t =j}\\
   &=\sum_{j\ge1}p_j\widetilde\cM_{j}(Q),
    \end{align*}
where we define the per-coordinate conditional predictive risk 
    \begin{align*}
         \widetilde\cM_{j}(Q):=\E\sbr{-\log \int \e^{-\ell_t(\theta;(j,Y_t))}Q(\d\theta)\big|\cG_{t-1}, X_t=j}.
    \end{align*}
Note that the gradient of $\ell_t(\theta;(j,y))$ with respect to the first argument is $A'(\theta_j)-y$ and its gradient-Lipschitz constant is at most $\overline I$. Put $M_A=\sup_{|u|\le B}|A'(u)|<\infty$. Then we can take the explicit envelopes
    \begin{align*}
    H(y)&=\overline I,\\
    G(y)&=M_A+|y|,\\
    D(y)&=2BM_A+2B|y|,
    \end{align*}
where we use the mean-value theorem giving $|A(\theta)-A(u)|\le M_A|\theta-u|$ and $|\theta-u|\le2B$ for the last line. The elementary bound $1+y^2\le \e^{|y|}$ implies
    \begin{align*}
    \e^{D(y)}\{H(y)+G^2(y)\}
    \lesssim \e^{(2B+1)|y|}.
    \end{align*}
Since the log-partition function is assumed to exist on the whole real line, the right-hand side of the above display is integrable, which verifies \eqref{eq:primitive-envelope}. Hence, by \cref{prop:primitive-stability}, there is an absolute constant $C'>0$ such that 
    \begin{align*}
    \widetilde\cM_{j}(Q_j)- \widetilde\cM_{j}(P_j)
    \le C'\{\W(P_j,\delta_{\theta^{\star}_{j}})\W(P_j,Q_j)+\W(P_j,Q_j)^2\},
    \end{align*}
for any $j\in\bN$. In view of \cref{lem:product-w2}, by multiplying by $p_j$, summing, and applying Cauchy--Schwarz, we have
    \begin{align*}
    \cM_t(Q)-\cM_t(P)
    &=\sum_{j\ge1}p_j\{\widetilde\cM_{j}(Q_j)-\widetilde\cM_{j}(P_j)\}\\
    &\lesssim \cbr{\Wp(P,\delta_{\theta^\star})\Wp(P,Q)+\Wp^2(P,Q)},
\end{align*}
which completes the proof.
\end{proof}

\subsection{Proof of \cref{thm:sequence-statistical}}

\begin{proof}
By \cref{lem:coordinate-risk}, followed by \cref{lem:binomial-resolvents}, there exists an absolute constant $C'_1>0$ such that
    \begin{align}
    \E\sbr{\int(\theta-\theta^\star_j)^2\Pi_{t,j}(\d\theta)}
    &\le C'_1\cbr{\frac1{\kappa_j+tp_j}+
    \frac{\kappa_j^2(\theta^\star_j)^2}{(\kappa_j+tp_j)^2} }\notag \\
    &= C'_1\cbr{  \frac{q_j}{1+tq_j}+\frac{p_j\theta_{\star,j}^2}{(1+tq_j)^2} } \label{eq:coordinate-after-binomial}
    \end{align}
with $q_j:=p_j/\kappa_j\asymp j^{-\zeta}$, for any $j\in\bN$ and $t\in \bN$. By \cref{lem:product-w2}, squared Wasserstein distances for these product measures add coordinatewise under the cost \eqref{eq:dp-metric}. Hence,
    \begin{align}
    \varepsilon_t^2=\E[\Wp^2(\Pi_t,\delta_{\theta^\star})]
    &\lesssim\sum_{j\ge1}\cbr{\frac{q_j}{1+tq_j}
    +\frac{p_j\theta_{\star,j}^2}{(1+tq_j)^2} }.
    \label{eq:sequence-bias-variance}
    \end{align}
For $t\ge1$, let $j_t:=\lceil t^{1/\zeta}\rceil$. Splitting at $j_t$ gives
    \begin{align*}
    \sum_{j\ge1}\frac{q_j}{1+tq_j}
    \lesssim \frac{j_t}{t}+\sum_{j>j_t}j^{-\zeta}
    \lesssim t^{-(\zeta-1)/\zeta}.
    \end{align*}
For the bias term, the Sobolev constraint yields
    \begin{align*}
    \sum_{j\ge1}\frac{p_j\theta_{\star,j}^2}{(1+tq_j)^2}
    \le R^2\sup_{j\ge1} \frac{p_jj^{-2\fs}}{(1+tq_j)^2}.
    \end{align*}
Because $p_jj^{-2\fs}\asymp j^{-(\zeta-1)}$ and $q_j\asymp j^{-\zeta}$, the last supremum is bounded by $t^{-(\zeta-1)/\zeta}$ up to a multiplicative constant by considering separately $j\le j_t$ and $j>j_t$. This proves \eqref{eq:sequence-contraction}.

It remains to prove \eqref{eq:sequence-benchmark}. Fix $m\in\bN$. For $j\le m$, use the comparator of \cref{lem:coordinate-comparator} with the deterministic scale $n=Tp_j$ and for $j>m$, retain the prior. The product comparator differs from the prior in only finitely many coordinates, so its KL divergence is the sum of the coordinatewise KL divergences. Moreover
    \begin{align*}
    \E[\ell_t(\theta, Y_t)]- \E[\ell_t(\theta^\star, Y_t)]
    =\sum_{j\ge1}p_jD_A(\theta_j,\theta_{\star,j}).
    \end{align*}
For $j\le m$, \cref{lem:coordinate-comparator} gives a risk contribution
$Tp_j/(\kappa_j+Tp_j)\lesssim 1$ and KL contribution
$\{1+\log(1+Tq_j)+\kappa_j\theta_{\star,j}^2\}$. For $j>m$, the quadratic upper bound on $D_A$ and \cref{lem:strong-convex-moment} applied to the prior at $\theta_0=0$ give
    \begin{align*}
    \int D_A(\theta,\theta_{\star,j})\Pi_{0,j}(\d\theta)
    \le C'_2 \{\kappa_j^{-1}+\theta_{\star,j}^2\}.
    \end{align*}
for any $j\in\bN$ for some absolute constant $C'_2>0$. Consequently, \cref{prop:gibbs-benchmark} yields
    \begin{align*}
    \fR\del[1]{(\Pi_t)_{t=0}^{T-1}}
    \lesssim m+\sum_{j\le m}\log(1+Tq_j)
    +\sum_{j\le m}\kappa_j\theta_{\star,j}^2
    +T\sum_{j>m}p_j\{\theta_{\star,j}^2+\kappa_j^{-1}\}.
\end{align*}
Choose $m=\lceil T^{1/\zeta}\rceil$. Since $T\le C m^\zeta$,
    \begin{align*}
    \sum_{j\le m}\log(1+Tq_j)
    \lesssim \sum_{j\le m}\{1+\zeta\log(m/j)\}
    \lesssim m,
    \end{align*}
where the last step follows from $\log(m!)\ge m\log m-m$. Also,
    \begin{align*}
    \sum_{j\le m}\kappa_j\theta_{\star,j}^2
    \lesssim m\sum_{j\le m}j^{2\fs}(\theta^{\star}_j)^2
    \lesssim mR^2.
    \end{align*}
Finally,
    \begin{align*}
    T\sum_{j>m}p_j\theta_{\star,j}^2
    \lesssim  TR^2\sup_{j>m}j^{-\fa-2\fs}
    \lesssim Tm^{1-\zeta},
    \end{align*}
and
    \begin{align*}
    T\sum_{j>m}p_j\kappa_j^{-1}
    \lesssim  T\sum_{j>m}j^{-\zeta}
    \lesssim  Tm^{1-\zeta}.
\end{align*}
Since $m\asymp T^{1/\zeta}$, all terms are $O(T^{1/\zeta})$, proving \eqref{eq:sequence-benchmark}.
\end{proof}

\subsection{Proof of \cref{prop:spectral-error}}

\begin{proof}
By \cref{lem:product-w2},
    \begin{align*}
    \Wp^2(Q_t^{(m)},\Pi_t)
    = \sum_{j>m}p_j\W^2(\Pi_{0,j},\Pi_{t,j}).
    \end{align*}
By the triangle inequality
    \begin{align*}
    \W^2(\Pi_{0,j},\Pi_{t,j})
    \le  2\W^2(\Pi_{0,j}, \delta_{\theta^\star_j})  
    + 2 \W^2( \delta_{\theta^\star_j},\Pi_{t,j}), 
    \end{align*}
and thus,
    \begin{align*}
    \Wp^2(Q_t^{(m)},\Pi_t)
    &\le 2 \sum_{j>m}p_j \W^2(\Pi_{0,j}, \delta_{\theta^\star_j})
    + 2\sum_{j>m}p_j\W^2(\delta_{\theta^\star_j},\Pi_{t,j})\\
    &\le 2 \sum_{j>m}p_j \W^2(\Pi_{0,j}, \delta_{\theta^\star_j})
    +2\Wp^2(\Pi_t, \delta_{\theta^\star}),
    \end{align*}
where the second term in the last line is bounded by $t^{-(\zeta-1)/\zeta}$ by \cref{thm:sequence-statistical}. Hence, it remains to bound the first term. \cref{lem:strong-convex-moment} applied to the prior at $0$ gives
    \begin{align*}
    \W^2(\Pi_{0,j}, \delta_{\theta^\star_j})
    & \le \int(\theta- \theta^\star_j)^2\Pi_{0,j}(\d\theta)\\
    &\le 2 \int\theta^2\Pi_{0,j}(\d\theta)+2(\theta^\star_j)^2\\
    &\le C' \kappa_j^{-1}+2(\theta^\star_j)^2
    \end{align*}
for some absolute constant $C'>0$, which implies that
    \begin{align*}
    \E[\Wp^2(Q_t^{(m)},\Pi_t)]
    \lesssim \sum_{j>m}p_j\{\kappa_j^{-1}+(\theta^\star_j)^2\}.
    \end{align*}
The prior term is $O(m^{1-\zeta})$ because $p_j\kappa_j^{-1}\asymp j^{-\zeta}$. For the truth term,
    \begin{align*}
    \sum_{j>m}p_j(\theta^\star_j)^2
    \le R^2\sup_{j>m}p_jj^{-2\fs}
    \lesssim  m^{-(\fa+2\fs)}
    = m^{1-\zeta},
 \end{align*}
which completes the proof.
\end{proof}

\subsection{Proof of \cref{thm:spectral-fast}}

\begin{proof}
\cref{thm:sequence-statistical} and \cref{prop:spectral-error} give
    \begin{align*}
    \varepsilon_t\lesssim t^{-(\zeta-1)/(2\zeta)},
    \qquad
    \alpha_t\lesssim m^{(1-\zeta)/2}+t^{-(\zeta-1)/(2\zeta)}.
    \end{align*}
\cref{thm:main-transfer} therefore yields
    \begin{align*}
    \fR\del[1]{\{Q^{(m)}_t\}_{t=0}^{T-1}}-\fR\del[1]{(\Pi_t)_{t=0}^{T-1}}
    &\lesssim \sbr{ m^{(1-\zeta)/2}\sum_{t=1}^{T-1}t^{-(\zeta-1)/(2\zeta)}+\sum_{t=1}^{T-1}t^{-(\zeta-1)/\zeta}+Tm^{1-\zeta}}\\
    &\lesssim \sbr{ T^{(\zeta+1)/(2\zeta)}m^{(1-\zeta)/2}+ T^{1/\zeta} +Tm^{1-\zeta}}
    \end{align*}
Combining this with \eqref{eq:sequence-benchmark} proves \eqref{eq:spectral-regret-general}; setting $m_T=\lceil T^{1/\zeta}\rceil$ gives \eqref{eq:spectral-fast-rate}.
\end{proof}

\subsection{Proof of \cref{thm:sequence_minimax}}

\begin{proof}
A predictor is allowed to observe $X_t$ and the past history before choosing a conditional density $\widehat p_t(\cdot|X_t,\cG_{t-1})$. Its regret is
    \begin{align*}
    \E\sbr{\sum_{t=1}^T
    \log\frac{p_{\star,X_t}(Y_t)}{\widehat p_t(Y_t| X_t,\cG_{t-1})}}.
    \end{align*}
Let $m=\lfloor cT^{1/\zeta}\rfloor$ for a fixed small $c>0$ and put $b_m^2:=c_0m^{-2\fs-1}$. Consider the uniform prior on the hypercube
    \begin{align*}
    \theta_j=\omega_jb_m,
    \quad j=m+1,\ldots,2m,
    \qquad \omega_j\in\{-1,1\},
\end{align*}
with all other coordinates zero. If $c_0$ is sufficiently small, every vertex belongs to $\ThetaSob$.   Since the prior distribution $P_\omega$ of $\omega$ does not depend on $X_{1:T}$, we have
    \begin{align*}
    I(\omega;Y_{1:T}\mid X_{1:T})
    &=\KL(P_{\omega, Y_{1:T}|X_{1:T}}\| P_{\omega}P_{Y_{1:T}|X_{1:T}})\\
    &=\E_{P_\omega}\sbr{\KL(P_{Y_{1:T}|\omega, X_{1:T}}\|P_{Y_{1:T}|X_{1:T}})}.
    \end{align*}
%Let $M(\d y_{1:T}\mid X_{1:T}):=\E_{P_\omega}[P_{Y_{1:T}|\omega, X_{1:T}}(\d y_{1:T})]$ be the Bayesian mixture law induced by the prior. Moreover, 
Let $\widehat{P}_{Y_{1:T}|X_{1:T}}$ be the joint law of $Y_{1:T}$ generated by the sequential predictive densities. Then we can see that the Bayes-average log-loss regret of the sequential predictor under the prior $P_\omega$ is at least the conditional mutual information $I(\omega;Y_{1:T}\mid X_{1:T})$, as
    \begin{align*}
    &\E_{P_\omega}\sbr{\KL(P_{Y_{1:T}|\omega, X_{1:T}}\|\widehat{P}_{Y_{1:T}|\omega, X_{1:T}})}\\
    &=\E_{P_\omega}\sbr{\KL(P_{Y_{1:T}|\omega, X_{1:T}}\|P_{Y_{1:T}|X_{1:T}})}
    +\KL(P_{Y_{1:T}|X_{1:T}}|\widehat{P}_{Y_{1:T}|X_{1:T}}))\\
    &\ge I(\omega;Y_{1:T}\mid X_{1:T}).
    \end{align*}
Conditional on $X_{1:T}$, the observations attached to different active coordinates depend on disjoint independent bits, hence
    \begin{align*}
    I(\omega;Y_{1:T}\mid X_{1:T})
    = \sum_{j=m+1}^{2m}I(\omega_j;Y^{(j)}_{1:N_j}\mid N_j),
    \end{align*}
where $N_j$ is the number of occurrences of coordinate $j$ and $Y^{(j)}_{1:N_j}$ is the collection of observations attached to coordinate $j$. For one active coordinate, let $P_+$ and $P_-$ denote the laws at natural parameters $b_m$ and $-b_m$. Their Bhattacharyya affinity is
    \begin{align*}
    \int\sqrt{\d P_+\d P_-}
    =\exp\left\{A(0)-\frac{A(b_m)+A(-b_m)}2\right\}
    \le \e^{-\underline Ib_m^2/2},
    \end{align*}
by the lower bound on $A''$. So for $n$ observations the affinity is at most $\e^{-n\underline Ib_m^2/2}$. Since $\TV(P,Q)=1-\int\min(\d P,\d Q)$ and $\min(p,q)\le\sqrt{pq}$ pointwise, we have
    \begin{align*}
    \TV(P_+^{\otimes n},P_-^{\otimes n})
    \ge 1-\e^{-n\underline Ib_m^2/2}.
    \end{align*}
Let $M:=(P_+^{\otimes n}+P_-^{\otimes n})/2$. Conditional on $N_j=n$, Pinsker's inequality (e.g. \citet[Section~2.4]{tsybakov2009nonparametric}) gives
    \begin{align*}
    I(\omega_j;Y^{(j)}_{1:n})
    =\frac12\KL(P_+^{\otimes n}\|M)+\frac12\KL(P_-^{\otimes n}\|M)
    \ge \frac{1}{2}\TV(P_+^{\otimes n},P_-^{\otimes n})^2.
    \end{align*}
For $j\in[m+1,2m]$, $p_j\asymp m^{-\fa}$ and $\mu_j:=Tp_j\asymp Tm^{-\fa}$. Since $\mu_jb_m^2\asymp Tm^{-\zeta}\asymp1$ and $\mu_j\asymp m^{2\fs+1}\to\infty$, the same exponential-Markov calculation as in \cref{lem:binomial-resolvents} gives $\Pr(N_j<\mu_j/2)\le \e^{-c_0\mu_j}$. Hence $\Pr(N_j\ge\mu_j/2)\gtrsim 1$ uniformly over the active coordinates for all sufficiently large $T$. On this event the preceding mutual-information lower bound is bounded below by a positive constant depending only on the fixed model constants. Thus
    \begin{align*}
    \E [I(\omega_j;Y^{(j)}_{1:N_j}\mid N_j)]\gtrsim 1,
    \end{align*}
uniformly over active $j$. Summing over the $m$ active coordinates yields Bayes-average regret at least $m\asymp T^{1/\zeta}$ up to a multiplicative constant. Since the maximum risk dominates Bayes average risk, the minimax regret is at least $T^{1/\zeta}$ up to a multiplicative constant.
\end{proof}

\section{Proofs for \cref{sec:gp}}\label{app:gp}

We introduce additional notation used in this section. For a Gaussian law $P$ on $L^2(\bP_X)$ with covariance kernel $\sK_P$, the covariance operator $\Sigma_P$ is defined as
    \begin{align*}
        \Sigma_P f(x')  =\int \sK_P(x,x')f(x)\bP_X(\d x).
    \end{align*}
Let $\bH$ be the reproducing kernel Hilbert space (RKHS) associated with the prior GP $\Pi_0$. Moreover, we define
    \begin{equation}\label{eq:gp-cutoffs}
    J_t^\dag:=\ceil{t^{d/(d+2\fs)}},
    \qquad
    N_t^\dag:=\ceil{t^{d/(2\fs)}}
    \end{equation}

\subsection{Regular spectral design}
Define the event
    \begin{align}
    \label{eq:event_gram_matrix}
        \cE_t:=\cbr{\frac t2I_{N_t^\dag}
    \preceq \Psi_{t,N_t^\dag}^\top\Psi_{t,N_t^\dag}
    \preceq \frac{3t}{2}I_{N_t^\dag}}.
    \end{align}
 
\begin{lemma}[Regular spectral design]
\label{lemma}{lem:gp-design}
Under \cref{ass:gp-true}, there are absolute constants $c>0$ and $C>0$ such that
    \begin{equation}\label{eq:gp-design-bound}
    \Pr\del{ \cE_t}
    \ge 1- C N_t^\dag\exp\{-ct/N_t^\dag\}.
    \end{equation}
Since $N_t^\dag\asymp t^{d/(2\fs)}$, the probability of the complement tends to 0 faster than $t^{-b}$ for any $b>0$.
\end{lemma}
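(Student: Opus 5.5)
This is a standard matrix Chernoff argument for the empirical Gram matrix of a bounded orthonormal system. Write $N:=N_t^\dag$ and
\begin{align*}
\Psi_{t,N}^\top\Psi_{t,N}=\sum_{i=1}^t \xi_i\xi_i^\top,\qquad \xi_i:=\psi_{1:N}(X_i)\in\R^N .
\end{align*}
The summands $\xi_i\xi_i^\top$ are i.i.d. positive semidefinite random matrices. Because $(\psi_j)$ is orthonormal in $L^2(\bP_X)$ and $X_i\sim\bP_X$, each summand has mean $\E[\xi_i\xi_i^\top]=I_N$. So the expected Gram matrix is exactly $tI_N$, and the event $\cE_t$ says that all eigenvalues of the sum lie in $[(1-\delta)t,(1+\delta)t]$ with $\delta=1/2$.

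Next we need a uniform bound on the summands. By \eqref{eq:gp-basis} in \cref{ass:gp-true},
\begin{align*}
\lambda_{\max}(\xi_i\xi_i^\top)=\|\xi_i\|^2=\sum_{j\le N}\psi_j(X_i)^2\le B_\psi^2N=:R_N
\end{align*}
almost surely. The matrix Chernoff inequality (Tropp, 2012, Theorem 1.1) applies to independent PSD matrices with $\lambda_{\max}\le R_N$ and $\mu_{\min}=\mu_{\max}=t$. It gives
\begin{align*}
\Pr\cbr{\lambda_{\min}\le \tfrac t2}&\le N\,\del{\e^{-1/2}/(1/2)^{1/2}}^{t/R_N},\\
\Pr\cbr{\lambda_{\max}\ge \tfrac{3t}2}&\le N\,\del{\e^{1/2}/(3/2)^{3/2}}^{t/R_N}.
\end{align*}
Both bases are strictly less than one, so each bound has the form $N\exp\{-c' t/(B_\psi^2N)\}$. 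A union bound then yields \eqref{eq:gp-design-bound} with $C=2$ and $c=c'/B_\psi^2$. Both constants are absolute, since $B_\psi$ is fixed.

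Finally, the closing claim is a computation. Since $\fs>d$, we have $d/(2\fs)<1/2$. Hence $t/N_t^\dag\asymp t^{1-d/(2\fs)}\ge t^{1/2}$, and
\begin{align*}
CN_t^\dag\e^{-ct/N_t^\dag}\lesssim t\,\e^{-c t^{1/2}}=o(t^{-b})
\end{align*}
for every $b>0$.

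The proof has no real obstacle. The one step that needs care is confirming that $\E[\xi\xi^\top]=I_N$ holds exactly, which is orthonormality under the same $\bP_X$ that generates the design, and that the uniform basis bound makes $R_N$ grow only linearly in $N$. A possible alternative is a matrix Bernstein bound, with variance proxy $\|\E(\xi\xi^\top)^2\|\le B_\psi^2N$; it gives the same exponent $t/N$ up to constants.
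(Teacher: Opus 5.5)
Your proof is correct and follows the paper's own argument. The paper also uses $\E[\psi_{1:N}(X)\psi_{1:N}(X)^\top]=I_N$ and $\|\psi_{1:N}(X)\|^2\le B_\psi^2N$, and then applies the upper and lower matrix Chernoff inequalities with relative deviation $1/2$. Your explicit Chernoff bases, the union bound, and the check of the closing $o(t^{-b})$ claim only spell out details the paper leaves implicit.
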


\begin{proof}
Let $N:=N_t^\dag$. For $\psi_{1:N}(x)=(\psi_1(x),\ldots,\psi_L(x))^\top$, orthonormality gives $\E[\psi_{1:N}(X)\psi_{1:N}(X)^\top]=I_L$, while \eqref{eq:gp-basis} gives $\opnorm{\psi_{1:N}(X)\psi_{1:N}(X)^\top}=\norm{\psi_{1:N}(X)}^2\le B_\psi^2L$.
The upper and lower matrix Chernoff inequalities with relative deviation $1/2$ yield \eqref{eq:gp-design-bound}.
\end{proof}

\subsection{Proof of \cref{thm:gp-predictive-comparison}}

\begin{lemma}[Gaussian predictive excess]
\label[lemma]{lem:gp-pred-formula}
Let $P$ be a Gaussian law on $L^2(\bP_X)$. Let $m_P$ be the mean function of $P$ and $v_P$ be the variance of $f(x)$ for $f\sim P$. Then
    \begin{equation}\label{eq:gp-pred-excess}
    \cM_t(P)-\cM_t(\delta_{f^\star})
    =\frac12\int \sbr{ g\del[3]{\frac{v_P(x)}{\sigma_\xi^2}}
    +\frac{\{m_P(x)-f^\star(x)\}^2}{\sigma_\xi^2+v_P(x)} }\bP_X(\d x),
    \end{equation}
where $g(u)=\log(1+u)-u/(1+u)$. In particular,
    \begin{equation}\label{eq:gp-pred-excess-crude}
    0\le \cM_t(P)-\cM_t(\delta_{f^\star})
    \le\frac1{2\sigma_\xi^2}\W^2(P,\delta_{f^\star}).
    \end{equation}
\end{lemma}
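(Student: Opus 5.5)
The plan is to compute the conditional predictive risk pointwise in $x$ and then integrate against $\bP_X$. Because $P$ is Gaussian, $f(x)$ under $P$ is $\N(m_P(x),v_P(x))$. With $\eta=1$ and the Gaussian loss \eqref{eq:gp-loss}, the mixture $\int \e^{-\ell_t(f;(x,y))}P(\d f)$ is then the density of $\N(m_P(x),\sigma_\xi^2+v_P(x))$ at $y$. Given $X_t=x$, the response is $Y_t\sim\N(f^\star(x),\sigma_\xi^2)$, and $\e^{-\ell_t(f^\star;(x,y))}$ is exactly that true density. So, as already displayed before \cref{thm:gp-predictive-comparison}, $\cM_t(P)-\cM_t(\delta_{f^\star})$ is the $\bP_X$-average of $\KL\del[1]{\N(f^\star(x),\sigma_\xi^2)\|\N(m_P(x),\sigma_\xi^2+v_P(x))}$. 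Since $X_t$ is independent of $\cG_{t-1}$ and $P$ is fixed, conditioning on $\cG_{t-1}$ changes nothing.

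Next I will use the closed form for the KL divergence between univariate Gaussians:
\[
\KL(\N(a,s^2)\|\N(b,r^2))=\tfrac12\Bigl[\log\tfrac{r^2}{s^2}+\tfrac{s^2}{r^2}-1+\tfrac{(a-b)^2}{r^2}\Bigr].
\]
Take $s^2=\sigma_\xi^2$, $r^2=\sigma_\xi^2+v_P(x)$, and write $u=v_P(x)/\sigma_\xi^2$. Then $\log(r^2/s^2)=\log(1+u)$ and $s^2/r^2-1=-u/(1+u)$, so the variance part is $g(u)$ and the mean part is $\{m_P(x)-f^\star(x)\}^2/(\sigma_\xi^2+v_P(x))$. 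Integrating over $x$ gives \eqref{eq:gp-pred-excess}. Measurability and integrability are harmless because every term is nonnegative, so Tonelli applies.

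For \eqref{eq:gp-pred-excess-crude}, nonnegativity holds because both integrands are nonnegative; for $g$ this follows from $g(0)=0$ and $g'(u)=u/(1+u)^2\ge0$. For the upper bound I will use $g(u)\le u-u/(1+u)=u^2/(1+u)\le u$, where the first step is $\log(1+u)\le u$. I will also use $\{m_P-f^\star\}^2/(\sigma_\xi^2+v_P)\le \{m_P-f^\star\}^2/\sigma_\xi^2$. Together these bound the integrand by $\sigma_\xi^{-2}\{v_P(x)+(m_P(x)-f^\star(x))^2\}$, which is $\sigma_\xi^{-2}\E_{f\sim P}[(f(x)-f^\star(x))^2]$ by the bias--variance identity. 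Integrating over $\bP_X$ and applying Fubini yields $\sigma_\xi^{-2}\int\|f-f^\star\|^2P(\d f)$. This equals $\sigma_\xi^{-2}\W^2(P,\delta_{f^\star})$, since the only coupling with a Dirac mass is the product coupling.

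The expected main obstacle is only bookkeeping, namely getting the constant $1/2$ right. Tracking the prefactor $\tfrac12$, the bound is $\tfrac1{2\sigma_\xi^2}\W^2$, exactly as stated. The one point needing care is that $P$ is viewed as a law on $L^2(\bP_X)$ while $m_P(x)$ and $v_P(x)$ are pointwise quantities. I will take a jointly measurable version of the process so that the interchange of $\int P(\d f)$ and $\int\bP_X(\d x)$ is legitimate.
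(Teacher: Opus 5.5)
Your proposal is correct and follows the same route as the paper. The identity comes from the pointwise Gaussian KL divergence between $\N(f^\star(x),\sigma_\xi^2)$ and $\N(m_P(x),\sigma_\xi^2+v_P(x))$. The crude bound then uses $0\le g(u)\le u$, $(\sigma_\xi^2+v_P)^{-1}\le\sigma_\xi^{-2}$ and the bias--variance identity $\int\{v_P+(m_P-f^\star)^2\}\,\d\bP_X=\int\|f-f^\star\|^2P(\d f)=\W^2(P,\delta_{f^\star})$, which states more carefully what the paper compresses into its last line.
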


\begin{proof}
Conditionally on $X=x$, the true response law is $\N(f^\star(x),\sigma_\xi^2)$ and the predictive law is $\N(m_P(x),\sigma_\xi^2+v_P(x))$. Their Gaussian KL divergence gives \eqref{eq:gp-pred-excess}. Since $0\le g(u)\le u$ and $(\sigma_\xi^2+v_P)^{-1}\le\sigma_\xi^{-2}$, the right-hand side of  \eqref{eq:gp-pred-excess} is at most $\{\norm{m_P-f^\star}^2+\|v_P\|\}/(2\sigma_\xi^2)$, which equals the right-hand side of \eqref{eq:gp-pred-excess-crude}.
\end{proof}

\begin{lemma}[Trace norm and Bures–Wasserstein distance]
\label[lemma]{lem:gp-bures}
For positive trace-class operators $A$ and $B$ 
    \begin{equation}\label{eq:gp-bures}
    \norm{A-B}_1
    \le \{\sqrt{\tr(A)}+\sqrt{\tr(B)}\}
    \sd_{\rm BW}(A,B),
    \end{equation}
where $\|\cdot\|_1$ denotes the Schatten-1 norm and $\sd_{\rm BW}$ does the Bures–Wasserstein distance defined as
    \begin{align*}
    \sd_{\rm BW}^2(A,B)
    :=\tr(A+B-2(A^{1/2}BA^{1/2})^{1/2}).
    \end{align*}
\end{lemma}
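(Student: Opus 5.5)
We need to prove $\|A-B\|_1\le(\sqrt{\tr A}+\sqrt{\tr B})\,\sd_{\rm BW}(A,B)$. The plan is to factor $A$ and $B$ through a common Hilbert–Schmidt representation aligned so that the Bures–Wasserstein distance becomes a Hilbert–Schmidt distance, and then apply a Hölder-type inequality for Schatten norms.

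Write $A=XX^*$ and $B=YY^*$ with $X=A^{1/2}$ and $Y=B^{1/2}U$, where $U$ is a partial isometry still to be chosen. The first step is to recall the variational characterization
\[
\sd_{\rm BW}^2(A,B)=\inf_{U}\|A^{1/2}-B^{1/2}U\|_2^2 ,
\]
with $\|\cdot\|_2$ the Hilbert–Schmidt norm and the infimum over partial isometries (or unitaries on a suitably enlarged space). This holds because
\[
\|A^{1/2}-B^{1/2}U\|_2^2=\tr A+\tr B-2\,\mathrm{Re}\,\tr(A^{1/2}B^{1/2}U),
\]
and $\sup_U \mathrm{Re}\,\tr(A^{1/2}B^{1/2}U)=\|A^{1/2}B^{1/2}\|_1$. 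The supremum is attained at the polar factor of $B^{1/2}A^{1/2}$. Moreover, $\|A^{1/2}B^{1/2}\|_1=\tr\bigl((A^{1/2}BA^{1/2})^{1/2}\bigr)$, since $|B^{1/2}A^{1/2}|=(A^{1/2}BA^{1/2})^{1/2}$. Fixing $U$ to be this optimal polar factor gives $\|X-Y\|_2=\sd_{\rm BW}(A,B)$ exactly. The polar decomposition exists for trace-class (indeed compact) operators, so no limiting argument is required.

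The second step is the algebraic decomposition
\[
A-B=XX^*-YY^*=X(X-Y)^*+(X-Y)Y^* .
\]
Hölder's inequality for Schatten norms, $\|ST\|_1\le\|S\|_2\|T\|_2$, then yields
\[
\|A-B\|_1\le \|X\|_2\|X-Y\|_2+\|X-Y\|_2\|Y\|_2 .
\]
Finally, $\|X\|_2^2=\tr(A)$, and since $U$ is a partial isometry, $\|Y\|_2^2=\tr(U^*BU)\le\tr B$. Combining these gives the claim.

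The main obstacle is the first step: verifying that the trace-norm duality $\sup_U\mathrm{Re}\,\tr(CU)=\|C\|_1$ applies to $C=A^{1/2}B^{1/2}$ in infinite dimensions, and that $\|A^{1/2}B^{1/2}\|_1=\tr\bigl((A^{1/2}BA^{1/2})^{1/2}\bigr)$. For the first point, note that $A^{1/2}$ and $B^{1/2}$ are Hilbert–Schmidt, so $C$ is trace class. For the second, use $C^*C=B^{1/2}AB^{1/2}$, which has the same nonzero spectrum as $A^{1/2}BA^{1/2}$. The singular values of $C$ therefore coincide with the eigenvalues of $(A^{1/2}BA^{1/2})^{1/2}$, and summing them gives the identity.
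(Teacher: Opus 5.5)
Your architecture matches the paper's. You write $X=A^{1/2}$ and $Y=B^{1/2}U$, use the identity $XX^*-YY^*=X(X-Y)^*+(X-Y)Y^*$, and apply H\"older $\|ST\|_1\le\|S\|_{\rm HS}\|T\|_{\rm HS}$. The paper takes $U$ unitary and passes to the infimum at the end.

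The step that fails is fixing $U$ as the polar factor of $B^{1/2}A^{1/2}$. That factor is only a partial isometry, with final space $\overline{\operatorname{ran}}(B^{1/2}A^{1/2})$. Hence $YY^*=B^{1/2}UU^*B^{1/2}$ equals $B$ only when $UU^*B^{1/2}=B^{1/2}$, i.e.\ when $\overline{\operatorname{ran}}(B^{1/2}A^{1/2})\supseteq\overline{\operatorname{ran}}\,B^{1/2}$, and this need not hold.

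The same condition is used silently in your expansion $\|A^{1/2}-B^{1/2}U\|_{\rm HS}^2=\tr(A)+\tr(B)-2\operatorname{Re}\tr(A^{1/2}B^{1/2}U)$. For a partial isometry the middle term is $\tr(U^*BU)$, which, as you note yourself, can be strictly smaller than $\tr(B)$. In particular, the infimum over partial isometries (which includes $U=0$) generally lies strictly below $\sd_{\rm BW}^2(A,B)$.

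Concretely, take $A=\operatorname{diag}(1,0)$ and $B=\operatorname{diag}(0,1)$. Then $B^{1/2}A^{1/2}=0$, so $U=0$ and $Y=0$. Consequently $X(X-Y)^*+(X-Y)Y^*=A\neq A-B$, and $\|X-Y\|_{\rm HS}=1\neq\sqrt2=\sd_{\rm BW}(A,B)$. Your chain would then assert $\|A-B\|_1=2\le 1$.

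The repair is local. You need a $U$ with $UU^*B^{1/2}=B^{1/2}$ and $\operatorname{Re}\tr(A^{1/2}B^{1/2}U)=\|A^{1/2}B^{1/2}\|_1$. Then $YY^*=B$, $\|Y\|_{\rm HS}^2=\tr(B)$, and the rest of your argument goes through verbatim.

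Adding to the polar factor any partial isometry $U'$ from a subspace of $\ker(B^{1/2}A^{1/2})$ into $\ker(A^{1/2}B^{1/2})$ preserves the trace pairing, since $A^{1/2}B^{1/2}U'=0$. In finite dimensions the two kernels have equal dimension, so $U$ can be made unitary.

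In infinite dimensions the kernels need not have equal dimension, so a unitary extension may not exist, and the infimum over unitaries need not be attained. You have two ways around this.
\begin{itemize}
\item Let $U'$ map onto $\ker(A^{1/2}B^{1/2})\cap\overline{\operatorname{ran}}\,B^{1/2}$. This is possible because $B^{1/2}$ maps that space injectively into $\ker A^{1/2}\subseteq\ker(B^{1/2}A^{1/2})$.
\item Replace $A,B$ by $A\oplus 0$ and $B\oplus 0$ on $H\oplus H$. This leaves $\|A-B\|_1$, $\tr(A)$, $\tr(B)$ and $\sd_{\rm BW}(A,B)$ unchanged and makes both kernels infinite-dimensional.
\end{itemize}
Alternatively, follow the paper: bound $\|A-B\|_1$ for every unitary $U$ and take the infimum. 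In infinite dimensions, however, the identity $\inf_{U}\|A^{1/2}-B^{1/2}U\|_{\rm HS}=\sd_{\rm BW}(A,B)$ over unitaries needs a finite-rank approximation argument. So your remark that no limiting argument is required holds only for the modified $U$.
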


\begin{proof}
Use the Hilbert--Schmidt representation $\sd_{\rm BW}(A,B)=\inf_U\norm{A^{1/2}-B^{1/2}U}_{\rm HS}$. For $X=A^{1/2}$ and $Y=B^{1/2}U$, $A-B=(X-Y)X^\ast+Y(X^\ast-Y^\ast)$. By the H\"older inequality for the Schatten norm gives
    \begin{align*}
    \norm{A-B}_1
    \le\{\norm{X}_{\rm HS}+\norm{Y}_{\rm HS}\}
    \norm{X-Y}_{\rm HS}.
    \end{align*}
Taking the infimum over $U$ proves the claim.
\end{proof}

\begin{proof}[Proof of \cref{thm:gp-predictive-comparison}]
Recall the notation used in \cref{lem:gp-pred-formula,lem:gp-bures}. Let $m_P$ and $\Sigma_P$ be the mean function and covariance operator of $P$. Define $m_Q$ and $\Sigma_Q$ likewise. We write
    \begin{align*}
    \varepsilon:=\W(P,\delta_{f^\star}),
    \qquad  \alpha:=\W(P,Q),
    \end{align*}
and put $b_P:=m_P-f^\star$ and $b_Q:=m_Q-f^\star$. By \cref{lem:gp-pred-formula}, we have
    \begin{align}
    \cM_t(Q)-\cM_t(P)
    &=\cM_t(Q)-\cM_t(\delta_{f^\star})-\{\cM_t(P)-\cM_t(\delta_{f^\star})\} \notag\\
    &=\frac12\int
    \left[ g\left(\frac{v_Q(x)}{\sigma_\xi^2}\right)-g\left(\frac{v_P(x)}{\sigma_\xi^2}\right)
    +\frac{b_Q(x)^2}{\sigma_\xi^2+v_Q(x)}-\frac{b_P(x)^2}{\sigma_\xi^2+v_P(x)}
    \right]\bP_X(\d x) \label{eq:gauss-pred-diff},
    \end{align}
By the Gaussian Wasserstein formula $\W^2(P,Q)=\|m_p-m_Q\|^2+\sd_{\rm BW}^2(\Sigma_Q,\Sigma_P)$, we have $\sd_{\rm BW}(\Sigma_Q,\Sigma_P)\le \alpha$. Therefore, \cref{lem:gp-bures} gives
    \begin{align*}
    \int |v_Q(x)-v_P(x)|\bP_X(\d x)
    \le\norm{\Sigma_Q-\Sigma_P}_1
    \le(2\varepsilon+\alpha)\alpha,
    \end{align*}
because $\sqrt{\tr(\Sigma_P)}\le \varepsilon$ and $\sqrt{\tr(\Sigma_Q)}\le \varepsilon+\alpha$. In \eqref{eq:gp-pred-excess}, $g'(u)=u/(1+u)^2\le1/4$, so the variance-only difference in \eqref{eq:gauss-pred-diff} is bounded by a constant multiple of $(2\varepsilon+\alpha)\alpha$. For the mean-related difference in \eqref{eq:gauss-pred-diff}, we use the inequality
    \begin{align*}
    &\int \abs{\frac{b_Q^2(x)}{\sigma_\xi^2+v_Q(x)}-\frac{b_P^2(x)}{\sigma_\xi^2+v_P(x)}}\bP_X(\d x)\\
    &\le\frac{1}{\sigma_\xi^2}\int \abs{b_Q^2(x)-b_P^2(x)}\bP_X(\d x)
    +\frac{1}{\sigma_\xi^4}\int b_P^2(x)|v_Q(x)-v_P(x)|\bP_X(\d x)\\
    &\le\frac{1}{\sigma_\xi^2}\int \abs{b_Q^2(x)-b_P^2(x)}\bP_X(\d x)
    +\frac{1}{\sigma_\xi^4}\overline v(P,Q)\norm{b_P}^2
    \end{align*}
For the first term, we use the Gaussian Wasserstein formula to have $\norm{b_Q-b_P}=\norm{m_Q-m_P}\le \alpha$ and $\norm{b_P}\le \varepsilon$. Moreover, $\norm{b_Q}\le \norm{b_Q-b_P}+\norm{b_P}\le \varepsilon+\alpha$. Therefore, by the Cauchy--Schwarz inequality
    \begin{equation}\label{eq:gp-mean-square-diff}
    \int \abs{b_Q^2(x)-b_P^2(x)}\bP_X(\d x)
    \le  \norm{b_Q-b_P}( \norm{b_Q}+  \norm{b_P})
    \le \alpha(2\varepsilon+\alpha).
    \end{equation}
Combining these bounds proves \eqref{eq:gp-predictive-comparison}.
\end{proof}

\subsection{Proof of \cref{thm:gp-statistical}}

\begin{proof}
For a Gaussian process posterior $\Pi_t$, we have
    \begin{align*}
    \W^2(\Pi_t,\delta_{f^\star})
    =\norm[0]{\widehat f_t-f^\star}^2
    +\tr(\Sigma_t).
\end{align*}
where $\widehat f_t$ denotes the posterior mean. Taking expectations and applying \cref{lem:gp-krr,prop:gp-covariance} below proves \eqref{eq:gp-contraction}. \cref{lem:gp-pred-formula} gives
    \begin{align*}
    \fR\del[1]{(\Pi_t)_{t=0}^{T-1}}
    \le \frac1{2\sigma_\xi^2}\sum_{t=0}^{T-1} \E[\W^2(\Pi_t,\delta_{f^\star})].
    \end{align*}
The $t=0$ term is finite and the remaining sum is bounded by $\sum_{t=1}^{T-1}t^{-2\fs/(d+2\fs)}
\lesssim T^{d/(d+2\fs)}$ up to a multiplicative constant.
\end{proof}

\begin{lemma}[Exact posterior covariance geometry]
\label[lemma]{prop:gp-covariance}
Let $\Sigma_t$ be the covariance operator of the GP posterior $\Pi_t$. Under \cref{ass:gp-true,ass:gp-eigenvalues},
    \begin{equation}\label{eq:gp-cov-op-app}
    \E\sbr{\opnorm{\Sigma_t}}\lesssim t^{-1},
    \qquad
    \E\sbr{\tr(\Sigma_t)} \lesssim t^{-2\fs/(d+2\fs)}.
    \end{equation}
\end{lemma}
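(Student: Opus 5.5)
The plan is to write the exact posterior covariance in the population eigenbasis and reduce both claims to bounds on a random matrix. In the coordinates $f=\sum_j\sqrt{\lambda_j}Z_j\psi_j$, the posterior covariance operator in the basis $(\psi_j)$ is the infinite matrix
\begin{align*}
\Sigma_t=\Lambda^{1/2}\del{I+\sigma_\xi^{-2}\Lambda^{1/2}\Psi_t^\top\Psi_t\Lambda^{1/2}}^{-1}\Lambda^{1/2},
\end{align*}
where $\Psi_t=(\psi_j(X_i))_{i\le t,j\ge1}$ and $\Lambda=\operatorname{diag}(\lambda_j)$. This is understood as a limit of finite truncations, which is justified since $\sum_j\lambda_j<\infty$. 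Two facts then follow from the Loewner order. First, $\Sigma_t\preceq\Lambda$, which gives the deterministic bounds $\opnorm{\Sigma_t}\le\lambda_1$ and $\tr(\Sigma_t)\le\sum_j\lambda_j<\infty$. Second, $\Sigma_t$ is dominated by the posterior covariance of any model that uses less information. Concretely, I compare with the posterior that conditions only on the projected statistics of the first $N=N_t^\dag$ coordinates while keeping the tail at its prior. Via Schur complements this gives
\begin{align*}
\Sigma_t\preceq\begin{pmatrix}(\Lambda_N^{-1}+\sigma_\xi^{-2}\Psi_{t,N}^\top\Psi_{t,N})^{-1}&0\\0&\Lambda_{>N}\end{pmatrix}+\text{(cross/tail correction)}.
\end{align*}

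The main obstacle is that the tail coordinates $j>N$ are not orthogonal to the head in the empirical Gram matrix, so the comparison is not literally block diagonal. My first attempt uses the variational characterization of posterior variance. For a unit vector $h$, $\inn{h}{\Sigma_t h}$ is the minimal mean-squared error of estimating $\inn{h}{f}$ by linear functionals of $Y_{1:t}$. I choose a suboptimal estimator that regresses on the head coordinates only and treats the tail contribution $\sum_{j>N}\sqrt{\lambda_j}Z_j\psi_j(X_i)$ as extra noise. Its variance is at most $B_\psi^2\sum_{j>N}\lambda_j\lesssim N^{-2\fs/d}\asymp t^{-1}$, which is $O(1)$ compared with $\sigma_\xi^2$. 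The effective noise level is therefore a constant, and the estimator's error is bounded by the block-diagonal expression above with $\sigma_\xi^2$ replaced by a constant multiple. This sidesteps the cross terms.

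On the event $\cE_t$ of \cref{lem:gp-design}, the head block satisfies $(\Lambda_N^{-1}+c\,t I_N)^{-1}\preceq (ct)^{-1}I_N$, and its eigenvalues are at most $\min(\lambda_j,(ct)^{-1})$. The tail block has eigenvalues $\lambda_j\le\lambda_N\asymp N^{-1-2\fs/d}\lesssim t^{-1}$. Hence $\opnorm{\Sigma_t}\lesssim t^{-1}$ on $\cE_t$, and
\begin{align*}
\tr(\Sigma_t)\lesssim\sum_{j\le N}\min(\lambda_j,t^{-1})+\sum_{j>N}\lambda_j\lesssim J_t^\dag t^{-1}+\sum_{j>J_t^\dag}j^{-1-2\fs/d}\lesssim t^{-2\fs/(d+2\fs)}.
\end{align*}
The split at $J_t^\dag$ balances $j^{-1-2\fs/d}$ against $t^{-1}$ (as $J_t^\dag\le N$).

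Finally, on $\cE_t^c$ I use the deterministic bounds $\opnorm{\Sigma_t}\le\lambda_1$ and $\tr(\Sigma_t)\le\tr(\Lambda)$. These are multiplied by $\Pr(\cE_t^c)\lesssim N\e^{-ct/N}$, which is smaller than any power of $t$ because $t/N\asymp t^{1-d/(2\fs)}\to\infty$ (this uses $\fs>d$). Adding the two contributions gives both bounds in \eqref{eq:gp-cov-op-app}.
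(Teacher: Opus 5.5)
Your proposal is correct and is essentially the paper's proof. Both use the cutoff $N=N_t^\dag$, absorb the tail coefficients $j>N$ into the noise, bound the head block by a multiple of $t^{-1}$ on $\cE_t$, dominate the tail block by the prior via $\lambda_{N+1}\lesssim t^{-1}$, split at $J_t^\dag\le N_t^\dag$ for the trace, and handle $\cE_t^\complement$ by multiplying the deterministic bounds $\lambda_1$ and $\sum_j\lambda_j$ by $\Pr(\cE_t^\complement)$. Your minimum-MSE device stands in for two steps of the paper. The paper computes the head block exactly by Gaussian conjugacy with noise covariance $\sigma_\xi^2I_t+R_{t,N}$, where $R_{t,N}=\sum_{j>N}\lambda_j\psi_j(X_{1:t})\psi_j(X_{1:t})^\top$. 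It then uses the block Cauchy--Schwarz bound $\|\Sigma_t\|_{\mathrm{op}}\le\|A\|_{\mathrm{op}}+\|D\|_{\mathrm{op}}$. Your route instead yields a Loewner bound, so the trace estimate follows directly.

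Two details need repair, though neither changes the conclusion.

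First, the suboptimal estimator does not sidestep the cross terms. Let $\eta:=\sum_{j>N}u_j\psi_j(X_{1:t})$ be the tail part of $Y_{1:t}$, and let $h_{>N}$ be the component of $h$ along $(\psi_j)_{j>N}$. The mean squared error of $a^\top Y_{1:t}$ for $\inn{h}{f}$ then contains the term $-2\,\mathrm{Cov}(a^\top\eta,\inn{h_{>N}}{f})$, which can be positive. In fact $\Sigma_t$ is not dominated by the block-diagonal operator you display. With one observation and two coefficients with $\psi_1(X_1)\psi_2(X_1)\neq0$, the domination fails for every inflation of $\sigma_\xi^2$. The inequality $(x+y)^2\le2x^2+2y^2$ restores it with a factor $2$, which is harmless.

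Second, the effective noise level is not $O(1)$ merely because each tail contribution has variance $\lesssim t^{-1}$, small compared with $\sigma_\xi^2$. The tail contribution is driven by the same $u_j$ at every $X_i$, so it is correlated across the $t$ observations. The relevant quantity is
$\|R_{t,N}\|_{\mathrm{op}}\le\operatorname{tr}(R_{t,N})\le B_\psi^2t\sum_{j>N}\lambda_j\lesssim tN^{-2\fs/d}\asymp1$,
which is $t$ times the per-observation variance. This is the step that requires $N$ to be as large as $N_t^\dag$; the tail-block bound alone would allow $N\asymp J_t^\dag$.
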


\begin{proof}
Put $N=N_t^\dag \asymp t^{d/(2\fs)}$ and let $\sP_N$ denote the orthogonal projection onto $\cH_N:= \operatorname{span}(\psi_1,\ldots,\psi_N).$ Under the prior GP, 
    \begin{align}
    u_{1:N}
    :=  \left( \inn{f}{\psi_1}, \ldots,\inn{f}{\psi_N}\right)^\top 
    \sim\N(0,\Lambda_N),
    \end{align}
with $\Lambda_N:=\operatorname{diag}(\lambda_1,\ldots,\lambda_N)$. We first control the contribution of the coefficients above $N$. Define 
    \begin{equation}\label{eq:gp-residual}
    R_{t,N}:=\sum_{j>N}\lambda_j\psi_j(X_{1:t})\psi_j(X_{1:t})^\top,
    \end{equation}
where $\psi_j(X_{1:t}):=(\psi_j(X_1),\ldots,\psi_j(X_t))^\top$. By uniform boundedness of the eigenfunctions in \eqref{eq:gp-basis} and the eigenvalue decay in \cref{ass:gp-eigenvalues},
    \begin{align}
    \opnorm{R_{t,N}}
    &\le \tr(R_{t,N})
    =  \sum_{i=1}^t\sum_{j>N}  \lambda_j\psi_j(X_i)^2 \notag\\
    &\le  B_\psi^2t\sum_{j>N}\lambda_j
    \lesssim tN^{-2\fs/d}
    \lesssim 1.
    \label{eq:gp-tail-L}
    \end{align}
To identify the posterior covariance of the first $N$ coefficients, write
    \begin{align}
    Y_{1:t} = \Psi_{t,N}u_{1:N} +\eta_{t,N} +\xi_{1:t},
    \quad  \eta_{t,N} := \sum_{j>N}u_j\psi_j(X_{1:t}).
    \end{align}
Note that
    \begin{align}
    \eta_{t,N}|X_{1:t}\sim\N(0,R_{t,N}),
    \qquad
    \xi_{1:t}\sim\N(0,\sigma_\xi^2I_t),
\end{align}
and both vectors are independent of $ u_{1:N}$. Hence, after integrating out the coefficients above $N$, we have
    \begin{align}
    Y_{1:t}\mid u_{1:N},X_{1:t}
    \sim    \N\del[0]{\Psi_{t,N} u_{1:N}, \sigma_\xi^2I_t+R_{t,N}}.
\end{align}
Gaussian conjugacy therefore shows that the posterior covariance matrix of $u_{1:N}$ conditional on $X_{1:t}$, equivalently the matrix representation of $\sP_N\Sigma_t\sP_N$ in the basis $(\psi_j)_{j\le N}$, is
    \begin{equation}\label{eq:gp-head-cov}
    \Sigma_{t,N}
    = \cbr[1]{\Lambda_N^{-1}+ \Psi_{t,N}^\top (\sigma_\xi^2I_t+R_{t,N})^{-1} \Psi_{t,N} }^{-1}.
    \end{equation}
By \eqref{eq:gp-tail-L}, there exists an absolute constant $C'>0$ such that
    \begin{align}
    \sigma_\xi^2I_t+R_{t,N}
    \preceq  (\sigma_\xi^2+C')I_t,
    \end{align}
and hence
    \begin{align}
    (\sigma_\xi^2I_t+R_{t,N})^{-1}
    \succeq  \frac{1}{\sigma_\xi^2+C'}I_t.
    \end{align}
Since $ \Psi_{t,N}^\top\Psi_{t,N}  \succeq  \frac t2I_N$ on $\cE_t$ define in \eqref{eq:event_gram_matrix}, we have 
    \begin{align}
    \Psi_{t,N}^\top  (\sigma_\xi^2I_t+R_{t,N})^{-1}  \Psi_{t,N}
    \succeq  \frac{t}{2(\sigma_\xi^2+C')}I_N
    \end{align}
    on $\cE_t$.
Since the prior precision $\Lambda_N^{-1}$ is positive semidefinite, \eqref{eq:gp-head-cov} gives $ \opnorm{\Sigma_{t,N}}\ind_{\cE_t}\lesssim t^{-1}$. 

It remains to pass from the first $N$ coefficient directions to the full covariance operator. Relative to the orthogonal decomposition $L^2(\bP_X)  =  \cH_N\oplus\cH_N^\perp,$ write
    \begin{align}
    \Sigma_t  =
    \begin{pmatrix}
    A & B\\
    B^\ast & D
    \end{pmatrix},
    \end{align}
where $A=\sP_N\Sigma_t\sP_N$ and $D=\sP_{>N}\Sigma_t\sP_{>N}$, with the orthogonal projection $\sP_{>N}$ onto $\operatorname{span}(\psi_{j}:j>N)$. The matrix representation of $A$ is
$\Sigma_{t,N}$, so $  \opnorm{A}\ind_{\cE_t}\lesssim t^{-1}$. Moreover, since Gaussian conditioning reduces covariance, we have $  0\preceq\Sigma_t\preceq\Sigma_0$ and so
    \begin{align}
    0\preceq D  \preceq  \sP_{>N}\Sigma_0\sP_{>N}.
    \end{align}
Hence,
    \begin{align}
    \label{eq:op_d_bound}
    \opnorm{D} \le  \sup_{j>N}\lambda_j
    \lesssim N^{-1-2\fs/d}
    \lesssim t^{-1}.
    \end{align}
Because $\Sigma_t$ is positive semidefinite, covariance Cauchy--Schwarz gives, for
$x\in\cH_N$ and $y\in\cH_N^\perp$,
    \begin{align}
    |\inn{x}{By}|
    \le  \sqrt{\inn{x}{Ax}}\sqrt{\inn{y}{Dy}}.
    \end{align}
Thus
    \begin{align*}
    \inn{(x,y)}{\Sigma_t(x,y)}
    &\le \cbr{ \sqrt{\opnorm{A}}\norm{x} + \sqrt{\opnorm{D}}\norm{y} }^2\\
    &\le  \{\opnorm{A}+\opnorm{D}\}(\norm{x}^2 + \norm{y}^2).
    \end{align*}
Taking the supremum over unit vectors yields $\opnorm{\Sigma_t} \le  \opnorm{A}+\opnorm{D}$. Thus, by \cref{lem:gp-design},  \cref{eq:op_d_bound} and the fact that  $  0\preceq\Sigma_t\preceq\Sigma_0$, we have
    \begin{align}
         \E\sbr{\opnorm{\Sigma_t}}
         &\le \E\sbr{\opnorm{\Sigma_t}1_{\cE_t}} + \E\sbr[1]{\opnorm{\Sigma_t}\ind_{\cE_t^\complement}}\\
         &\le \E\sbr{\opnorm{A}\ind_{\cE_t}}+ \E\sbr{\opnorm{D}} +\lambda_1 \Pr(\cE_t^\complement)
         \lesssim t^{-1}.
    \end{align}
which proves the operator-norm bound.

We now bound the trace. Put $J=J_t^\dag$. Since $\sP_J\Sigma_t \sP_J$ is positive semidefinite and has rank at most $J$,
    \begin{align}
    \tr(\sP_J\Sigma_t \sP_J)\ind_{\cE_t}
    \le  J\opnorm{\Sigma_t}\ind_{\cE_t}
    \lesssim \frac{J}{t}.
\end{align}
For the remaining directions, using covariance domination  $  0\preceq\Sigma_t\preceq\Sigma_0$ again, we have
    \begin{align*}
    \tr(\sP_{> J}\Sigma_t\sP_{> J} )
    \le  \tr(\sP_{> J}\Sigma_0 \sP_{> J})
    = \sum_{j>J}\lambda_j
    \lesssim J^{-2\fs/d}.
    \end{align*}
Combining these two inequalities, for $J_t^\dag \asymp t^{d/(d+2\fs)}$,  we have
    \begin{align}
    \tr(\Sigma_t)\ind_{\cE_t}
    \lesssim  \cbr{ \frac{J}{t} + J^{-2\fs/d}}
    \asymp  t^{-2\fs/(d+2\fs)}.
    \end{align}
Therefore, we have
 \begin{align}
         \E\sbr{\tr(\Sigma_t)}
         &\le \E\sbr{\tr(\Sigma_t)\ind_{\cE_t}} + \E\sbr[1]{\tr(\Sigma_t)\ind_{\cE_t^\complement}}\\
         &\lesssim t^{-2\fs/(d+2\fs)}+ \tr(\Sigma_0) \Pr(\cE_t^\complement)
         \lesssim t^{-2\fs/(d+2\fs)}
    \end{align}
which completes the proof.
\end{proof}

\begin{lemma}[Risk of the exact posterior mean]\label[lemma]{lem:gp-krr}
Let $\widehat f_t$ denote the mean of the GP posterior $\Pi_t$.
 Under \cref{ass:gp-true,ass:gp-eigenvalues},
    \begin{equation}\label{eq:gp-krr-rate}
    \E\sbr{\norm[0]{\widehat f_t-f^\star}^2}
    \lesssim t^{-2\fs/(d+2\fs)}.
    \end{equation}
\end{lemma}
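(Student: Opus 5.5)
We bound the posterior mean risk by a bias--variance decomposition in the coefficient basis $(\psi_j)$, reusing the head/tail split and the event $\cE_t$ from the proof of \cref{prop:gp-covariance}. Write $\widehat f_t=\E[f\mid X_{1:t},Y_{1:t}]$. Conditional on $X_{1:t}$, it is linear in $Y_{1:t}=F^\star+\xi_{1:t}$, where $F^\star:=f^\star(X_{1:t})$:
\begin{align*}
\widehat f_t=\sum_j\lambda_j\psi_j\,\psi_j(X_{1:t})^\top K_t^{-1}Y_{1:t},
\qquad K_t:=\sigma_\xi^2I_t+\sK_0(X_{1:t},X_{1:t}).
\end{align*}
Consequently $\E[\|\widehat f_t-f^\star\|^2\mid X_{1:t}]$ equals a bias term $\|\E[\widehat f_t\mid X_{1:t}]-f^\star\|^2$ plus a noise term $\sigma_\xi^2\tr(\mathcal{S}_t\mathcal{S}_t^\ast)$, where $\mathcal{S}_t:\R^t\to L^2(\bP_X)$ is the smoother $y\mapsto \sum_j\lambda_j\psi_j\psi_j(X_{1:t})^\top K_t^{-1}y$.

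For the noise term, I use the standard identity relating the smoother to the posterior covariance: $\sigma_\xi^2\mathcal{S}_t\mathcal{S}_t^\ast\preceq \Sigma_0-\Sigma_t$. More precisely, $\Sigma_0-\Sigma_t=\mathcal{S}_tK_t\mathcal{S}_t^\ast\succeq\sigma_\xi^2\mathcal{S}_t\mathcal{S}_t^\ast$. That alone is too crude, because it gives $\tr(\Sigma_0)$. Instead I bound it on the head and tail separately. With $J=J_t^\dag$, the tail directions $j>J$ contribute at most $\sum_{j>J}\lambda_j\lesssim J^{-2\fs/d}$. For the head, I argue as in \cref{prop:gp-covariance}: after integrating out the coefficients with $j>N_t^\dag$, the head coefficients form a finite Gaussian regression with noise covariance $\sigma_\xi^2I+R_{t,N}\preceq C I$. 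On $\cE_t$ the head precision is $\succeq ct\,I_N$, so each of the first $J$ coordinates has noise variance $\lesssim 1/t$. This gives a noise contribution $\lesssim J/t$. Off $\cE_t$, I use the crude bound together with the superpolynomial decay of $\Pr(\cE_t^\complement)$ from \cref{lem:gp-design}.

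For the bias, I note that $\E[\widehat f_t\mid X_{1:t}]$ is the posterior mean computed from noiseless data $F^\star$. This is the kernel ridge regression of $f^\star$ with RKHS penalty $\sigma_\xi^2\|\cdot\|_{\bH}^2$. Assume first $f^\star\in\bH$, which holds whenever $\sum_jj^{1+2\fs/d}(f_j^\star)^2<\infty$. Then the minimizing property gives, for any comparator $h$,
\begin{align*}
\|\E[\widehat f_t\mid X_{1:t}]-f^\star\|_{t}^2+\tfrac{\sigma_\xi^2}{t}\|\E[\widehat f_t\mid X_{1:t}]\|_{\bH}^2\le \|h-f^\star\|_t^2+\tfrac{\sigma_\xi^2}{t}\|h\|_\bH^2 ,
\end{align*}
where $\|\cdot\|_t$ is the empirical norm. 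I take $h=\sP_J f^\star$. Then $\|h\|_\bH^2=\sum_{j\le J}\lambda_j^{-1}(f_j^\star)^2\lesssim J\,R^2$ and $\|h-f^\star\|^2\lesssim J^{-2\fs/d}$. The remaining step is to pass from empirical to population norm. On $\cE_t$ the Gram matrix over the first $N_t^\dag$ basis functions is well conditioned, so empirical and population norms are equivalent on $\cH_{N}$. The tail beyond $N$ is handled by the sup-norm bound $\|\sum_{j>N}a_j\psi_j\|_\infty\le B_\psi\sum_{j>N}|a_j|$. With Sobolev coefficients and $\fs>d$, that sum decays polynomially. This is precisely why $\fs>d$ is assumed.

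The main obstacle is this bias term. In particular, $f^\star$ need not lie in $\bH$, since \cref{ass:gp-eigenvalues} places the prior at smoothness $\fs/d+1/2$ beyond $f^\star$'s. If I cannot control it through the KRR oracle inequality above, I would fall back on the explicit matrix formula in the basis restricted to the first $N$ coordinates. That formula gives bias $=(I-(\Lambda_N^{-1}\sigma^2_\xi+\Psi^\top\Psi)^{-1}\Psi^\top\Psi)f^\star_{1:N}$ plus the tail-induced leakage through $R_{t,N}$ and $\eta$. On $\cE_t$ I would bound the former by the spectral filter $\sup_j \lambda_j^{-1}/(\lambda_j^{-1}+t)$ weighted against $j^{2\fs/d}(f^\star_j)^2$, splitting at $J$, which yields $J^{-2\fs/d}$. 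The latter I would bound by $\|R_{t,N}\|\lesssim1$ and the Sobolev tail $\sum_{j>N}|f^\star_j|\lesssim N^{1/2-\fs/d}$. Summing the bias, noise, tail and failure-event contributions yields $J/t+J^{-2\fs/d}\asymp t^{-2\fs/(d+2\fs)}$.
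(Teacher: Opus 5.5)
\textbf{Verdict.} Your route differs from the paper's, and it has a genuine gap in the bias term, though the gap is repairable within your route 1.

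\textbf{Comparison with the paper.} The paper identifies $\widehat f_t$ with kernel ridge regression at $\rho_t=\sigma_\xi^2/t$. It applies Theorem~1 of Fischer and Steinwart with $p=d/(d+2\fs)$ and $\beta=2\fs/(d+2\fs)$ to get a high-probability bound. It then converts this to an expectation bound by truncation and the crude moment bound $\E\|\widehat f_t-f^\star\|^4\lesssim t^2$. You instead do a self-contained bias--variance analysis on the design event $\cE_t$.

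\textbf{Variance part.} This is fine. The noise-driven head coefficients have covariance $\sigma_\xi^2\Sigma_{t,N}\Psi_{t,N}^\top(\sigma_\xi^2I_t+R_{t,N})^{-2}\Psi_{t,N}\Sigma_{t,N}\preceq\Sigma_{t,N}$. The directions $j>J$ are handled by $\sigma_\xi^2\mathcal S_t\mathcal S_t^\ast\preceq\Sigma_0$.

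\textbf{The gap: the bias.} You leave it unresolved, and the one concrete tool you name for it fails quantitatively.
\begin{itemize}
\item At $N=N_t^\dag$, the level at which $\cE_t$ is defined, the sup-norm bound gives $(\sum_{j>N}|f_j^\star|)^2\lesssim N^{1-2\fs/d}\asymp t^{d/(2\fs)-1}$. Since
\[
\Bigl(\frac{d}{2\fs}-1\Bigr)+\frac{2\fs}{d+2\fs}=\frac{d^2}{2\fs(d+2\fs)}>0,
\]
this exceeds the target $t^{-2\fs/(d+2\fs)}$ by a polynomial factor for all $\fs$ and $d$.
\item The same loss hits your route-2 leakage term $\Sigma_{t,N}\Psi_{t,N}^\top(\sigma_\xi^2I_t+R_{t,N})^{-1}F^\star_{>N}$, where $F^\star_{>N}:=\sum_{j>N}f_j^\star\psi_j(X_{1:t})$. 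Its squared norm is at most $\sigma_\xi^{-2}\|\Sigma_{t,N}\|_{\mathrm{op}}\|F^\star_{>N}\|^2$. Bounding $\|F^\star_{>N}\|^2\le tB_\psi^2(\sum_{j>N}|f_j^\star|)^2$ again gives $N^{1-2\fs/d}$.
\item You never control the coefficients $j>N$ of $g:=\E[\widehat f_t\mid X_{1:t}]$, which carry no Sobolev bound.
\item In route 2, the diagonal spectral-filter bound is not immediate, because $\Lambda_N$ and $\Psi_{t,N}^\top\Psi_{t,N}$ do not commute. It can be justified on $\cE_t$ as follows. Set $\tilde G:=\Psi_{t,N}^\top(\sigma_\xi^2I_t+R_{t,N})^{-1}\Psi_{t,N}$ and compare $(I+\Lambda_N\tilde G)^{-1}$ with $(I+\tau\Lambda_N)^{-1}$ for some $\tau\asymp t$ via the resolvent identity. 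This uses $\|(\Lambda_N^{-1}+\tilde G)^{-1}\|_{\mathrm{op}}\lesssim t^{-1}$ and $\|\tau I-\tilde G\|_{\mathrm{op}}\lesssim t$.
\end{itemize}

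\textbf{How to close route 1.}
\begin{itemize}
\item The oracle inequality does not need $f^\star\in\bH$. It is just the minimizing property of $g$ over $\bH$, and it holds for every comparator $h\in\bH$.
\item Taking $h=\sP_Jf^\star$ with $J=J_t^\dag$ gives both $\E\|g-f^\star\|_t^2\lesssim J^{-2\fs/d}+J/t$ and $\E\|g\|_\bH^2\lesssim tJ^{-2\fs/d}+J\asymp J$.
\item On $\cE_t$,
\[
\|\sP_N(g-f^\star)\|^2\le 2\|\sP_N(g-f^\star)\|_t^2\le 4\|g-f^\star\|_t^2+8\|\sP_{>N}g\|_\infty^2+8\|\sP_{>N}f^\star\|_t^2 .
\]
\item Use the sup norm only for $g$: $\|\sP_{>N}g\|_\infty^2\le B_\psi^2(\sum_{j>N}\lambda_j)\|g\|_\bH^2\lesssim t^{-1}\|g\|_\bH^2$.
\item For $f^\star$, use that it is deterministic and the $X_i$ are i.i.d., so $\E\|\sP_{>N}f^\star\|_t^2=\|\sP_{>N}f^\star\|^2\le R^2N^{-2\fs/d}\lesssim t^{-1}$. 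Alternatively, redefine $\cE_t$ with $N=t^a$ for a suitable $a<1$, which $\fs>d$ permits, so that $N^{1-2\fs/d}\lesssim t^{-2\fs/(d+2\fs)}$.
\item The population tail obeys $\|\sP_{>N}(g-f^\star)\|^2\le2\lambda_{N+1}\|g\|_\bH^2+2\|\sP_{>N}f^\star\|^2$.
\item Off $\cE_t$, the comparator $h=0$ gives $\|g\|^2\le\lambda_1\sigma_\xi^{-2}\sum_{i\le t}f^\star(X_i)^2\lesssim t$. This is absorbed by the superpolynomially small $\Pr(\cE_t^\complement)$.
\end{itemize}
Together with your variance bound, this yields $J/t+J^{-2\fs/d}\asymp t^{-2\fs/(d+2\fs)}$.

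\textbf{What the repaired route buys.} It is elementary and gives the expectation bound directly. It avoids both verifying the external theorem's hypotheses and the high-probability-to-expectation conversion that the paper relies on.
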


\begin{proof}
The exact GP posterior mean is the kernel-ridge regression estimator
    \begin{align*}
    \widehat f_t
    = \argmin_{f\in\bH}
    \cbr{ \frac1t\sum_{i=1}^t\{Y_i-f(X_i)\}^2 +\rho_t\norm{f}_{\bH}^2},
    \qquad  \rho_t:=\frac{\sigma_\xi^2}{t}.
    \end{align*}
We apply Theorem~1(ii) of \citet{fischer2020sobolev}. In their notation, set
    \begin{align*}
    p:=\frac{d}{d+2\fs},
    \qquad  \beta:=\frac{2\fs}{d+2\fs},
    \end{align*}
so that $p+\beta=1$.  The eigenvalue condition in \cref{ass:gp-eigenvalues} is then written as $\lambda_j\asymp j^{-1/p}.$ Moreover,
    \begin{align*}
    \sum_{j\ge1} \lambda_j^{-\beta}(f_j^\star)^2
    \lesssim  \sum_{j\ge1} j^{2\fs/d}(f_j^\star)^2
    <\infty,
    \end{align*}
so the source condition of \citet{fischer2020sobolev} holds with exponent $\beta$. The uniform bound on the eigenfunctions implies that, for every $q>p$,
    \begin{align*}
      \sup_{x\in\mathcal X}
      \sum_{j\ge1}\lambda_j^q\psi_j(x)^2
      \le B_\psi^2\sum_{j\ge1}\lambda_j^q
      <\infty.
      \end{align*}
which verifies their embedding condition  with any fixed $q\in(p,1)$, and that
     \begin{align}
     \label{eq:f-star-bound}
     \|f^\star\|_\infty
     \le B_\psi\sum_{j\ge1} |f_j^\star|
     \le B_\psi R\del[2]{\sum_{j\ge1} j^{-2\fs/d}}^{1/2}<\infty.
    \end{align}
Moreover, Gaussian noise satisfies their moment condition. Since
    \begin{align*}
    \beta+p=1>q
    \qquad\text{and}\qquad
    \rho_t\asymp t^{-1/(\beta+p)}=t^{-1},
    \end{align*}
Theorem~1(ii) of \citet{fischer2020sobolev} implies that there is  an absolute constant $C'>0$ such that
    \begin{equation}\label{eq:gp-krr-hp}
    \Pr\del{ \norm[1]{\widehat f_t-f^\star}^2 > C'\tau^2 t^{-\beta}}
    \le 4\e^{-\tau}
\end{equation}
for $\tau\ge1$. We now pass from \eqref{eq:gp-krr-hp} to an expectation bound. Inspection of the sample-size condition in \citet[Theorem~16 and the proof of Theorem~1]{fischer2020sobolev}
shows that \eqref{eq:gp-krr-hp} holds uniformly for
    \begin{align*}
      1\le \tau\le\bar\tau_t,
      \qquad  \bar\tau_t  :=c\frac{t^{1-q}}{\log t},
    \end{align*}
for some sufficiently small constant $c>0$ and all sufficiently large $t$. For simplicity, we let $W_t:=\norm[1]{\widehat f_t-f^\star}^2.$ By the change of variables $v^2=(C't^{-\beta})^{-1}u$, the truncated expectation is bounded as 
    \begin{align*}
    \E\sbr{W_t\wedge C'\bar\tau_t^2t^{-\beta}}
    &\le C't^{-\beta} + \int_{C' t^{-\beta}}^{C'\bar\tau_t^2 t^{-\beta} }\Pr(W_t>u)\d u \\
    &\le C't^{-\beta} + 2(C't^{-\beta}) \int_1^{\bar\tau_t}v\Pr(W_t>(C't^{-\beta})v^2)\d v\\
    &\le C't^{-\beta} + 8(C't^{-\beta}) \int_1^{\bar\tau_t}v\e^{-v}\d v
    \lesssim t^{-\beta}.
\end{align*}
It remains to control the upper tail. By optimality of the kernel-ridge estimator and comparison with $f=0$,
    \begin{align*}
    \rho_t\norm[1]{\widehat f_t}_{\bH }^2
    \le   \frac1t\sum_{i=1}^tY_i^2.
\end{align*}
Since $\rho_t=\sigma_\xi^2/t$ and $\norm{f}^2 \le\lambda_1\norm{f}_{\bH}^2$, Gaussian noise and the boundedness of $f^\star$ imply
    \begin{align*}
    \E [W_t^2]\lesssim t^2.
    \end{align*}
Furthermore, \eqref{eq:gp-krr-hp} at $\tau=\bar\tau_t$ gives
    \begin{align*}
    \Pr\{W_t>C'\bar\tau_t^2t^{-\beta}\}
    \le4\e^{-\bar\tau_t}.
\end{align*}
Therefore, by the Cauchy--Schwarz inequality
    \begin{align*}
    \begin{aligned}
    \E\sbr{W_t\ind\{W_t>C'\bar\tau_t^2t^{-\beta}\}}
    &\le(\E W_t^2)^{1/2}\Pr\{W_t>C'\bar\tau_t^2t^{-\beta}\}^{1/2} \\
    &\lesssim t \e^{-\bar\tau_t/2}
    = o(t^{-\beta}).
    \end{aligned}
    \end{align*}
Combining the truncated expectation and the upper-tail bound gives
    \begin{align*}
    \E [W_t]\lesssim t^{-\beta}= t^{-2\fs/(d+2\fs)}.
    \end{align*}
Finitely many small values of $t$ are absorbed into the constant.
\end{proof}

\subsection{Proof of \cref{thm:sparse-gp-accuracy}}

\begin{proof}
The variational process posterior is absolutely continuous with respect to the exact posterior and has finite reverse KL. Conditional on the data, \cref{lem:gp-transport} below gives
    \begin{align*}
    \W^2(Q_t^{(J)},\Pi_t)
    \le2\opnorm{\Sigma_t}\KL(Q_t^{(J)}\|\Pi_t).
    \end{align*}
In the proof of \cref{prop:gp-covariance}, we have shown that on the event $\cE_t$ in \eqref{eq:event_gram_matrix}, $\opnorm{\Sigma_t}\lesssim t^{-1}$. Therefore, \cref{lem:gp-KL} gives
    \begin{align*}
       \E\sbr{\opnorm{\Sigma_t}\KL(Q_t^{(J)}\|\Pi_t)\ind_{\cE_t}} 
       \lesssim t^{-1} \E\sbr{\KL(Q_t^{(J)}\|\Pi_t)} 
       \lesssim t^{-2\fs/(d+2\fs)}.
    \end{align*}
On the complement $\cE_t^\complement$, since $\opnorm{\Sigma_t}\le \opnorm{\Sigma_0}\lesssim 1$ and the event $\cE_t^\complement$ does not depend on the outcome variables $Y_{1:t}$, we have
    \begin{align*}
        \E\sbr{\opnorm{\Sigma_t}\KL(Q_t^{(J)}\|\Pi_t)\ind_{\cE_t^\complement}} 
        &\lesssim \E\sbr{\KL(Q_t^{(J)}\|\Pi_t)\ind_{\cE_t^\complement}} \\
        &=\E\sbr{\E\sbr{\KL(Q_t^{(J)}\|\Pi_t)\big|X_{1:t}}\ind_{\cE_t^\complement}} \\
        &\lesssim t\Pr(\cE_t^\complement)=o(t^{-2\fs/(d+2\fs)}),
    \end{align*}
where we use \cref{lem:gp-KL-clude} given below for the second inequality. Combining the derived inequalities, we get the desired result.
\end{proof}

\begin{lemma}[Gaussian transportation]\label[lemma]{lem:gp-transport}
Let $\gamma$  be a Gaussian law on $L^2(\bP_X)$ with covariance operator $\Sigma$. If $\nu\ll\gamma$ and $\KL(\nu\|\gamma)<\infty$, then
    \begin{equation}\label{eq:gp-T2}
    \W^2(\nu,\gamma) \le 2\opnorm{\Sigma}\KL(\nu\|\gamma).
    \end{equation}
\end{lemma}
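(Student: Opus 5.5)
The plan is to prove Talagrand's transportation inequality for Gaussian measures on the separable Hilbert space $L^2(\bP_X)$, with constant $\opnorm{\Sigma}$, via a finite-dimensional reduction. First reduce to the centered case by translation: if $\gamma$ has mean $m$, shifting both $\nu$ and $\gamma$ by $-m$ preserves $\W$ and $\KL$. Then diagonalize $\Sigma=\sum_k s_k e_k\otimes e_k$ with $s_k\le\opnorm{\Sigma}$. Under $\gamma$ the coordinates $\langle f,e_k\rangle$ are independent $\N(0,s_k)$ (degenerate coordinates with $s_k=0$ are a.s.\ zero under $\gamma$, hence under $\nu\ll\gamma$, and can be dropped).

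In finite dimension $n$ the claim is standard. Writing $\gamma_n=\N(0,\operatorname{diag}(s_1,\dots,s_n))$, the linear map $x\mapsto(x_k/\sqrt{s_k})$ pushes $\gamma_n$ to the standard Gaussian and is $\opnorm{\Sigma}^{-1/2}$-co-Lipschitz, in the sense that its inverse is $\sqrt{\opnorm{\Sigma}}$-Lipschitz. Since $\KL$ is invariant under bijections, Talagrand's $T_2$ inequality for the standard Gaussian, $\W^2(\mu,\N(0,I))\le 2\KL(\mu\|\N(0,I))$, transfers to $\W^2(\nu_n,\gamma_n)\le 2\opnorm{\Sigma}\KL(\nu_n\|\gamma_n)$. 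Here $\nu_n$ is the marginal of $\nu$ on the first $n$ coordinates. Alternatively, I would invoke that $\gamma_n$ is strongly log-concave with parameter $\opnorm{\Sigma}^{-1}$ and apply Bakry–Émery/Otto–Villani.

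The passage to the infinite-dimensional case is the main obstacle. By the data-processing inequality, $\KL(\nu_n\|\gamma_n)\le\KL(\nu\|\gamma)$ for every $n$. For the Wasserstein side, I would use the projections $P_n$ onto $\operatorname{span}(e_1,\dots,e_n)$ and bound
\begin{align*}
\W(\nu,\gamma)\le \W(\nu,P_n\#\nu)+\W(P_n\#\nu,P_n\#\gamma)+\W(P_n\#\gamma,\gamma).
\end{align*}
The middle term equals $\W(\nu_n,\gamma_n)$. The last term is at most $(\sum_{k>n}s_k)^{1/2}\to0$, since $\gamma$ is a Gaussian on $L^2$ with trace-class covariance. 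For the first term, $\W^2(\nu,P_n\#\nu)\le\int\|(I-P_n)f\|^2\nu(\d f)$, which tends to $0$ by dominated convergence provided $\nu$ has a finite second moment. That moment bound follows from $\KL(\nu\|\gamma)<\infty$ through the Donsker–Varadhan inequality: $\int\|f\|^2\d\nu\le \delta^{-1}\{\KL(\nu\|\gamma)+\log\int e^{\delta\|f\|^2}\d\gamma\}$, and the last integral is finite for small $\delta$ by Fernique's theorem, or directly since $\|f\|^2=\sum_k s_kZ_k^2$ with $\sum_k s_k<\infty$. Letting $n\to\infty$ gives $\W^2(\nu,\gamma)\le2\opnorm{\Sigma}\KL(\nu\|\gamma)$.

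For the application in the proof of \cref{thm:sparse-gp-accuracy}, one takes $\nu=Q_t^{(J)}$ and $\gamma=\Pi_t$, conditionally on the data.
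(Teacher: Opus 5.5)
Your proposal is correct and follows essentially the same route as the paper: center, project onto the eigenspaces of $\Sigma$, apply finite-dimensional Talagrand with a linear rescaling, and use data processing to bound $\KL(\nu_n\|\gamma_n)$ by $\KL(\nu\|\gamma)$. You then pass to the limit with the projection/triangle-inequality argument, using Fernique (or the explicit eigenvalue computation) together with Donsker--Varadhan to get the finite second moment of $\nu$.
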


\begin{proof}
By translation invariance of both the Wasserstein distance and relative
entropy, it is enough to consider the centered case. Let $(e_j)_{j\ge1}$ be an orthonormal basis corresponding to the positive eigenvalues of $\Sigma$. Let $H_\Sigma:=  \overline{\operatorname{span}(e_j:j\ge 1)}.$ Then the Gaussian measure $\gamma$ is supported on $H_\Sigma$, and $\nu\ll\gamma$ implies that $\nu$ is supported on the same space. For $N\ge1$, let $\sP_N$ be the orthogonal
projection onto $\operatorname{span}(e_1,\ldots,e_N)$, and write
    \begin{align*}
    \nu_N:=(\sP_N)_{\#}\nu,
    \qquad  \gamma_N:=(\sP_N)_{\#}\gamma.
    \end{align*}
Then
    \begin{align*}
    \gamma_N = \N(0,\Sigma_N),
    \qquad\Sigma_N:=\sP_N\Sigma \sP_N.
\end{align*}
The finite-dimensional Gaussian transportation inequality \citep{talagrand1996transportation}, followed by linear rescaling, gives
    \begin{align*}
    \W^2(\nu_N,\gamma_N)
    \le  2\opnorm{\Sigma_N}\KL(\nu_N\|\gamma_N).
\end{align*}
Since $ \opnorm{\Sigma_N}\le\opnorm{\Sigma}$ and $\KL(\nu_N\|\gamma_N) \le \KL(\nu\|\gamma)$ (because the KL divergence decreases under measurable transformations), we have, for every $N$,
    \begin{equation}\label{eq:projected-gaussian-T2}
    \W^2(\nu_N,\gamma_N)
    \le  2\opnorm{\Sigma}\KL(\nu\|\gamma).
\end{equation}
Since $\gamma$ is Gaussian, Fernique's theorem gives some $\eta>0$ such that
    \begin{align*}
    \int \exp\{\eta\norm{x}^2\}\gamma(\d x)<\infty.
    \end{align*}
The Donsker-Varadhan variational formula therefore yields
    \begin{align*}
    \eta\int \norm{x}^2\nu(\d x)
    \le  \KL(\nu\|\gamma)
    +\log\int \exp(\eta\norm{x}^2)\gamma(\d x)
    <\infty.
\end{align*}
Thus both $\nu$ and $\gamma$ have finite second moments. Using the coupling $x\mapsto(x,\sP_Nx)$,
    \begin{align*}
    \W^2(\nu,\nu_N)
    \le  \int \norm{(I-\sP_N)x}^2\nu(\d x)
    \longrightarrow0,
\end{align*}
due to dominated convergence, because $\sP_Nx\to x$ as $N\to\infty$ for every $x\in H_\Sigma$ and the integrand is bounded by $\norm{x}^2$. The same argument gives $\W(\gamma,\gamma_N)\longrightarrow0.$ Hence, by the triangle inequality,
    \begin{align*}
    \W(\nu_N,\gamma_N)\longrightarrow \W(\nu,\gamma).
    \end{align*}
Letting $N\to\infty$ in \eqref{eq:projected-gaussian-T2} proves
    \begin{align*}
    \W^2(\nu,\gamma)
    \le 2\opnorm{\Sigma}\KL(\nu\|\gamma).
\end{align*}
If $\Sigma=0$, then $\gamma=\delta_0$, and $\nu\ll\gamma$ implies $\nu=\gamma$, so the conclusion is immediate.
\end{proof}

\begin{lemma}[KL approximation error of sparse GP]\label[lemma]{lem:gp-KL}
Under \cref{ass:gp-true,ass:gp-eigenvalues}, if $J\ge J_t^\dag$, then
    \begin{align}
    \E[\KL(Q_t^{(J)}\|\Pi_t)]&\lesssim t^{d/(d+2\fs)}.\label{eq:gp-KL-rate}
    \end{align}
\end{lemma}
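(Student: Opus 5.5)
The plan is to write the Titsias KL gap in closed form as a difference of Gaussian marginal likelihoods plus a trace term. I will then take expectations over $Y_{1:t}$ given $X_{1:t}$ and bound each piece.

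\textbf{Closed form of the gap.} Write $K_t:=\sK_0(X_{1:t},X_{1:t})$ and $Q_{t,J}:=\Psi_{t,J}\Lambda_J\Psi_{t,J}^\top$ (the Nystr\"om part). The residual $R_{t,J}:=K_t-Q_{t,J}=\sum_{j>J}\lambda_j\psi_j(X_{1:t})\psi_j(X_{1:t})^\top$ is as in \eqref{eq:gp-residual}. For the optimal member of $\mathcal Q_J$ (cf. \citet{titsias2009variational}), the KL equals the log marginal likelihood minus the collapsed ELBO:
\begin{align*}
\KL(Q_t^{(J)}\|\Pi_t)
=\tfrac12\sbr{\log\det(Q_{t,J}+\sigma_\xi^2I)-\log\det(K_t+\sigma_\xi^2I)}
+\tfrac12 Y_{1:t}^\top M_t Y_{1:t}
+\frac{\tr(R_{t,J})}{2\sigma_\xi^2},
\end{align*}
where $M_t:=(Q_{t,J}+\sigma_\xi^2I)^{-1}-(K_t+\sigma_\xi^2I)^{-1}$. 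Since $R_{t,J}\succeq0$, the log-determinant bracket is nonpositive and can be dropped. We also have $0\preceq M_t\preceq(Q_{t,J}+\sigma_\xi^2I)^{-1}$, so $\opnorm{M_t}\le\sigma_\xi^{-2}$.

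\textbf{Trace term.} By \eqref{eq:gp-basis} and \cref{ass:gp-eigenvalues},
\[
\tr(R_{t,J})\le B_\psi^2t\sum_{j>J}\lambda_j\lesssim tJ^{-2\fs/d}\le t\,(J_t^\dag)^{-2\fs/d}\asymp t^{d/(d+2\fs)}.
\]

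\textbf{Quadratic term, noise part.} Write $Y_{1:t}=s+\xi_{1:t}$ with $s:=f^\star(X_{1:t})$. Conditioning on $X_{1:t}$, the cross term has mean zero. The noise part satisfies $\E[\xi^\top M_t\xi\mid X_{1:t}]=\sigma_\xi^2\tr(M_t)$. Using $M_t=(Q_{t,J}+\sigma_\xi^2I)^{-1}R_{t,J}(K_t+\sigma_\xi^2I)^{-1}$, this is at most $\tr(R_{t,J})/\sigma_\xi^2$, which was already bounded above.

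\textbf{Quadratic term, signal part.} This is the main obstacle: the crude bound $\opnorm{R_{t,J}}\,\|s\|^2$ gives order $t$, which is too large. Instead I will use only $M_t\preceq(Q_{t,J}+\sigma_\xi^2I)^{-1}$ and split the signal at $J^\dag:=J_t^\dag\le J$:
\[
s=\Psi_{t,J}a+r,\qquad a:=(f^\star_1,\ldots,f^\star_{J^\dag},0,\ldots,0)^\top\in\R^J,\qquad r:=\sum_{j>J^\dag}f^\star_j\psi_j(X_{1:t}).
\]
Then $s^\top M_t s\le 2a^\top\Psi_{t,J}^\top(Q_{t,J}+\sigma_\xi^2I)^{-1}\Psi_{t,J}a+2\sigma_\xi^{-2}\|r\|^2$.
\begin{itemize}
\item \emph{Head.} The push-through identity gives
\[
\Psi^\top(\Psi\Lambda\Psi^\top+\sigma_\xi^2I)^{-1}\Psi=\Lambda^{-1/2}\sbr{I-(I+\sigma_\xi^{-2}\Lambda^{1/2}\Psi^\top\Psi\Lambda^{1/2})^{-1}}\Lambda^{-1/2}\preceq\Lambda_J^{-1}.
\]
Hence the head term is at most $\sum_{j\le J^\dag}\lambda_j^{-1}(f^\star_j)^2\lesssim\sum_{j\le J^\dag}j^{1+2\fs/d}(f_j^\star)^2\le J^\dag R^2\asymp t^{d/(d+2\fs)}$, deterministically.
\item \emph{Tail.} By orthonormality in $L^2(\bP_X)$ and i.i.d.\ design, $\E\|r\|^2=t\sum_{j>J^\dag}(f_j^\star)^2\le tR^2(J^\dag)^{-2\fs/d}\asymp t^{d/(d+2\fs)}$.
\end{itemize}

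Taking expectations over $X_{1:t}$ and summing the three bounds yields \eqref{eq:gp-KL-rate}. No event $\cE_t$ is needed here.

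One point must be checked: the closed form above is exactly $\KL(Q_t^{(J)}\|\Pi_t)$ for the process-level laws, as opposed to a finite-dimensional surrogate. I would justify this via the standard argument of \citet{matthews2016sparse}: both measures share the conditional $\Pi_0(\d f\mid u,f(X_{1:t}))$, so the process KL reduces to the finite-dimensional identity.
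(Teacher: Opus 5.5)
Your proof is correct, and it takes a different route from the paper's.

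\textbf{The paper's route.} It never writes the KL explicitly. It quotes Lemma~3 of \citet{nieman2022contraction}, which gives, for any $h\in\bH$, a bound of the form $\sigma_\xi^{-2}\{t\|f^\star-h\|^2+\|h\|_{\bH}^2\,\E\|R_{t,J}\|_{\mathrm{op}}+\E\operatorname{tr}(R_{t,J})\}$. It then invokes Lemma~5 of the same paper, a matrix-concentration bound carrying a $\log t$ factor, to get $\E\|R_{t,J}\|_{\mathrm{op}}=O(1)$ for $J\ge J_t^\dag$; this is where $\fs>d$ enters. Finally it takes $h$ to be the projection of $f^\star$ onto $\psi_1,\dots,\psi_{J_t^\dag}$.

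\textbf{Your route.} You reach the same three-term structure from the explicit ELBO gap. The key difference is that you place the comparator inside the inducing span. The head term is then bounded deterministically via $\Psi_{t,J}^\top(Q_{t,J}+\sigma_\xi^2I)^{-1}\Psi_{t,J}\preceq\Lambda_J^{-1}$. As a result, the operator norm of the residual kernel matrix never appears. You need no matrix concentration, no logarithm, and no use of $\fs>d$ in this lemma.

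\textbf{What each buys.} The paper's route is shorter. Its cited inequality is also more general: it holds for any $h\in\bH$, and the factor $\|R_{t,J}\|_{\mathrm{op}}$ can be small when $J\gg J_t^\dag$. At $J\asymp J_t^\dag$, however, both give $t^{d/(d+2\fs)}$. Your explicit formula has a further advantage: it holds conditionally on $X_{1:t}$. Taking $a=0$ and using $\|f^\star\|_\infty<\infty$ gives $\E[\KL(Q_t^{(J)}\|\Pi_t)\mid X_{1:t}]\lesssim t$ directly. That is the bound the paper obtains in \cref{lem:gp-KL-clude} by inspecting Nieman's proof.

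The process-level identity you import from \citet{matthews2016sparse} is legitimate here. Since $u_{1:J}$ is a measurable function of $f$, $\d Q/\d\Pi_0$ depends on $f$ only through $u_{1:J}$. This reduces the process KL to the finite-dimensional ELBO gap you wrote down.
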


\begin{proof}
We apply Lemma 3 of \citet{nieman2022contraction}: for every $h\in \bH$,
    \begin{equation}\label{eq:gp-nieman}
    \E[\KL(Q_t^{(J)}\|\Pi_t)]
    \le \frac1{\sigma_\xi^2}
    \sbr{ t\norm{f^\star-h}^2 +\norm{h}_{\bH}^2\E[\opnorm{R_{t,J}}] +\E[\tr(R_{t,J})]}
\end{equation}
where $R_{t,J}:=\sum_{j>J}\lambda_j\psi_j(X_{1:t})\psi_j(X_{1:t})^\top$. By uniform boundedness of the eigenfunctions in \eqref{eq:gp-basis} and the eigenvalue decay in \cref{ass:gp-eigenvalues}, we have the trace bound
    \begin{align}
    \tr(R_{t,J})
    =  \sum_{i=1}^t\sum_{j>J}  \lambda_j\psi_j(X_i)^2 
    &\le  B_\psi^2t\sum_{j>J}\lambda_j
    \lesssim tJ^{-2\fs/d}.
    \label{eq:gp-res-trace}
    \end{align}
Moreover, by applying Lemma 5 of \citet{nieman2022contraction}, we have
    \begin{align*}
    \E[\opnorm{R_{t,J}}]
    \lesssim 1+tJ^{-1-2\fs/d}
    +t^{d/(2\fs)}J^{-2\fs/d}\log t.
    \end{align*}
At $J\ge J_t^\dag$ this is bounded by a constant because $\fs>d$. Now we take $h=\sP_{J_t^\dag}f^\star$, the orthogonal projection onto $\operatorname{span}(\psi_1,\ldots,\psi_J)$. Then \eqref{eq:gp-truth} in \cref{ass:gp-true} gives
    \begin{align*}
    \norm{f^\star-h}^2\le C(J_t^\dag)^{-2\fs/d},
    \qquad
    \norm{h}_{\bH}^2
    =\sum_{j\le J_t^\dag}\frac{(f_j^\star)^2}{\lambda_j}
    \lesssim J_t^\dag.
    \end{align*}
Substitution of these bounds into \eqref{eq:gp-nieman} proves \eqref{eq:gp-KL-rate}.
\end{proof}

\begin{lemma}[Crude KL approximation error of sparse GP]
\label[lemma]{lem:gp-KL-clude}
Under \cref{ass:gp-true,ass:gp-eigenvalues},
    \begin{align}
    \E[\KL(Q_t^{(J)}\|\Pi_t)|X_{1:t}]&\lesssim t
    \end{align}
for any given covariates $X_{1:t}:=(X_1,\dots, X_t)$.
\end{lemma}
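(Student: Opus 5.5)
The plan is to write the KL divergence in closed form, using the fact that for the Titsias variational family it equals the gap between the log marginal likelihood and the collapsed evidence lower bound. Then I bound each piece crudely, conditionally on $X_{1:t}$. The bound needs no smoothness beyond boundedness of $f^\star$ and summability of the prior eigenvalues.

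Fix $X_{1:t}$ and write $K_t:=\sK_0(X_{1:t},X_{1:t})$ and $\widetilde K_t:=\Psi_{t,J}\Lambda_J\Psi_{t,J}^\top$, so that $K_t=\widetilde K_t+R_{t,J}$ with $R_{t,J}=\sum_{j>J}\lambda_j\psi_j(X_{1:t})\psi_j(X_{1:t})^\top\succeq0$. By the standard identity of \citet{titsias2009variational}, the optimal member of $\mathcal Q_J$ satisfies
\begin{align*}
\KL(Q_t^{(J)}\|\Pi_t)
=\log\N(Y_{1:t};0,K_t+\sigma_\xi^2I_t)-\log\N(Y_{1:t};0,\widetilde K_t+\sigma_\xi^2I_t)+\frac{\tr(R_{t,J})}{2\sigma_\xi^2}.
\end{align*}
Expanding the Gaussian log densities splits this into three terms:
\begin{align*}
T_1&:=\tfrac12\log\det\{(\widetilde K_t+\sigma_\xi^2I_t)(K_t+\sigma_\xi^2I_t)^{-1}\},\\
T_2&:=\tfrac12 Y_{1:t}^\top\{(\widetilde K_t+\sigma_\xi^2I_t)^{-1}-(K_t+\sigma_\xi^2I_t)^{-1}\}Y_{1:t},\\
T_3&:=\tr(R_{t,J})/(2\sigma_\xi^2).
\end{align*}

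I bound the three terms as follows.
\begin{enumerate}[label=(\roman*)]
\item Since $K_t\succeq\widetilde K_t$, we have $T_1\le0$.
\item Since $0\preceq(\widetilde K_t+\sigma_\xi^2I_t)^{-1}-(K_t+\sigma_\xi^2I_t)^{-1}\preceq(\widetilde K_t+\sigma_\xi^2I_t)^{-1}\preceq\sigma_\xi^{-2}I_t$, we get $T_2\le\|Y_{1:t}\|^2/(2\sigma_\xi^2)$.
\item By \eqref{eq:gp-basis} and \cref{ass:gp-eigenvalues}, $\tr(R_{t,J})\le B_\psi^2t\sum_{j\ge1}\lambda_j\lesssim t$, exactly as in \eqref{eq:gp-res-trace}. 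Hence $T_3\lesssim t$.
\end{enumerate}
Taking the conditional expectation given $X_{1:t}$ then gives $\E[\|Y_{1:t}\|^2|X_{1:t}]=\sum_{i\le t}f^\star(X_i)^2+t\sigma_\xi^2\le t(\|f^\star\|_\infty^2+\sigma_\xi^2)$. Here $\|f^\star\|_\infty<\infty$ by \eqref{eq:f-star-bound}. Combining the three bounds yields $\E[\KL(Q_t^{(J)}\|\Pi_t)|X_{1:t}]\lesssim t$, uniformly in $J$ and in the covariates.

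The main obstacle is justifying the KL identity for the infinite-dimensional process laws, i.e.\ that the process-level KL reduces to the finite-dimensional collapsed-bound gap. I would cite \citet{matthews2016sparse}: since $Q_t^{(J)}$ and $\Pi_t$ share the conditional $\Pi_0(\d f\mid u,f(X_{1:t}))$, the KL reduces to the finite-dimensional one computed by \citet{titsias2009variational}. Once that identity is in place, the remaining steps are the elementary matrix inequalities above.
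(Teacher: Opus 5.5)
Your proof is correct, but it takes a more self-contained route than the paper.

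The paper never writes the KL down. It asserts that inspecting the proof of Lemma~3 of \citet{nieman2022contraction} gives a conditional-on-$X_{1:t}$ version of \eqref{eq:gp-nieman}, with $t\|f^\star-h\|^2$ replaced by $\sum_{i\le t}\{f^\star(X_i)-h(X_i)\}^2$. It then takes the trivial comparator $h=0$, leaving $\sum_{i\le t}f^\star(X_i)^2+\tr(R_{t,J})\lesssim t$.

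You instead compute the KL explicitly as the gap between the log marginal likelihood and Titsias's collapsed bound, and control $T_1,T_2,T_3$ by Loewner-order facts. Your chain-rule remark (both laws share $\Pi_0(\d f\mid u,f(X_{1:t}))$, with $u$ a measurable functional of $f$) also supplies the process-level step that the paper takes for granted.

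The real difference is in $T_2$. The form of \eqref{eq:gp-nieman} indicates that the external bound separates the signal part of the quadratic form, handled by the comparator $h$, from the noise part. The noise part contributes $\frac{\sigma_\xi^2}{2}\tr\{(\widetilde K_t+\sigma_\xi^2I_t)^{-1}-(K_t+\sigma_\xi^2I_t)^{-1}\}\le\tr(R_{t,J})/(2\sigma_\xi^2)$. You instead bound the whole difference of inverses by $\sigma_\xi^{-2}I_t$ and pay $t/2$ for the noise, which is harmless for an $O(t)$ claim.

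Your version can be checked without reopening an external proof. It also makes plain that the bound is uniform in $J$ and in the covariates, and uses only $\|f^\star\|_\infty<\infty$ and $\sum_j\lambda_j<\infty$. The paper's version is a one-liner that derives this lemma and \cref{lem:gp-KL} from a single master inequality. Your crude bound on $T_2$ could not be sharpened to the $t^{d/(d+2\fs)}$ rate of \cref{lem:gp-KL} without reinstating that signal/noise split and a nontrivial comparator.
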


\begin{proof}
By inspecting the proof of Lemma 3 of \citet{nieman2022contraction}, we have the conditional expectation version:
    \begin{equation}
    \E[\KL(Q_t^{(J)}\|\Pi_t)|X_{1:t}]
    \lesssim \sum_{i=1}^t(f^\star(X_i)-h(X_i))^2 +\norm{h}_{\bH}^2\opnorm{R_{t,J}} +\tr(R_{t,J})
\end{equation}
where $R_{t,J}:=\sum_{j>J}\lambda_j\psi_j(X_{1:t})\psi_j(X_{1:t})^\top$. Take a constant function $h=0$. Then the desired result follows from combining two observations:  $\|f^\star\|_\infty$ is bounded under our assumption as we have shown in \eqref{eq:f-star-bound}, and the trace term is bounded as $ \tr(R_{t,J})\lesssim t$  as we have shown in \eqref{eq:gp-res-trace}.
\end{proof}

\subsection{Proof of \cref{thm:gp-fast}}

\begin{lemma}[Uniformly bounded predictive variance]
\label[lemma]{lem:gp-pointvar}
Assume \eqref{eq:gp-basis}. Then  for every $t\ge0$ and every $J\ge1$,
    \begin{equation}\label{eq:gp-pointvar}
    \sup_{x\in\mathcal X}v_{\Pi_t}(x)
    +\sup_{x\in\mathcal X}v_{Q_t^{(J)}}(x)
    \lesssim 1.
    \end{equation}
\end{lemma}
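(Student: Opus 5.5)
The plan is to bound each pointwise posterior variance by the corresponding prior variance. Then \eqref{eq:gp-basis} together with summability of $(\lambda_j)$ finishes the argument. Both $\Pi_t$ and $Q_t^{(J)}$ are Gaussian processes, and their covariance kernels are obtained from $\sK_0$ by subtracting a positive semidefinite kernel.

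\emph{Exact posterior.} Gaussian conditioning gives
\begin{align*}
v_{\Pi_t}(x)=\sK_0(x,x)-\sK_0(x,X_{1:t})\{\sigma_\xi^2I_t+\sK_0(X_{1:t},X_{1:t})\}^{-1}\sK_0(X_{1:t},x).
\end{align*}
Since $\sigma_\xi^2I_t+\sK_0(X_{1:t},X_{1:t})$ is positive definite, the subtracted quadratic form is nonnegative. Hence $0\le v_{\Pi_t}(x)\le\sK_0(x,x)$. Equivalently, conditioning reduces variance, which is the same fact $0\preceq\Sigma_t\preceq\Sigma_0$ used in the proof of \cref{prop:gp-covariance}, now applied pointwise.

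\emph{Sparse variational posterior.} From \eqref{eq:gp-q-covariance},
\begin{align*}
v_{Q_t^{(J)}}(x)=\sK_0(x,x)-\psi_{1:J}(x)^\top(\Lambda_J-V_{t,J})\psi_{1:J}(x).
\end{align*}
We have $V_{t,J}=(\Lambda_J^{-1}+\sigma_\xi^{-2}\Psi_{t,J}^\top\Psi_{t,J})^{-1}\preceq\Lambda_J$, because matrix inversion is operator-monotone decreasing and $\Lambda_J^{-1}+\sigma_\xi^{-2}\Psi_{t,J}^\top\Psi_{t,J}\succeq\Lambda_J^{-1}$. Therefore $\Lambda_J-V_{t,J}\succeq0$, and so $v_{Q_t^{(J)}}(x)\le\sK_0(x,x)$. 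Nonnegativity is automatic, since $v_{Q_t^{(J)}}$ is a variance. Explicitly, the variance equals $\sum_{j>J}\lambda_j\psi_j(x)^2+\psi_{1:J}(x)^\top V_{t,J}\psi_{1:J}(x)\ge0$.

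\emph{Uniform bound on the prior variance.} Finally, \eqref{eq:gp-basis} and \cref{ass:gp-eigenvalues} give
\begin{align*}
\sK_0(x,x)=\sum_{j\ge1}\lambda_j\psi_j(x)^2\le B_\psi^2\sum_{j\ge1}\lambda_j<\infty.
\end{align*}
The series converges because $\lambda_j\asymp j^{-1-2\fs/d}$ is summable. This bound is uniform in $x$, $t$, $J$ and the data, so summing the two bounds proves \eqref{eq:gp-pointvar}.

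There is no real obstacle. The only care point is the Löwner inequality $V_{t,J}\preceq\Lambda_J$, which follows from operator monotonicity of inversion, together with recognizing that pointwise variances are diagonal values of the kernel. The statement as written assumes only \eqref{eq:gp-basis}, but the bound does need summability of the eigenvalues, which \cref{ass:gp-eigenvalues} provides.
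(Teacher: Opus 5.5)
Your proposal is correct and follows the paper's proof essentially step for step: both pointwise variances are bounded by $\sK_0(x,x)$, using the Gaussian conditioning formula for $\Pi_t$ and $V_{t,J}\preceq\Lambda_J$ for $Q_t^{(J)}$, and then \eqref{eq:gp-basis} and summability of $(\lambda_j)$ give a uniform bound. Your remark that summability of the eigenvalues is genuinely needed is a fair sharpening of the paper, which cites only \cref{ass:gp-true} at that step; that summability comes from \cref{ass:gp-eigenvalues} or from the prior covariance being trace class.
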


\begin{proof}
The exact posterior covariance kernel is
    \begin{align*}
    \sK_t(x,x')= \sK_0(x,x')
    -\sK_0(x, X_{1:t})(\sigma_\xi^2 I + \sK_0(X_{1:t},X_{1:t}))^{-1} \sK_0( X_{1:t},x').
    \end{align*}
Consequently,
    \begin{align*}
    0\le v_{\Pi_t}(x)=\sK_t(x,x)\le \sK_0(x,x).
    \end{align*}
For the variational posterior, \eqref{eq:gp-q-covariance} gives
    \begin{align*}
    v_{Q_t^{(J)}}(x)
    =\sK_t^{(J)}(x,x)
    =\sK_0(x,x)-\psi_{1:J}(x)^\top\del{\Lambda_J-V_{t,J}}\psi_{1:J}(x)
    \end{align*}
Since
    \begin{align*}
    V_{t,J}^{-1}
    =  \Lambda_J^{-1} + \sigma_\xi^{-2}\Psi_{t,J}^\top\Psi_{t,J}
    \succeq  \Lambda_J^{-1},
    \end{align*}
we have $V_t\preceq\Lambda_J$. Hence,
    \begin{align*}
    0\le v_{Q_t^{(J)}}(x)\le \sK_0(x,x)
    \end{align*}
Finally, \cref{ass:gp-true} implies
    \begin{align*}
    \sup_{x\in\mathcal X}\sK_0(x,x)
    \le  B_\psi^2\sum_{j\ge1}\lambda_j  <\infty.
\end{align*}
The conclusion follows.
\end{proof}

\begin{proof}[Proof of \cref{thm:gp-fast}]
With $J\gtrsim J_t^\dag$, Substitution of the upper bounds in \cref{thm:gp-statistical,thm:sparse-gp-accuracy} and \cref{lem:gp-pointvar} into \cref{thm:gp-predictive-comparison} gives
    \begin{align*}
    \fR\del[1]{(Q_t^{(J)})_{t=0}^{T-1}}
    &\lesssim \sum_{t=1}^T\{ \varepsilon_t\alpha_t+\alpha_t^2+ \varepsilon_t^2\}\\
    &\lesssim  \sum_{t=1}^T t^{-2\fs/(d+2\fs)}
    \lesssim T^{d/(d+2\fs)},
    \end{align*}
which is the desired result.
\end{proof}

\bibliographystyle{plainnat}
\bibliography{references}

@inproceedings{vovk1990aggregating,
  title={Aggregating strategies},
  author={Vovk, Volodimir G},
  booktitle={Proceedings of the third annual workshop on Computational learning theory},
  pages={371--386},
  year={1990}
}

@inproceedings{alquier2021non,
  title={Non-exponentially weighted aggregation: regret bounds for unbounded loss functions},
  author={Alquier, Pierre},
  booktitle={International Conference on Machine Learning},
  pages={207--218},
  year={2021},
  organization={PMLR}
}

@article{vanerven2015fast,
  author  = {Tim van Erven and Peter D. Gr{{\"u}}nwald and Nishant A. Mehta and Mark D. Reid and Robert C. Williamson},
  title   = {Fast Rates in Statistical and Online Learning},
  journal = {Journal of Machine Learning Research},
  year    = {2015},
  volume  = {16},
  number  = {54},
  pages   = {1793--1861}
}

@article{dalalyan2017theoretical,
  title={Theoretical guarantees for approximate sampling from smooth and log-concave densities},
  author={Dalalyan, Arnak S},
  journal={Journal of the Royal Statistical Society Series B: Statistical Methodology},
  volume={79},
  number={3},
  pages={651--676},
  year={2017},
  publisher={Oxford University Press}
}

@article{durmus2017nonasymptotic,
  title = {Nonasymptotic convergence analysis for the unadjusted Langevin algorithm},
  author = {Alain Durmus and {\'E}ric Moulines},
  year = {2017},
  journal = {The Annals of Applied Probability},
  volume = {27},
  number = {3},
  pages = {1551--1587},
  doi = {10.1214/16-AAP1238}
}

@inproceedings{lee2021structured,
  title = {Structured logconcave sampling with a restricted Gaussian oracle},
  author = {Yin Tat Lee and Ruoqi Shen and Kevin Tian},
  year = {2021},
  booktitle = {Conference on Learning Theory},
  pages = {2993--3050}
}

@article{zhao2000bayesian,
  title = {Bayesian aspects of some nonparametric problems},
  author = {Linda H. Zhao},
  year = {2000},
  journal = {The Annals of Statistics},
  volume = {28},
  number = {2},
  pages = {532--552},
  doi = {10.1214/aos/1016218229}
}

@inproceedings{broderick2013streaming,
  author    = {Broderick, Tamara and Boyd, Nicholas and Wibisono, Andre and Wilson, Ashia C. and Jordan, Michael I.},
  title     = {Streaming Variational {Bayes}},
  booktitle = {Advances in Neural Information Processing Systems 26},
  pages     = {1727--1735},
  publisher = {Curran Associates, Inc.},
  year      = {2013}
}

@inproceedings{brosse2017compact,
  author    = {Brosse, Nicolas and Durmus, Alain and Moulines, {\'E}ric and Pereyra, Marcelo},
  title     = {Sampling from a Log-Concave Distribution with Compact Support with Proximal Langevin Monte Carlo},
  booktitle = {Proceedings of the 2017 Conference on Learning Theory},
  series    = {Proceedings of Machine Learning Research},
  volume    = {65},
  pages     = {319--342},
  publisher = {PMLR},
  year      = {2017}
}

@article{bubeck2018projected,
  author  = {Bubeck, S{\'e}bastien and Eldan, Ronen and Lehec, Joseph},
  title   = {Sampling from a Log-Concave Distribution with Projected Langevin Monte Carlo},
  journal = {Discrete \& Computational Geometry},
  volume  = {59},
  number  = {4},
  pages   = {757--783},
  year    = {2018},
  doi     = {10.1007/s00454-018-9992-1}
}

@article{chandrasekaran2013computational,
  title = {Computational and statistical tradeoffs via convex relaxation},
  author = {Chandrasekaran, Venkat and Jordan, Michael I.},
  year = {2013},
  journal = {Proceedings of the National Academy of Sciences},
  volume = {110},
  number = {13},
  pages = {E1181-E1190},
  doi = {10.1073/pnas.1302293110}
}

@inproceedings{rudi2015less,
  title = {Less is more: Nystr{\"o}m computational regularization},
  author = {Rudi, Alessandro and Camoriano, Raffaello and Rosasco, Lorenzo},
  year = {2015},
  booktitle = {Advances in Neural Information Processing Systems},
  volume = {28}
}

@article{raskutti2014early,
  title={Early stopping and non-parametric regression: an optimal data-dependent stopping rule},
  author={Raskutti, Garvesh and Wainwright, Martin J and Yu, Bin},
  journal={Journal of Machine Learning Research},
  volume={15},
  number={1},
  pages={335--366},
  year={2014},
  publisher={Journal of Machine Learning Research}
}

@book{cesabianchi2006prediction,
  author    = {Cesa-Bianchi, Nicol\`o and Lugosi, G\'abor},
  title     = {Prediction, Learning, and Games},
  publisher = {Cambridge University Press},
  year      = {2006},
  isbn      = {9780521841085}
}

@inproceedings{cherief2019onlinevi,
  author    = {Ch{\'e}rief-Abdellatif, Badr-Eddine and Alquier, Pierre and Khan, Mohammad Emtiyaz},
  title     = {A Generalization Bound for Online Variational Inference},
  booktitle = {Proceedings of The Eleventh Asian Conference on Machine Learning},
  series    = {Proceedings of Machine Learning Research},
  volume    = {101},
  pages     = {662--677},
  publisher = {PMLR},
  year      = {2019}
}

@article{clarke1990bayes,
  author  = {Clarke, Bertrand S. and Barron, Andrew R.},
  title   = {Information-Theoretic Asymptotics of {Bayes} Methods},
  journal = {IEEE Transactions on Information Theory},
  volume  = {36},
  number  = {3},
  pages   = {453--471},
  year    = {1990},
  doi     = {10.1109/18.54897}
}

@inproceedings{dai2016particle,
  author    = {Dai, Bo and He, Niao and Dai, Hanjun and Song, Le},
  title     = {Provable {Bayesian} Inference via Particle Mirror Descent},
  booktitle = {Proceedings of the 19th International Conference on Artificial Intelligence and Statistics},
  series    = {Proceedings of Machine Learning Research},
  volume    = {51},
  pages     = {985--994},
  publisher = {PMLR},
  year      = {2016}
}

@article{delmoral2006smc,
  author  = {Del Moral, Pierre and Doucet, Arnaud and Jasra, Ajay},
  title   = {Sequential {Monte Carlo} Samplers},
  journal = {Journal of the Royal Statistical Society Series B: Statistical Methodology},
  volume  = {68},
  number  = {3},
  pages   = {411--436},
  year    = {2006},
  doi     = {10.1111/j.1467-9868.2006.00553.x}
}

@article{diaconis1979conjugate,
  title = {Conjugate priors for exponential families},
  author = {Persi Diaconis and Donald Ylvisaker},
  year = {1979},
  journal = {The Annals of Statistics},
  volume = {7},
  number = {2},
  pages = {269--281}
}

@article{hazan2007logarithmic,
  author  = {Hazan, Elad and Agarwal, Amit and Kale, Satyen},
  title   = {Logarithmic Regret Algorithms for Online Convex Optimization},
  journal = {Machine Learning},
  volume  = {69},
  number  = {2--3},
  pages   = {169--192},
  year    = {2007},
  doi     = {10.1007/s10994-007-5016-8}
}

@article{jun2026adaptive,
  title = {Adaptive Bayesian online learning via expert aggregation},
  author = {Jun, Jungbin and Ohn, Ilsang},
  year = {2026},
  journal = {arXiv preprint},
  eprint = {2607.20239},
  archivePrefix = {arXiv}
}

@inproceedings{jones2024bong,
  author    = {Jones, Matt and Chang, Peter and Murphy, Kevin},
  title     = {{Bayesian} Online Natural Gradient ({BONG})},
  booktitle = {Advances in Neural Information Processing Systems 37},
  pages     = {131104--131153},
  year      = {2024},
  doi       = {10.52202/079017-4167}
}

@inproceedings{kakade2004online,
  author    = {Kakade, Sham M. and Ng, Andrew Y.},
  title     = {Online Bounds for {Bayesian} Algorithms},
  booktitle = {Advances in Neural Information Processing Systems 17},
  pages     = {641--648},
  year      = {2004}
}

@article{khan2023bayesian,
  title = {The Bayesian learning rule},
  author = {Mohammad Emtiyaz Khan and H{\aa}vard Rue},
  year = {2023},
  journal = {Journal of Machine Learning Research},
  volume = {24},
  number = {281},
  pages = {1--46}
}

@article{spantini2015optimal,
  title={Optimal low-rank approximations of Bayesian linear inverse problems},
  author={Spantini, Alessio and Solonen, Antti and Cui, Tiangang and Martin, James and Tenorio, Luis and Marzouk, Youssef},
  journal={SIAM Journal on Scientific Computing},
  volume={37},
  number={6},
  pages={A2451--A2487},
  year={2015},
  publisher={SIAM}
}

@inproceedings{lee2019online,
  author    = {Lee, Holden and Mangoubi, Oren and Vishnoi, Nisheeth K.},
  title     = {Online Sampling from Log-Concave Distributions},
  booktitle = {Advances in Neural Information Processing Systems 32},
  year      = {2019}
}

@book{tsybakov2009nonparametric,
  author    = {Tsybakov, Alexandre B.},
  title     = {Introduction to Nonparametric Estimation},
  series    = {Springer Series in Statistics},
  publisher = {Springer},
  address   = {New York},
  year      = {2009},
  doi       = {10.1007/b13794}
}

@inproceedings{vanderhoeven2018manyfaces,
  author    = {van der Hoeven, Dirk and van Erven, Tim and Kot{\l}owski, Wojciech},
  title     = {The Many Faces of Exponential Weights in Online Learning},
  booktitle = {Proceedings of the 31st Conference on Learning Theory},
  series    = {Proceedings of Machine Learning Research},
  volume    = {75},
  pages     = {2067--2092},
  publisher = {PMLR},
  year      = {2018}
}

@article{xie2000minimax,
  author  = {Xie, Qun and Barron, Andrew R.},
  title   = {Asymptotic minimax regret for data compression, gambling, and prediction},
  journal = {IEEE Transactions on Information Theory},
  volume  = {46},
  number  = {2},
  pages   = {431--445},
  year    = {2000},
  doi     = {10.1109/18.825803}
}

@article{yang2023particle,
  author  = {Yang, Yifan and Liu, Chang and Zhang, Zheng},
  title   = {Particle-based Online {Bayesian} Sampling},
  journal = {arXiv preprint},
  year    = {2023},
  eprint  = {2302.14796},
  archivePrefix = {arXiv},
  doi     = {10.48550/arXiv.2302.14796}
}

@article{alquier2016properties,
  title={On the properties of variational approximations of Gibbs posteriors},
  author={Alquier, Pierre and Ridgway, James and Chopin, Nicolas},
  journal={Journal of Machine Learning Research},
  volume={17},
  number={1},
  pages={8374--8414},
  year={2016},
  publisher={Journal of Machine Learning Research}
}

@article{zhang2020convergence,
  title={Convergence rates of variational posterior distributions},
  author={Zhang, Fengshuo and Gao, Chao},
  journal={The Annals of Statistics},
  volume={48},
  number={4},
  pages={2180 -- 2207},
  year={2020}
}

@article{alquier2020concentration,
  title={Concentration of tempered posteriors and of their variational approximations},
  author={Alquier, Pierre and Ridgway, James},
  journal={The Annals of Statistics},
  volume={48},
  number={3},
  pages={1475--1497},
  year={2020},
  publisher={Institute of Mathematical Statistics}
}

@article{ohn2024adaptive,
  title={Adaptive variational {Bayes}: {O}ptimality, computation and applications},
  author={Ohn, Ilsang and Lin, Lizhen},
  journal={The Annals of Statistics},
  volume={52},
  number={1},
  pages={335--363},
  year={2024},
  publisher={Institute of Mathematical Statistics}
}

@inproceedings{titsias2009variational,
  author    = {Titsias, Michalis K.},
  title     = {Variational Learning of Inducing Variables in Sparse Gaussian Processes},
  booktitle = {Proceedings of the Twelfth International Conference on Artificial Intelligence and Statistics},
  series    = {Proceedings of Machine Learning Research},
  volume    = {5},
  pages     = {567--574},
  year      = {2009}
}

@inproceedings{matthews2016sparse,
  author    = {Matthews, Alexander G. de G. and Hensman, James and Turner, Richard E. and Ghahramani, Zoubin},
  title     = {On Sparse Variational Methods and the Kullback--Leibler Divergence between Stochastic Processes},
  booktitle = {Proceedings of the 19th International Conference on Artificial Intelligence and Statistics},
  series    = {Proceedings of Machine Learning Research},
  volume    = {51},
  pages     = {231--239},
  year      = {2016}
}

@inproceedings{burt2019sparsegp,
  author    = {Burt, David and Rasmussen, Carl Edward and van der Wilk, Mark},
  title     = {Rates of Convergence for Sparse Variational Gaussian Process Regression},
  booktitle = {Proceedings of the 36th International Conference on Machine Learning},
  series    = {Proceedings of Machine Learning Research},
  volume    = {97},
  pages     = {862--871},
  year      = {2019}
}

@article{burt2020convergence,
  title = {Convergence of sparse variational inference in Gaussian processes regression},
  author = {David Burt and Carl Edward Rasmussen and Mark van der Wilk},
  year = {2020},
  journal = {Journal of Machine Learning Research},
  volume = {21},
  number = {131},
  pages = {1--63}
}

@article{nieman2022contraction,
  title = {Contraction rates for sparse variational approximations in Gaussian process regression},
  author = {Dennis Nieman and Botond Szabo and Harry van Zanten},
  year = {2022},
  journal = {Journal of Machine Learning Research},
  volume = {23},
  number = {205},
  pages = {1--26}
}

@article{nieman2023uncertainty,
  title = {Uncertainty quantification for sparse spectral variational approximations in Gaussian process regression},
  author = {Dennis Nieman and Botond Szabo and Harry van Zanten},
  year = {2023},
  journal = {Electronic Journal of Statistics},
  volume = {17},
  number = {2},
  pages = {2250--2288},
  doi = {10.1214/23-EJS2155}
}

@article{nieman2025adaptive,
  title={Adaptive sparse variational approximations for Gaussian process regression},
  author={Nieman, Dennis and Szab{\'o}, Botond},
  journal={Bayesian Analysis},
  year={2025},
  publisher={International Society for Bayesian Analysis}
}

@article{hensman2018variational,
  title = {Variational fourier features for Gaussian processes},
  author = {James Hensman and Nicolas Durrande and Arno Solin},
  year = {2018},
  journal = {Journal of Machine Learning Research},
  volume = {18},
  number = {151},
  pages = {1--52}
}

@inproceedings{bui2017streaming,
  author    = {Bui, Thang D. and Nguyen, Cuong and Turner, Richard E.},
  title     = {Streaming Sparse Gaussian Process Approximations},
  booktitle = {Advances in Neural Information Processing Systems 30},
  pages     = {3299--3307},
  year      = {2017}
}

@article{fischer2020sobolev,
  title = {Sobolev norm learning rates for regularized least-squares algorithms},
  author = {Simon Fischer and Ingo Steinwart},
  year = {2020},
  journal = {Journal of Machine Learning Research},
  volume = {21},
  number = {205},
  pages = {1--38}
}

@article{talagrand1996transportation,
  author  = {Talagrand, Michel},
  title   = {Transportation Cost for Gaussian and Other Product Measures},
  journal = {Geometric and Functional Analysis},
  volume  = {6},
  number  = {3},
  pages   = {587--600},
  year    = {1996}
}

@article{lee2026online,
  title = {Online Bernstein-von Mises theorem},
  author = {Jeyong Lee and Junhyeok Choi and Minwoo Chae},
  year = {2026},
  journal = {Journal of Machine Learning Research},
  volume = {27},
  number = {49},
  pages = {1--124}
}

@article{lee2026bayesian,
  title={Bayesian online learning in the one-pass regime: Frequentist validity and uncertainty quantification},
  author={Lee, Jeyong and Choi, Junhyeok and Kim, Dongguen and Chae, Minwoo},
  journal={arXiv preprint},
  year={2026},
  eprint={2604.27442},
  archivePrefix={arXiv}
}

\end{document}